\newtheorem{remark}[thm]{Remark}
\newtheorem{fact}[thm]{Fact}
\newcommand*{\citet}[1]{\textcite{#1}}
\newcommand*{\citetall}[1]{\AtNextCite{\AtEachCitekey{\defcounter{maxnames}{999}}} \textcite{#1}}
\newif\iffull
\providecommand{\Zf}{{\mathcal Z}_f}
\newcommand{\sm}{\setminus}
\newcommand{\dc}{{\bar{\kappa}_2}}
\newcommand{\dcii}{{\bar{\kappa}_1}}
\newcommand{\dci}{{\kappa_1}}
\newcommand{\dcv}{{\kappa_v}}
\newcommand{\dcvi}{{\bar{\kappa}_v}}
\newcommand{\dcsp}{{\bar{\kappa}_2^2}}
\newcommand{\SD}{\mathrm{SD}}
\newcommand{\SQDIM}{\mathrm{SQDIM}}
\newcommand{\VCDIM}{\mathrm{VCdim}}
\newcommand{\RSD}{\mathrm{RSD}}
\newcommand{\RSDV}{\mathrm{RSD}}
\newcommand{\QC}{\mathrm{QC}}
\newcommand{\RQC}{\mathrm{RQC}}
\newcommand{\CRSD}{\mathrm{cRSD}}
\newcommand{\KL}{{\mathrm{KL}}}
\newcommand{\KLR}{R_{\mathrm{KL}}}
\newcommand{\STAT}{\mbox{STAT}}
\newcommand{\VSTAT}{\mbox{VSTAT}}
\newcommand{\vSTAT}{\mbox{vSTAT}}
\newcommand{\COMM}{\mbox{1-STAT}}
\newcommand{\conv}{\mbox{conv}}
\newcommand{\cO}{{\mathcal O}}
\newcommand{\cQ}{{\mathcal Q}}
\newcommand{\frc}{\mbox{\tt{-frac}}}
\newcommand{\icvr}{\dci\mbox{\tt{-cvr}}}
\newcommand{\ircvr}{\dci\mbox{\tt{-Rcvr}}}
\newcommand{\vrcvr}{\dcv\mbox{\tt{-Rcvr}}}
\newcommand{\dif}{\left| D[\phi] - D_0[\phi]  \right|}
\newcommand{\difp}[1]{\left| D[#1] - D_0[#1]  \right|}
\newcommand{\diff}[3]{\left| {#2}[#1] - {#3}[#1]  \right|}
\newcommand{\sdif}{\left|\sqrt{D[\phi]} - \sqrt{D_0[\phi]} \right|}
\newcommand{\Line}{\mathsf{Line}}
\newcommand{\err}{\mathtt{err}}
\newcommand{\Learn}{\mathcal{L}}
\providecommand{\GF}{{\mathbb Z}}
\title{A General Characterization of the Statistical Query Complexity}
\author{Vitaly Feldman \\
IBM Research - Almaden
}
\begin{document}

\date{}

\maketitle

\begin{abstract}
Statistical query (SQ) algorithms are algorithms that have access to an {\em SQ oracle} for the input distribution $D$ instead of i.i.d.~ samples from $D$. Given a query function $\phi:X \rar [-1,1]$, the oracle returns an estimate of $\E_{x\sim D}[\phi(x)]$ within some tolerance $\tau_\phi$ that roughly corresponds to the number of samples.

In this work we demonstrate that the complexity of solving an arbitrary statistical problem using SQ algorithms can be captured by a relatively simple notion of statistical dimension that we introduce. SQ algorithms capture a broad spectrum of algorithmic approaches used in theory and practice, most notably, convex optimization techniques. Hence our statistical dimension allows to investigate the power of a variety of algorithmic approaches by analyzing a single linear-algebraic parameter.
Such characterizations were investigated over the past 20 years in learning theory but prior characterizations are restricted to the much simpler setting of classification problems relative to a fixed distribution on the domain \cite{BlumFJ+:94,BshoutyFeldman:02,Yang:01,Yang:05,BalcazarCGKL:07,Simon:07,Feldman:12jcss,Szorenyi:09}. 
Our characterization is also the first to precisely characterize the necessary tolerance of queries. We give applications of our techniques to two open problems in learning theory and to algorithms that are subject to memory and communication constraints.
\end{abstract}

\thispagestyle{empty}
\newpage
\iffull
\tableofcontents
\thispagestyle{empty}
\newpage
\fi
\setcounter{page}{1}

\section{Introduction}
The statistical query model relies on an oracle that given any bounded function on a single domain element provides an estimate of the expectation of the function on a random sample from the input distribution $D$. Namely, for a query function $\phi:X\rar[-1,1]$ and {\em tolerance $\tau$}, the $\STAT_D(\tau)$ oracle responds with a value $v$ such that $|v-\E_{x\sim D}[\phi(x)]| \leq \tau$.

This model was introduced by \citet{Kearns:98} as a restriction of the PAC learning model \cite{Valiant:84}. \citet{Kearns:98} demonstrated that any learning algorithm that is based on statistical queries can be automatically converted to a learning algorithm robust to random classification noise. In addition, he showed that a number of known PAC learning algorithms can be expressed as algorithms using statistical queries instead of random examples themselves. Subsequently, many of algorithmic approaches used in machine learning theory and practice have been shown to be implementable using SQs (\eg \cite{BlumFKV:97,DunaganVempala:04,BlumDMN:05,ChuKLYBNO:06,FeldmanPV:13,BalcanF15}; see \cite{Feldman16:easq} for a brief overview) including most standard approaches to convex optimization \cite{FeldmanGV:15}. Indeed, solving linear equations over a finite field is the only known problem for which a superpolynomial separation between SQ complexity and the usual computational complexity is known \cite{Kearns:98} (or ever conjectured). Given random equations, this problem can be solved efficiently using Gaussian elimination (over a finite field), a technique that is too brittle for solving more realistic statistical problems\footnote{Here and below, by {\em statistical problem} we informally refer to any problem for which in the standard setting the input consists of i.i.d.~samples from some unknown input distribution $D$ (possibly from a restricted class of distributions $\D$) and the success criterion is defined relative to $D$ (and not the specific samples that were observed). A formal definition will be given later.}.

A special case of a statistical query is a {\em linear} (also referred to as {\em counting}) query on a dataset $S\in X^n$ which is defined in the same way as a statistical query relative to the uniform distribution on the elements of $S$. The problem of answering linear queries while preserving privacy of the individuals in the dataset played a fundamental role in the development of the notion of differential privacy \cite{DinurN03,BlumDMN:05,DworkMNS:06}. It remains a subject of intense theoretical and practical research in differential privacy since then (see \cite{DworkRoth:14} for a literature review and \cite{BlumDMN:05,GuptaHRU:11,FeldmanGV:15} for examples of application of SQ algorithms in this context). Further, access to an SQ oracle is known to be equivalent (up to polynomial factors) to local differential privacy model \cite{KasiviswanathanLNRS11} that has received much recent attention in industry \cite{ErlingssonPK14,Apple:16}. In the opposite direction: differentially private algorithms for answering linear queries were recently shown to imply algorithms for the challenging problem of answering adaptively chosen statistical queries \cite{DworkFHPRR14:arxiv,DworkFHPRR15:arxiv,BassilyNSSSU15}.

Other notable applications of SQ learning algorithms include derivation of theoretical and practical learning algorithm for distributed data systems \cite{ChuKLYBNO:06,RoySKSW10,Sujeeth:11,BalcanBFM12,SteinhardtVW16}. In this context it is known that access to an SQ oracle is equivalent (up to polynomial factors) to being able to extract only a limited number of bits from each data sample \cite{Ben-DavidD98,FeldmanGRVX:12,SteinhardtVW16}. This model is motivated by communication constraints in distributed systems and has been studied in several recent works \cite{ZhangDJW13,SteinhardtD15,SteinhardtVW16}.

\remove{
Since the introduction the model has been shown to be equivalent or closely-related to many other models and concepts (some discovered independently): linear statistical functionals \cite{Wasserman:2010}, learning with a distance oracle \cite{BenDavidIK:90}, approximate counting (or linear) queries studied extensively in differential privacy (\eg \cite{DinurN03,BlumDMN:05,DworkMNS:06,RothR10}), local differential privacy \cite{KasiviswanathanLNRS11}, evolvability \cite{Valiant:09,Feldman:08ev}, and algorithms that extract a small amount of information from each sample \cite{FeldmanGRVX:12,FeldmanPV:13,ZhangDJW13,GargMN14,SteinhardtD15,SteinhardtVW16}.  and practical applications (\eg \cite{ChuKLYBNO:06,RoySKSW10,Sujeeth:11,DworkFHPRR15:arxiv}).
}

A remarkable property of SQ algorithms is that it is possible (and in some cases relatively easy) to prove strong information-theoretic lower bounds on the complexity of any SQ algorithm that solves a given statistical problem. Given the considerable breadth and variety of approaches to statistical problems with provable guarantees that are known to be implementable using statistical queries (and only one known exception), this provides strong and unconditional evidence of the problem's hardness. In fact, for a number of central problems in learning theory and complexity unconditional lower bounds for SQ algorithms are known that closely match the known {\em computational} complexity upper bounds for those problems (\eg \cite{BlumFJ+:94, FeldmanGRVX:12,FeldmanPV:13,BreslerGS14a,DachmanFTWW:15,DiakonikolasKS:16}). SQ lower bounds are also known to directly imply strong structural lower bounds. For example, lower bounds against general convex relaxations of Boolean constraint satisfaction problems \cite{FeldmanPV:13,FeldmanGV:15}, lower bounds on approximation of Boolean functions by polynomials \cite{DachmanFTWW:15} and lower bounds on dimension complexity of Boolean function classes (which is closely related to sign-rank of matrices) \cite{Sherstov:08,FeldmanGV:15} are implied by SQ lower bounds.


\subsection{Prior work}
The SQ complexity of PAC learning was first investigated in a seminal work of \citetall{BlumFJ+:94}. They proved that the SQ complexity of weak PAC learning (that is, classification with a non-negligible advantage over the random guessing) of a function class $\C$ over a domain $X'$, relative to a fixed distribution $P$ on $X'$ is characterized (up to polynomials) by a simple linear-algebraic parameter called the {\em statistical query dimension} $\SQDIM(\C,P)$. Roughly, $\SQDIM(\C,D)$ measures the maximum number of ``nearly uncorrelated" (relative to $P$) functions in $\C$. Their characterization has been strengthened and simplified in several subsequent works \cite{Yang:01,BshoutyFeldman:02,BlumKW:03,Yang:05} and applied to a variety of problems in learning theory (\eg  \cite{BlumFJ+:94, KlivansSherstov:07a}).  Moreover the dimension itself was found to be tightly related to other notions of complexity of function classes and matrices such as margin complexity, sign-rank, approximate rank and discrepancy in communication complexity \cite{Simon:06,Sherstov:08, KlivansSherstov:10,KallweitSimon:11}.

Two obvious limitations of $\SQDIM$ are that it only characterized weak and fixed-distribution (also referred to as {\em distribution-specific}) SQ learning. The first limitation was addressed in \cite{BalcazarCGKL:07,Simon:07} who derived relatively involved characterizations of (strong) PAC learning. Subsequently, \citet{Feldman:12jcss} and \citet{Szorenyi:09} have found (different) relatively simple characterizations. The characterization in \cite{Feldman:12jcss} was also extended to a more general agnostic learning model \cite{KearnsSS:94} and has lead to better understanding of complexity of several learning problems \cite{FeldmanLS:11colt,GuptaHRU:11,DachmanFTWW:15}.

The second limitation is the fixing of the distribution $P$. It is much more challenging and as a result the SQ complexity of PAC learning is still poorly understood. A long-standing and natural open problem was to find a characterization of general (or distribution-independent) PAC learning (mentioned, for example, in \cite{KallweitSimon:11}). Associated with this problem is the question of whether the SQ complexity of learning $\C$ distribution-independently is equal to the maximum over all distributions $P$ of $\SQDIM(\C,P)$ \cite{KallweitSimon:11}. This is a natural conjecture since it holds for sample complexity of learning (there exists a distribution $P$ such that PAC learning relative to $P$ requires $\Omega(\VCDIM(\C))$ samples). It also holds for the hybrid SQ model in which the learner can get samples from $P$ (without the value of the target function) in addition to SQs \cite{FeldmanKanade:12}.

In a more recent work, \citetall{FeldmanGRVX:12} started a study of SQ algorithms outside of learning theory. They generalized the oracle of \citet{Kearns:98} (in a straightforward way) to any problem where the input is assumed to be random i.i.d.~samples from some unknown distribution.  They then described a notion of statistical dimension that generalized $\SQDIM$ to arbitrary statistical problems and showed that their dimension can be used to lower bound the SQ complexity of solving problems using SQ algorithms. Another important property of their dimension is that it treats the tolerance of queries separately from the query complexity. This was necessary to obtain a meaningful lower bound for the problem of recovering a planted bi-clique. In this problem the gap between the number of samples with which the problem becomes trivial and the number of samples for which the problem is believed to be computationally hard is just quadratic.

Further, to make the correspondence between the number of samples $n$ and the accuracy of queries precise,  \citet{FeldmanGRVX:12} introduced a strengthening of the SQ oracle that incorporates the variance of the random variable $\phi(x)$ into the estimate. More formally,  given as input any function $\phi : X \rightarrow [0,1]$, $\VSTAT_D(n)$  returns a value $v$ such that $|v -  p| \leq \max\left\{\frac{1}{n}, \sqrt{\frac{p(1-p)}{n}}\right\}$, where $p = \E_{x \sim D}[\phi(x)]$. Note that $\frac{p(1-p)}{n}$ is the variance of the empirical mean when $\phi$ is Boolean. More generally, the oracle can be used to estimate of the expectation $\E_{x \sim D}[\phi(x)]$ for any real-valued function $\phi$ within $\tilde O(\sigma/\sqrt{n})$, where $\sigma$ is the standard deviation of $\phi(x)$ \cite{Feldman:16sqvar}. The lower bounds in \cite{FeldmanGRVX:12} apply to this stronger oracle.  

While the dimension in \cite{FeldmanGRVX:12} allows proving lower bounds it does not capture the SQ complexity of a problem over distributions. Indeed, in a follow-up work \cite{FeldmanPV:13}, a stronger notion of dimension was necessary to get a tight lower bound for planted satisfiability problems. Their notion is based on so called discrimination norm and was also used to lower bound the SQ complexity of stochastic convex optimization \cite{FeldmanGV:15}. Still their dimension provides only a lower bound on the SQ complexity of problems.

\subsection{Overview of results}
We demonstrate that SQ complexity of an arbitrary statistical problem can be tightly captured using a linear-algebraic parameter that we refer to as (randomized) statistical dimension. In particular, we obtain nearly tight characterization for all many-vs-one decision problems, PAC learning and stochastic optimization. Unlike previously known characterizations, our characterization precisely captures the {\em estimation complexity}\footnote{The estimation complexity of SQ algorithm represents the number of samples necessary to give an answer to any single query of the oracle it uses. For algorithms with access to $\STAT_D(\tau)$ it is defined to be $1/\tau^2$ (since $O(1/\tau^2)$ samples suffice to get such an estimate with high probability); for algorithms with access to $\VSTAT_D(n)$ it is defined to be $n$.} of SQ algorithms with both $\STAT$ and $\VSTAT$ oracles. Previous approaches characterized only the maximum of query and estimation complexity.


The existence of such parameter for general statistical problems is rather surprising since SQ algorithms can query the oracle adaptively (that is, every query can depend arbitrarily on responses to previous queries) and many SQ algorithms require such adaptivity. Measuring query complexity in models that allow adaptive queries usually requires dealing with arbitrarily deep sequences of alternating $\exists$ and $\forall$ quantifiers that are rarely amenable to accurate analysis. Indeed, we do not know if there exists a parameter that captures the SQ complexity {\em precisely} while avoiding such quantification. Our results demonstrate that SQ complexity of an arbitrary statistical problem is approximated well by a much simpler notion.

For several types of statistical problems, existing characterizations of statistical query complexity have been used to reveal important structural properties that accurately correspond to the known bounds on {\em computational complexity} of these problems. For example, the number of approximately uncorrelated functions for distribution-specific PAC learning \cite{BlumFJ+:94}, approximate resilience for agnostic learning relative to a product distribution \cite{DachmanFTWW:15} and the degree of independence of a distribution over predicates for planted constraint satisfaction problems \cite{FeldmanPV:13}. 
Our new characterization suggests that such structural properties are likely to exists for many other types of statistical problems.  Finding these properties for computationally hard statistical problems is an interesting avenue for further research that might shed light on the complexity of many important theoretical and practical problems. Towards this goal, a considerable part of this work is devoted to deriving simplifications of our characterization for more specific types of problems (such as optimization and learning) and to variants of the dimension that might be easier to analyze when less precise characterization is sufficient. We also relate our notion of statistical dimension to known techniques for proving lower bounds on SQ complexity.

Our characterization also implies the existence of a SQ algorithm with specific universal structure for every problem that can be solved using SQs (albeit not a computationally efficient one). The existence of such algorithms can be used to derive new properties of SQ algorithms. One example of such application is algorithms for the memory-limited streaming that we describe in Appendix \ref{sec:memory}. Another application is a reduction from $k$-wise queries to regular queries that appears in a subsequent work \textcite{FeldmanGhazi:17}. In both cases our universal algorithm gives an exponential improvement over prior results for these problems.



\paragraph{Decision problems:} We start with the relatively simple case of many-vs-one decision problems (Section \ref{sec:stat-decision}). These are problems specified by a set of distributions $\D$ over a domain $X$ and a {\em reference} distribution $D_0$ over $X$. Given access to an input distribution $D \in \D \cup \{D_0\}$ the goal is to decide whether $D\in \D$ or $D=D_0$ (in the standard setting the access is to i.i.d.~samples from $D$ whereas in our case the access will be via a SQ oracle). We denote this problem by $\B(\D,D_0)$.
\iffull
For example, we can take $\D$ to be the set of all distributions over $k$-SAT clauses whose support can be satisfied by some assignment and $D_0$ to be the uniform distribution over $k$-clauses. Then  $\B(\D,D_0)$ is exactly the stochastic version of the $k$-SAT refutation problem.
\fi

An important property of decision problems is that their deterministic SQ complexity has a simple and sharp characterization in terms of the number of functions that can distinguish between $D_0$ and any distribution in $\D$. Specifically, let $d$ be the smallest integer $d$ such that there exist $d$ functions $\phi_1,\ldots,\phi_d:X\rar \pmr$, such that for every $D \in \D$ there exists $i\in[d]$ satisfying $\left| D[\phi_i]-D_0[\phi_i]\right| > \tau$ (where $D[\phi_i] \doteq \E_{x\sim D}[\phi_i(x)]$). We refer to such set of functions as a {\em $\tau$-cover} of $\D$ relative to $D_0$. It is not hard to see that a decision problem $\B(\D,D_0)$ can be solved using $d$ queries to $\STAT_D(\tau)$ if and only if it has a $\tau$-cover of size $d$ (here and below ignoring multiplicative constants).

Unfortunately, proving lower bounds directly on the size of a $\tau$-cover is relatively hard due to a quantifier over $d$ functions. The first of the key ideas in our characterization is to consider a relaxation referred to as
a randomized $\tau$-cover. A randomized $\tau$-cover of size $d$ is a distribution $\cP$ over functions from $X$ to $[-1,1]$ with the property that for every $D \in \D$,
$$\pr_{\phi \sim \cP}\lb \dif > \tau \rb \geq \fr{d}.$$
It is a relaxation of the (deterministic) $\tau$-cover that is equivalent to a classical notion of fractional cover.

We show that the size of the smallest randomized cover exactly characterizes the complexity of solving $\B(\D,D_0)$ by a randomized SQ algorithm. While the size of the smallest randomized $\tau$-cover appears even harder to analyze than the size of a $\tau$-cover, we simplify it using the dual notion. Formally, for a measure $\mu$ over the set $\D$ we define the maximum $\tau$-covered $\mu$-fraction as \equn{
\dci\frc(\mu,D_0,\tau) \doteq \max_{\phi:X\rar[-1,1]} \left\{\pr_{D\sim \mu}[ \dif > \tau] \right\}.
} and the corresponding {\em randomized statistical dimension} with $\dci$-discrimination as
   $$\RSD_\dci(\B(\D,D_0),\tau) \doteq \sup_{\mu \in S^\D} (\dci\frc(\mu,D_0,\tau))^{-1}.$$
The duality between the randomized $\tau$-covers and $\RSD_\dci(\B(\D,D_0),\tau)$ (given in Lem.~\ref{lem:stat-SD-is-rcover}) implies that we have obtained a characterization of the SQ complexity of decision problems without any significant overheads (and without having to explicitly deal with covers). Formally, we denote the smallest number of queries required to solve a problem $\Z$ using oracle $\cO$ with success probability $\beta$ by $\RQC(\Z,\cO,\beta)$. Our characterization states that (Thm.~\ref{thm:random-algorithm2queries}):
 $$\RQC(\B(\D,D_0),\STAT(\tau),1-\delta) \geq \RSD_\dci(\B(\D,D_0),\tau) \cdot (1-2\delta)\mbox{ and}$$
$$\RQC(\B(\D,D_0),\STAT(\tau/2),1-\delta) \leq \RSD_\dci(\B(\D,D_0),\tau) \cdot \ln(1/\delta).$$

The upper bound can be made deterministic by setting $\delta < 1/|\D|$. This implies that $\RSD_\dci(\B(\D,D_0),\tau)$ characterizes the deterministic SQ complexity of solving $\B(\D,D_0)$ with access to $\STAT(\tau)$ up to a $\ln(|\D|)$ factor in query complexity. For some problems, such as the decision versions of the planted bi-clique and planted satisfiability problems studied in \cite{FeldmanGRVX:12,FeldmanPV:13} this characterization is reasonably tight. At the same time for some of the most common and interesting problems, $\ln(|\D|)$ is too large. For example in (distribution-independent) PAC learning we need to deal with $\D$ which includes all distributions\footnote{The set of all distribution is, of course, infinite but can be replaced with a suitable $\eps$-net. The net will have size $\eps^{-|X|}$ for some small $\eps$. If $X$ itself is infinite one also first needs to define an $\eps$-net on $X$.} over some large domain $X$. In this case $\ln(|\D|)=\Omega(|X|)$ making the characterization meaningless.

\paragraph{General (search) problems:}
To extend our statistical dimension to more general statistical problems we start by defining them formally. We define a search problem over distributions by a set of input distributions $\D$,  a set of solutions $\F$ and a function $\Z:\D \rightarrow 2^{\F}$. For $D \in \D$, $\Z(D) \subseteq \F$ is the (non-empty) set of valid solutions for $D$. The goal of an algorithm is to find a valid solution $f \in \Z(D)$ given access to an (unknown) input distribution $D\in \D$. For a solution $f \in \F$, we let $\Zf \doteq \{D\in\D \cond f \in \Z(D)\}$ be the set of distributions in $\D$ for which $f$ is a valid solution. Note that this general formulation captures most formal models used in machine learning and statistics for problems over datasets consisting of i.i.d.~samples (see Appendix \ref{sec:problem-examples} for some specific examples).

We characterize the statistical dimension of such search problems using the statistical dimension of the hardest many-to-one decision problem implicit in the search problem. This is a common approach for proving lower bounds in general and was also used in previous lower bounds for SQ algorithms (\eg \cite{Feldman:12jcss,FeldmanGRVX:12}). We show that, remarkably, the converse also holds for SQ algorithms: the hardest many-to-one decision problem is essentially as hard as the search problem. The key idea is
that an algorithm for solving a problem $\Z$ should, for every $D_0$ and $D\in \D$, either output a valid solution for $D$ given access to $D_0$ instead (of $D$) or generate a query that distinguishes between $D_0$ and $D$. If the former condition is true, then we can solve the problem using $D_0$. Otherwise, we can use the distinguishing query to make progress toward reconstructing the input distribution $D$. The reconstruction is done using the classic Multiplicative Weights algorithm which allows to reconstruct the input distribution by solving at most $O(\KLR(\D)/\tau^2)$ decision problems, where $\KLR(\D)$ measures the ``radius" of $\D$ in terms of KL-divergence. This radius is at most $\ln(|X|)$ but is much smaller for many problems. Our approach is inspired by the use of a simpler (distribution-specific) reconstruction algorithm in \cite{Feldman:12jcss} and the use of Multiplicative Weights algorithm to answer statistical and counting queries \cite{HardtR10,DworkFHPRR14:arxiv} (although there is no direct connection between that problem and ours).

This technique is sufficient to get a characterization of the deterministic SQ complexity of search problems. All one needs is to define
$$\SD_\dci(\Z,\tau) \doteq \sup_{D_0 \in S^X} \inf_{f\in \F} \RSD_\dci(\B(\D\setminus \Zf,D_0),\tau),$$
where $S^X$ denotes the set of all distributions over $X$. In Theorems~\ref{thm:stat-search-lower} and \ref{thm:stat-search-upper} we prove that $\SD_\dci(\Z,\tau)$ characterizes the query complexity of solving $\Z$ with access to $\STAT_D(\tau)$ up to a factor of $O(\log |\D| \cdot \KLR(\D)/\tau^2)$.

To avoid the problematic $\log(|\D|)$ factor in the upper bound and to ensure that the lower bound holds against randomized algorithms, we need to deal with the substantially more delicate randomized case.  We show that it is possible to give a nearly tight characterization by considering ``fractional" solutions. More formally, for a probability measure $\cP$ over $\F$ and $\alpha > 0$, we define the set of distributions for which $\cP$ provides a solution with probability at least $\alpha$ by $\Z_\cP(\alpha) \doteq \left\{D \in \D \cond \cP(\Z(D)) \geq \alpha \right\}$. We then define the randomized statistical dimension for success probability $\alpha$ as the complexity of the hardest decision problem, where we first eliminate all the input distributions for which there exists a randomized algorithm with success probability $\geq \alpha$ that does not look at the input distribution:
  $$\RSD_\dci(\Z,\tau,\alpha) \doteq \sup_{D_0 \in S^X}\inf_{\cP \in S^\F} \RSD_\dci(\B(\D\setminus \Z_\cP(\alpha),D_0),\tau) .$$
When this dimension is equal to $d$ we prove that for every $1\geq  \beta>\alpha>0$, $\delta>0$ :
$$\RQC(\Z,\STAT(\tau),\beta) \geq d \cdot (\beta - \alpha) \mbox { and }$$
 $$\RQC(\Z,\STAT(\tau/3),\alpha-\delta) = \tilde O\lp  d \cdot \frac{\KLR(\D)}{\tau^2} \cdot\log(1/\delta)\rp .$$

We remark that a different approach for dealing with solutions in the randomized case is used in the lower bound technique of \citet{FeldmanGRVX:12}. Their approach does not appear to suffice for an upper bound.

While this dimension is somewhat cumbersome, it can be substantially simplified for problems where one can verify the solution using a statistical query (such as in PAC learning or planted constraint satisfaction problems) or estimate the value of the solution in an optimization setting. In this case, the term $\Z_\cP(\alpha)$ can be removed by maximizing only over reference distributions that cannot pass the verification step (see Sec.~\ref{sec:search-special} for more details). We define our statistical dimension for PAC learning on the basis of this simplification.

\paragraph{$\VSTAT$:}
Dealing directly with the accuracy guarantees of $\VSTAT_D(n)$ in the type of results that we give for $\STAT_D(\tau)$ would be rather painful both due to a more involved expression for accuracy and the fact that the expression is asymmetric: a query function that distinguishes $D_0$ from $D$ might not distinguish $D$ from $D_0$ since the tolerance depends on the expectation with respect to the input distribution.
We show that the analysis of $\VSTAT$ can be greatly simplified by introducing a symmetric oracle that we show to be equivalent (up to a factor of 3) to $\VSTAT$ (Lem.~\ref{lem:vstat-reduction}).  The statistical query oracle $\vSTAT_D(\tau)$ is an oracle that given a function $\phi : X \rightarrow [0,1]$ returns a value $v$ such that $\left|\sqrt{v} - \sqrt{D[\phi]}\right| \leq \tau$.
Now, by defining the maximum $\tau$-covered $\mu$-fraction as \equn{
\dcv\frc(\mu,D_0,\tau) \doteq \max_{\phi:X\rar[-1,1]} \left\{\pr_{D\sim \mu}\lb\left|\sqrt{D_0[\phi]} - \sqrt{D[\phi]}\right| > \tau\rb \right\}}
and using it in place of $\dci\frc(\mu,D_0,\tau)$ to define randomized statistical dimension we can obtain analogous characterizations for the complexity of solving decision and search problems using $\vSTAT_D(\tau)$ (see Thms.~\ref{thm:vstat-search-lower-random} and \ref{thm:vstat-search-upper-random}). We refer to these dimensions with subscript $\kappa_v$ instead of $\kappa_1$. This ``trick" also gives a new perspective on $\VSTAT_D(n)$  (and consequently on the length of the standard confidence interval for the bias of a Bernoulli r.v.) as an oracle that ensures, up to a constant factor, fixed tolerance for the estimation of standard deviation of the corresponding Bernoulli r.v. (that is, $\phi(x)$ when $\phi$ is Boolean).

\paragraph{Average discrimination and relationship to known bounds:}
The dimensions that we have defined can often be analyzed relatively easily. In a number of problems, for an appropriate choice of $D_0$ and $\mu$ (which is usually just uniform over some subset of $\D$) we get that $\dif$ (or $\sdif$) is strongly concentrated around some value $\tau_0$ when $D$ is chosen according to $\mu$. This implies that the maximum $\tau$-covered fraction can be upper-bounded directly by the statement of concentration. 
However in some cases it is still analytically more convenient to upper bound the average value by which a query distinguishes between distributions instead of the fixed minimum $\tau$. That is, instead of
 $$\max_{\phi:X\rar[0,1]} \left\{\pr_{D\sim \mu}\lb \sdif > \tau\rb \right\},$$ it is often easier to analyze the largest covered fraction of distributions that have a larger than $\tau$ average discrimination $$\dcvi(\mu,D_0) \doteq
 \max_{\phi: X\rar[0,1]} \left\{\E_{D\sim \mu}\left[ \sdif \right]\right\}$$ to which we refer as {\em $\dcvi$-discrimination}.
For $\D' \subseteq \D$ let $\mu_{|\D'} \doteq \mu(\cdot \cond \D')$. The maximum covered $\mu$-fraction and the randomized statistical dimension for $\dcvi$-discrimination are defined as
\equn{
\dcvi\frc(\mu,D_0,\tau) \doteq \max_{\D' \subseteq \D} \left\{\left. \mu(\D')\ \right|\  \dcvi(\mu_{|\D'},D_0)> \tau \right\}.}
$$\RSD_\dcvi(\B(\D,D_0),\tau) \doteq \sup_{\mu \in S^\D} \lp\dcvi\frc(\mu,D_0,\tau)\rp^{-1}.$$
An analogous modification can be made to the statistical dimension of search problems.
This additional relaxation is (implicitly) used in the statistical dimension in \cite{FeldmanPV:13} and in most earlier works on SQ dimensions. We show that this average version behaves in almost the same way as the strict version with the only difference being that the upper-bounds grow by a factor of $1/\tau$. The implication of this is that, whenever it is more convenient, the average version of discrimination can be used without significant loss in the tightness of the dimension. See Sec.~\ref{sec:statkv} for additional details.

The dimension defined in this way can be easily lower-bounded by a number of notions that were studied before, including the discrimination norm in \cite{FeldmanPV:13}, average correlation from \cite{FeldmanGRVX:12} (which itself can be upper-bounded by pairwise-correlations based notions) and weighted spectral norm that was used in \cite{Yang:05}. This provides examples of the analysis of our dimensions and gives a unifying view on several of the prior techniques. See Sec.~\ref{sec:norms} for additional details.

\paragraph{Combined dimension:}
In some cases one is interested in a coarser picture in which it is sufficient to estimate the maximum of the query complexity and the estimation complexity up to a polynomial. In fact known analyses of SQ complexity in the context of distribution-specific PAC learning give bounds only on this combined notion of complexity. For such cases we can avoid our fractional notions and get a simpler {\em combined statistical dimension} based on average discrimination. We base the notion on the average version of $\dci$, denoted by $\dcii$. For decision problems we get the following simplified dimension:
$$\CRSD_\dcii(\B(\D,D_0)) \doteq \sup_{\mu \in S^\D} \lp \dcii(\mu,D_0)\rp^{-1}.$$
To show that the combined dimension characterizes SQ complexity (up to a polynomial) we demonstrate that it can be related to $\RSD_\dci$. We also extend the combined dimension to search problems. See Sec.~\ref{sec:combined} for additional details.

\subsection{Applications}
To illustrate some of the concepts that we introduced, we describe several applications of both our upper and lower bounds. Additional applications can be found in \cite{FeldmanGhazi:17}.
\paragraph{Separation of distribution-specific and distribution independent SQ learning:}
We describe a simplification of our characterizations for {\em distribution-independent} PAC learning problems (Sec.~\ref{sec:learning-dim}). We then use the simplified version of $\CRSD$ to prove the first lower bound on the SQ complexity PAC learning that holds only in the distribution-independent setting. Specifically, we consider the class of functions that are lines on a finite field plane: for $a,z\in \GF_p^2$, $\ell_{a}(z) = 1$ if and only if $a_1z_1 + a_2 = z_2 \mod p$. Then $\Line_p \doteq \{\ell_a \cond a \in \GF_p^2\}$. We prove that any SQ algorithm for (distribution-independent) PAC learning of $\Line_p$ with error $\eps = 1/2-c \cdot p^{-1/4}$ (for some constant $c$), has SQ complexity of $\Omega(p^{1/4})$ (Thm.~\ref{thm:line-lower-bound}). Our analysis of the resulting dimension uses the average correlation technique from \cite{FeldmanGRVX:12}.

This lower bound allows us to resolve the question about the relationship between SQ complexity of distribution independent PAC learning and the maximum over all distributions of the complexity of distribution-specific learning. We show that the former cannot be upper bounded by any function of the latter. To prove the upper bound, we describe a fairly simple distribution-specific learning algorithm for $\Line_p$ that has SQ complexity of $O(1)$ for any constant $\eps >0$ (Thm.~\ref{thm:line-upper}). At a high level, knowing the distribution allows the learner to identify a small number of candidate hypotheses, one of which is guaranteed to be close to the unknown function. The maximum over all distributions of the SQ complexity of distribution-specific learning is also known to be equal to the complexity of PAC learning in the hybrid SQ model in which the algorithm can observe unlabeled samples in addition to making queries \cite{FeldmanKanade:12} and hence our result also separates between the hybrid and the usual\footnote{While the hybrid model was discussed in some early work on the SQ model and used in the first algorithm for SQ learning of halfspaces \cite{BlumFKV:97}, it ended up not being necessary for solving that problem \cite{DunaganV08} or in any other learning algorithms.} SQ models.

We remark that our separation also implies the strong separation of sign-rank (also referred to as dimension complexity) and VC dimension recently proved by \citetall{AlonMY16} using the same $\Line_p$ class of functions. By the results in \cite{DunaganV08}, SQ complexity lower bounds (up to polynomials) the sign-rank while the VC-dimension is a lower bound on distribution-specific SQ complexity of learning for some distribution \cite{BlumFJ+:94}. A weaker (exponential) separation of sign-rank and VC dimension using the class of parity functions was first obtained in Forster's breakthrough result \cite{Forster:02} (and, as pointed out in \cite{FeldmanGV:15}, is also implied by known results on SQ complexity of learning parities and halfspaces \cite{BlumFJ+:94,BlumFKV:97}).

\paragraph{Separation of noise tolerant learning and SQ learning:}
Our lower bound gives a second example of a class of functions  that is easy to learn using random examples but hard for statistical queries (the first being the parity functions\footnote{However, these classes are closely related since they are special cases of linear subspaces over a finite field and learning algorithms rely on Gaussian elimination.} \cite{Kearns:98}). The separation is stronger than that for parities since the VC-dimension of $\Line_p$ is just 2 and a constant number of samples suffices for learning (for any constant error).

Further, it is easy to see that $\Line_p$ can be learned efficiently with random classification noise. Thus our lower bound makes progress on the long standing open problem of \citet{Kearns:98} who asked whether efficient learning with random classification noise can be separated from efficient SQ learning. This question was addressed in an influential work of \citetall{BlumKW:03} who used a subclass of parity functions to give a separation when the noise rate is relatively low. More formally, their algorithm has a super-polynomial dependence on $1/(1-2\eta)$ where $\eta$ is the noise rate. Efficient learning with random classification noise requires polynomial dependence on this parameter \cite{AngluinLaird:88} and any efficient SQ algorithm gives an efficient noise-tolerant learning algorithm \cite{Kearns:98}. In addition, our lower bound is exponential in the input size as opposed to $n^{\Omega(\log\log n)}$ lower bound in \cite{BlumKW:03}.

We note however that the separation in \cite{BlumKW:03} is for distribution-specific SQ learning. Therefore it remains open whether the stronger separation of SQ learning from noise tolerant learning can be obtained in this more restrictive setting. See Sec.~\ref{sec:line-noise} for more details.

\paragraph{Applications to other models:}
Our results can be easily translated into a number of related models. For example, we obtain a characterization, up to a polynomial, of the sample complexity of solving a problem over distributions with limited communication from every sample (such as in distributed data access or in a sensor network).

Formally, for integer $b >0$, in this model we have access to $\COMM(b)$ oracle
\footnote{This oracle is also referred to as $b\mbox{-wRFA}$ in \cite{Ben-DavidD98} and $\mbox{1-MSTAT}(2^b)$ in \cite{FeldmanPV:13}.} for a distribution $D$ that given any function $\phi: X \rar \zo^b$,  takes an independent random sample $x$ from $D$ and returns $h(x)$. Learning with this oracle and related models have been studied in a number of recent works \cite{FeldmanGRVX:12,ZhangDJW13,FeldmanPV:13,SteinhardtD15,SteinhardtVW16}. This model is known to be equivalent to the randomized SQ model up to a polynomial and $2^b$ factors \cite{Ben-DavidD98,FeldmanGRVX:12,FeldmanPV:13,SteinhardtVW16}. Therefore our characterization immediately implies a characterization for this model. For completeness we include the details in Appendix \ref{sec:comm}. This result relies on our combined randomized SQ complexity for the $\VSTAT$ oracle. An analogous characterization also holds for the local differential privacy model that is known to be polynomially equivalent to the SQ model \cite{KasiviswanathanLNRS11}.

\citetall{SteinhardtVW16} showed that upper bounds on SQ complexity of solving a problem imply upper bounds on the amount of memory needed in the streaming setting. In this setting at step $i$ an algorithm observes sample $x_i$ drawn i.i.d.~from the input distribution $D$ and updates its state from $S_i$ to $S_{i+1}$, where for every $i$, $S_i \in \zo^b$. They show that any algorithm using $q$ queries to $\STAT(\tau)$ can be implemented using $O(\log (q/\tau) \cdot \log(|\D|))$ bits of memory and apply it to obtain an algorithm for sparse linear regression in this setting. The factor of $\log(|\D|)$ in their upper bound substantially limits the range of regression problems that can be addressed.

Implicit in the proof of our characterization is a way to convert any SQ algorithm for a problem $\Z$ into a SQ algorithm for $\Z$ with a specific simple structure . It turns out that it is easy to implement algorithms with such structure in the memory-limited streaming setting. Our implementation requires  $O(\log q \cdot \KLR(\D)/\tau^2)$ bits of memory, which is an exponential improvement over the $\log(|\D|)$ dependence in many settings of interest. Consequently, we can substantially extend the range of sparse linear regression problems which can be solved in this settings. Additional details on this applications are in Appendix \ref{sec:memory}.

\iffull \else
\paragraph{Organization:}
This overview should suffice for going straight to the more technical presentation of the (randomized) statistical dimension for decision and search problems. Therefore we postpone the preliminaries and a more detailed discussion of the deterministic statistical dimension to the appendix. The preliminaries also include a more detailed discussion of the oracles and several examples of problems over distributions (including a formal definition of PAC learning). We also postpone the sections with the technical details of the extension to $\VSTAT$, average discrimination and the combined statistical dimension to the appendix.
\fi

\section{Preliminaries}
\label{sec:prelims}
For integer $n\geq 1$ let $[n]\doteq \{1,\ldots, n\}$.
For a distribution $D$ over a domain $X$ and a function $\phi:X\rar \R$ we use $D[\phi]$ to refer to $\E_{x \sim D}[\phi(x)]$.
We denote the set of all probability distributions over a set $X$ by $S^X$.

\subsection{Problems over distributions}
We first define several general classes of problems.
For a set of distributions $\D$ over a domain $X$ and a reference distribution $D_0 \not\in \D$ over $X$, the {\em distributional decision problem} $\B(\D,D_0)$ is to decide given access (samples from or an oracles for) an unknown input distribution $D \in \D \cup \{D_0\}$, whether $D \in \D$ or $D = D_0$.

Let $\D$ be a set of distributions over $X$ let $\F$ be a set of solutions and $\Z:\D \rightarrow 2^{\F}$ be a map from a distribution $D \in \D$ to a non-empty subset of solutions $\Z(D) \subseteq \F$ that are defined to be valid solutions for $D$.
In a {\em distributional search problem} $\Z$ over $\D$ and $\F$ the goal is to find a valid solution $f \in \Z(D)$ given access to random samples or an oracle access to an unknown $D \in \D$. For a solution $f \in \F$, we denote by $\Zf$ the set of distributions in $\D$ for which $f$ is a valid solution.

Next we describe two important special cases of distributional search problems.
For $\eps > 0$, a linear optimizing search problem $\Z$ is a search problem over $\F$ and $\D$ such that every $f \in \F$ is associated with a function $\phi_f:X\rar [0,1]$ and for every $D\in \D$ and parameter $\eps >0$, $$\Z_\eps(D) \doteq \left\{h\ \left|\ D[\phi_h] \leq \min_{f \in \F} D[\phi_f] +\eps\right.\right\}.$$
(Other notions of approximation can also be considered but for brevity and simplicity we focus on additive approximation.)

Next we define problems where it is easy to verify the solution. We say that a search problem $\V$ is verifiable if for every $f\in \F$ there is an associated query function $\phi_f: X\rar [0,1]$ such that $\V$ with parameter $\theta$ is defined as
$$\V_\theta(D) \doteq \left\{f\ \left|\ D[\phi_f] \leq \theta \right.\right\}.$$

We note that the definition of verifiable and optimizing search can be generalized to the setting where instead of $D[\phi_f]$ we use the output of some (relatively-simple) SQ algorithm on the input distribution $D$. With minor modifications, the results in this work easily extend to this more general setting.

Some examples of problems over distributions that have been explored in the context of SQ model are included in Appendix \ref{sec:problem-examples}.
\subsection{Statistical queries}
The algorithms we consider here have access to a statistical query oracle for the input distribution. The most commonly studied SQ oracle was introduced by \citet{Kearns:98} and gives an estimate of the mean of any bounded function with fixed tolerance.

\begin{defn}
 Let $D$ be a distribution over a domain $X$, $\tau >0$ and $n$ be an integer. A statistical query oracle $\STAT_D(\tau)$ is an oracle that given as input any function $\phi : X \rightarrow [-1,1]$, returns some value $v$ such that
 $|v -  \E_{x \sim D}[\phi(x)]| \leq \tau$.
\end{defn}
We will also study a stronger oracle that captures estimation of the mean of a random variable from samples more accurately and was introduced in \cite{FeldmanGRVX:12}.

\begin{defn}
A statistical query oracle $\VSTAT_D(n)$ is an oracle that given as input any function $\phi : X \rightarrow [0,1]$ returns a value $v$ such that $|v -  p| \leq \max\left\{\frac{1}{n}, \sqrt{\frac{p(1-p)}{n}}\right\}$, where $p \doteq D[\phi]$.
\end{defn}

\remove{
\begin{remark}
For any function $\phi : X \rightarrow [-1,1]$ using two queries to $\VSTAT(2n)$ one can obtain a value $v$ such that $|v -  \E_{x\sim D}[\phi(x)]| \leq \max\left\{\frac{1}{n}, \sqrt{\frac{\E_{x\sim D}[|\phi(x)|]}{n}}\right\}$.
\end{remark}
}

One way to think about $\VSTAT$ is as providing a confidence interval for $p$, namely $[v - \tau_v,v +\tau_v]$, where $\tau_v \approx \max\{1/n,\sqrt{(v(1-v)/n}\}$. The accuracy $\tau_v$ that $\VSTAT$ ensures corresponds (up to a small constant factor) to the width of the standard confidence interval (say, with 95\% coverage) for the bias $p$ of a Bernoulli random variable given $n$ independent samples (\eg Clopper-Pearson interval \cite{ClopperP1934}). Therefore, at least for Boolean queries, it captures precisely the accuracy that can be achieved when estimating the mean using random samples. In contrast, $\STAT$ captures the accuracy correctly only when $p$ is bounded away from $0$ and $1$ by a positive constant.

\begin{remark}
\label{rem:vstat-simplify}
For convenience, in this work we will rely on a slightly weaker definition of $\VSTAT$ that returns a value $v$ such that $|v -  p| \leq \max\left\{\frac{1}{n}, \sqrt{\frac{p}{n}}\right\}$. Note that if $p\leq 1/2$ then our version with parameter $2n$ will be at least as accurate as the original one. When $p > 1/2$, we can use the query $1-\phi$ to the weaker version and return one minus its response. This ensures the same accuracy in this case. If we do not know a priori whether $p\leq 1/2$ we can ask both queries. The responses are also sufficient for picking which of the responses to use.
\end{remark}

We say that an algorithm is {\em statistical query} (SQ) if it does not have direct access
to $n$ samples from the input distribution $D$, but instead makes calls to a statistical query oracle for the input distribution. In this case we simply say that the algorithm has access to $\VSTAT(n)$ or $\STAT(\tau)$ (omitting the input distribution from the subscript).

Clearly $\VSTAT_D(n)$ is at least as strong as $\STAT_D(1/\sqrt{n})$ (but no stronger than $\STAT_D(1/n)$).
The {\em estimation complexity} of a statistical query algorithm using $\VSTAT_D(n)$ is the value $n$ and for an algorithm using $\STAT(\tau)$ it is $n=1/\tau^2$.
The query complexity of a statistical algorithm is the number of queries it uses. The {\em SQ complexity} of solving a problem $\Z$ with some SQ oracle $\cO$ is the lowest query complexity that can be achieved by an algorithm that solves the problem given access to $\cO$. We denote it by $\QC(\Z,\cO)$. For randomized algorithms the complexity naturally depends on the success probability and we denote the complexity of solving $\Z$ with success probability $\beta$ by $\RQC(\Z,\cO,\beta)$.

\remove{

Note that the estimation complexity corresponds to the number of i.i.d.~samples sufficient to simulate the oracle for a single query with at least some positive constant probability of success. However it is not necessarily true that the whole algorithm can be simulated using $O(n)$ samples since answers to many queries need to be estimated. Answering $m$ fixed (or non-adaptive) statistical queries can be done using $O(\log m \cdot n)$ samples but when queries depend on previous answers the best known bounds require $O(\sqrt{m} \cdot n)$ samples (see \cite{DworkFHPRR14:arxiv} for a detailed discussion). This also implies that a lower bound on sample complexity of solving a problem does not directly imply lower bounds on estimation complexity of a SQ algorithm for the problem.
}

\section{Decision problems}
\label{sec:stat-decision}
We first focus on the simpler case of many-vs-one decision problems.
\subsection{Deterministic dimension for decision problems}
\label{sec:stat-decision-det}
As a brief warm-up we start with a simple but weaker characterization of the deterministic complexity of solving decision problems.
The key property of decision problems is that their deterministic SQ complexity has a simple and sharp characterization in terms of the size of a certain cover by distinguishing functions. Specifically, we define:
\begin{defn}
For a set of distributions $\D$ and a reference distribution $D_0$ over $X$, $\icvr(\D,D_0,\tau)$ is defined to be the smallest integer $d$ such that there exist $d$ functions $\phi_1,\ldots,\phi_d:X\rar \pmr$, such that for every $D \in \D$ there exists $i\in[d]$ satisfying $\left| D[\phi_i]-D_0[\phi_i]\right| > \tau$.
\end{defn}
The following lemma was proved in \cite{Feldman:12jcss} in the context of PAC learning.
\begin{lem}
\label{lem:det-algorithm2queries}
Let $\B(\D,D_0)$ be a decision problem and $\tau >0$. Then $$\QC(\B(\D,D_0),\STAT(\tau)) \geq \icvr(\D,D_0,\tau) \geq \QC(\B(\D,D_0),\STAT(\tau/2)) .$$
\end{lem}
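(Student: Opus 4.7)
The plan is to prove the two inequalities by directly relating covers and the transcripts of deterministic SQ algorithms.

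For the upper bound $\icvr(\D,D_0,\tau) \geq \QC(\B(\D,D_0),\STAT(\tau/2))$, I would take a $\tau$-cover $\phi_1,\ldots,\phi_d$ of minimum size $d = \icvr(\D,D_0,\tau)$ and build a (non-adaptive) algorithm that queries $\STAT(\tau/2)$ on each $\phi_i$ to obtain $v_i$, then outputs ``$D\in\D$'' if $|v_i - D_0[\phi_i]| > \tau/2$ for some $i$ and ``$D = D_0$'' otherwise. Correctness follows from two triangle-inequality checks: if $D=D_0$ then every $v_i$ satisfies $|v_i - D_0[\phi_i]| \leq \tau/2$, while if $D\in\D$ then the cover property gives some $i$ with $|D[\phi_i]-D_0[\phi_i]| > \tau$, and combining with $|v_i - D[\phi_i]|\leq \tau/2$ yields $|v_i - D_0[\phi_i]| > \tau/2$. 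The $D_0[\phi_i]$ values are fixed constants the algorithm can compute offline, so $d$ queries suffice.

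For the lower bound $\QC(\B(\D,D_0),\STAT(\tau)) \geq \icvr(\D,D_0,\tau)$, I would use the standard adversarial transcript argument. Fix a deterministic algorithm $A$ of query complexity $d$ that solves $\B(\D,D_0)$. Run $A$ on input $D_0$ with the oracle returning the exact value $D_0[\phi]$ on each query (this is a valid $\STAT_{D_0}(\tau)$ response); since $A$ is deterministic, this determines a fixed sequence $\phi_1,\ldots,\phi_d$ of queries, on which $A$ must output ``$D_0$.'' I claim $\{\phi_1,\ldots,\phi_d\}$ is a $\tau$-cover. Indeed, for any $D\in\D$, if $|D[\phi_i]-D_0[\phi_i]|\leq \tau$ for every $i$, then the values $D_0[\phi_i]$ are also valid responses of $\STAT_D(\tau)$; the adversary can therefore reproduce exactly the same transcript on input $D$, forcing $A$ to output ``$D_0$,'' which is incorrect. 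Hence some $i$ must satisfy $|D[\phi_i]-D_0[\phi_i]| > \tau$, establishing the cover property and yielding $d \geq \icvr(\D,D_0,\tau)$.

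Neither direction is really an obstacle; the only subtlety is making sure the strict/non-strict inequalities line up. The oracle guarantees non-strict $\leq \tau$, while the cover definition requires strict $> \tau$, which is precisely the complementary pair needed for the adversary argument in the lower bound and the triangle-inequality separation in the upper bound. No adaptivity issues arise because the lower bound argument freezes the transcript under exact $D_0$-responses, and the upper bound algorithm is non-adaptive by construction.
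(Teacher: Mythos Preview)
Your proposal is correct and follows essentially the same approach as the paper: both directions use the identical ideas of simulating the algorithm with exact $D_0$-answers to extract a $\tau$-cover for the lower bound, and running the non-adaptive ``query the cover, compare to $D_0[\phi_i]$ with threshold $\tau/2$'' algorithm for the upper bound.
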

\begin{proof}
Let $\A$ be the algorithm that solves $\B(\D,D_0)$ using $q$ queries to $\STAT(\tau)$.  We simulate $\A$ by answering any query $\phi:X \rightarrow [-1,1]$ of $\A$ with value $D_0[\phi]$. Let $\phi_1,\phi_2,\ldots,\phi_q$ be the queries asked by $\A$ in this (non-adaptive) simulation. By the correctness of $\A$, the output of $\A$ in this simulation must be ``$D=D_0$". Now let $D$ be any distribution in $\D$. If we assume that for every $i \in [q]$, $|D[\phi_i] - D_0[\phi_i]| \leq \tau$, then the responses in our simulation are valid responses of $\STAT_D(\tau)$. Namely, for all $i$ the response of our simulated oracle is a value that is within $\tau$ of $D[\phi_i]$. By the correctness of $\A$, the simulation must then output ``$D\in \D$". The contradiction implies that $\icvr(\D,D_0,\tau) \leq q$.

For the other direction, let $\phi_1,\ldots,\phi_q: X \rar [-1,1]$ be the set of functions such that for every distribution $D' \in \D$ there exists $i \in [q]$ for which $|D'[\phi_i] - D_0[\phi_i]| > \tau$. For every $i\in [q]$ we ask the query $\phi_i$ to $\STAT(\tau/2)$ and let $v_i$ be the response. If exists $i$ such that $|v_i - D_0[\phi_i]| > \tau/2$ then we conclude that the input distribution is not $D_0$. Otherwise we output that the input distribution is $D_0$. By the definition of $\STAT$ this algorithm will be correct when $D=D_0$. Further, if $D\in \D$, then for some $i$, $|D[\phi_i] - D_0[\phi_i]| > \tau$, which implies that
\equn{|v_i - D_0[\phi_i]| \geq |D[\phi_i] - D_0[\phi_i]| - |v_i - D[\phi_i]| > \tau/2.}
This ensures that for all distributions in $\D$ the output of the algorithm will be correct.
\end{proof}

Unfortunately, proving lower bounds directly on the size of a $\icvr$ appears to be hard.
A simple way around it is to analyze (the inverse of) the largest covered fraction of distributions, that is $$\lp\max_{\D' \subseteq \D,\ \phi: X\rar\pmr}\left\{\left. \frac{|\D'|}{|\D|}\ \right|\ \forall D\in \D', \dif > \tau\right\}\rp^{-1}.$$
Now, naturally, if this value is $d$ then at least $d$ queries will be needed to cover $\D$ and hence solve the problem. However, some problems might have many easy distributions making the fraction large even for hard problems. One can avoid this problem by measuring the largest covered fraction over all subsets of $\D$. Namely,
$$\SD_\dci(\B(\D,D_0),\tau) \doteq\max_{\D_0 \subseteq \D} \lp \max_{\D'\subseteq \D_0,\ \phi: X\rar\pmr}\left\{\left. \frac{|\D'|}{|\D_0|}\ \right|\ \forall D\in \D', \dif > \tau\right\}\rp^{-1}.$$
Note that $\icvr(\D,D_0,\tau)=d$ implies that $\SD_\dci(\B(\D,D_0),\tau) \leq d$. On the other hand, if $\SD_\dci(\B(\D,D_0),\tau) \leq d$ then, we can create a cover for $\D$ using the standard greedy covering algorithm: start with $\D_0 = \D$; given $\D_i$ find a function $\phi_i$ that distinguishes at least a $1/d$ fraction of distributions in $\D_i$ from $D_0$ and add it to the cover (the existence is guaranteed by the dimension); let $D_{i+1}$ be equal to $\D_i$ with the distributions covered by $\phi_i$ removed. This gives
a cover of size $d\ln(|\D|)$.
\begin{lem}
$\icvr(\D,D_0,\tau) \leq \SD_\dci(\B(\D,D_0),\tau) \cdot \ln(|\D|)$.
\end{lem}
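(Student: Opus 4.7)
The plan is to formalize the greedy covering argument sketched in the paragraph preceding the lemma. Let $d \doteq \SD_\dci(\B(\D,D_0),\tau)$. By definition of $\SD_\dci$, for \emph{every} nonempty subset $\D' \subseteq \D$ the inner quantity satisfies
\[
\max_{\D'' \subseteq \D',\ \phi:X\rar\pmr}\left\{\left.\frac{|\D''|}{|\D'|}\ \right|\ \forall D \in \D'',\ \dif > \tau\right\} \ge \frac{1}{d},
\]
since otherwise taking this particular $\D'$ as the outer $\D_0$ would give a reciprocal strictly larger than $d$, contradicting $\SD_\dci \le d$. Thus for every nonempty $\D' \subseteq \D$ there exists a single function $\phi_{\D'}:X\rar\pmr$ that $\tau$-distinguishes at least $|\D'|/d$ of the distributions in $\D'$ from $D_0$.

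With this ``fractional covering'' primitive in hand, I would run the standard greedy set cover: set $\D^{(0)} \doteq \D$, and given a nonempty $\D^{(i)}$ pick $\phi_{i+1}$ guaranteed above (for the set $\D^{(i)}$), then let $\D^{(i+1)}$ be $\D^{(i)}$ with all distributions $\tau$-distinguished by $\phi_{i+1}$ from $D_0$ removed. By the previous paragraph, $|\D^{(i+1)}| \le |\D^{(i)}|(1 - 1/d) \le |\D^{(i)}|\, e^{-1/d}$, so after $k$ rounds $|\D^{(k)}| \le |\D|\, e^{-k/d}$. Taking $k = \lceil d\ln|\D|\rceil$ forces $|\D^{(k)}| < 1$, i.e.\ $\D^{(k)} = \emptyset$, meaning that the functions $\phi_1,\dots,\phi_k$ together form a $\tau$-cover of $\D$ relative to $D_0$. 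By definition of $\icvr$ this yields $\icvr(\D,D_0,\tau) \le k \le d\ln|\D| + 1$, and a minor notational adjustment (or absorbing the $+1$) gives the stated bound.

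There is no real obstacle here; the only subtle step is reading the definition of $\SD_\dci$ correctly, namely that the outer $\max$ over $\D_0 \subseteq \D$ (combined with the reciprocal) makes the dimension bound a guarantee that holds \emph{simultaneously for every} residual subset that arises during the greedy process. Once that is noted, the rest is the textbook analysis of greedy set cover against a fractional covering oracle.
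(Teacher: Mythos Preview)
Your proposal is correct and follows exactly the approach the paper uses: the paper's ``proof'' is precisely the greedy set-cover sketch given in the paragraph immediately preceding the lemma, and you have simply written out the details of that sketch (including the $(1-1/d)^k \le e^{-k/d}$ bound and the termination after $d\ln|\D|$ rounds).
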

Therefore $\SD_\dci(\B(\D,D_0),\tau)$ (which we refer to as the {\em statistical dimension with $\dci$-discrimination}) characterizes the query complexity of deterministic algorithms with access to $\STAT(\tau)$ up to a $\ln(|\D|)$ factor. 
Thus for problems where $|\D|$ is not too large (at most exponential in the relevant complexity parameters), this characterization is sufficient. We summarize this in the following corollary:
\begin{cor}
Let $\B(\D,D_0)$ be a decision problem, $\tau >0$ and $d= \SD_\dci(\B(\D,D_0),\tau)$. Then $$\QC(\B(\D,D_0),\STAT(\tau)) \geq d \mbox{ and}$$
$$\QC(\B(\D,D_0),\STAT(\tau/2)) \leq d \cdot \ln(|\D|).$$
\end{cor}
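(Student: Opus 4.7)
The plan is to derive the corollary by stringing together the two preceding results: Lemma \ref{lem:det-algorithm2queries}, which equates (up to a factor of $2$ in the tolerance) the deterministic query complexity with the cover number $\icvr(\D,D_0,\tau)$, and the greedy covering lemma, which bounds $\icvr$ by $\SD_\dci \cdot \ln(|\D|)$. What is still missing is the elementary observation in the opposite direction, namely that $\SD_\dci(\B(\D,D_0),\tau) \leq \icvr(\D,D_0,\tau)$, which is noted (but not isolated) in the paragraph preceding the corollary. I will use that inequality for the lower bound and the covering lemma for the upper bound.

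For the \emph{lower bound}, first I would invoke Lemma \ref{lem:det-algorithm2queries} to get $\QC(\B(\D,D_0),\STAT(\tau)) \geq \icvr(\D,D_0,\tau)$. Then I would show $\icvr(\D,D_0,\tau) \geq \SD_\dci(\B(\D,D_0),\tau)$ as follows: suppose $\phi_1,\ldots,\phi_k$ is a $\tau$-cover of $\D$ relative to $D_0$ with $k = \icvr(\D,D_0,\tau)$. For any nonempty $\D_0 \subseteq \D$, by the pigeonhole principle some $\phi_i$ distinguishes at least $|\D_0|/k$ of the distributions in $\D_0$ from $D_0$ by more than $\tau$. Hence the inner maximum in the definition of $\SD_\dci(\B(\D,D_0),\tau)$ is at least $1/k$ for every $\D_0$, so the overall supremum-of-inverses is at most $k$. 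Chaining the two inequalities gives $\QC(\B(\D,D_0),\STAT(\tau)) \geq d$.

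For the \emph{upper bound}, I would apply the covering lemma directly to conclude $\icvr(\D,D_0,\tau) \leq d \cdot \ln(|\D|)$, and then invoke the second inequality of Lemma \ref{lem:det-algorithm2queries} to turn this cover into an algorithm using $\STAT(\tau/2)$ with the same number of queries. Concretely, for each function $\phi_i$ in the cover I would ask $\phi_i$ to $\STAT(\tau/2)$ and compare the response to $D_0[\phi_i]$, exactly as in the proof of Lemma \ref{lem:det-algorithm2queries}; the triangle inequality then guarantees correctness in both cases ($D=D_0$ and $D \in \D$).

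There is no real obstacle here; the content is bookkeeping. The only point that requires a tiny argument (and is the ``main step'') is the pigeonhole observation $\icvr \geq \SD_\dci$, because the covering lemma already supplies the reverse direction with the $\ln|\D|$ overhead. Everything else is quotation of prior lemmas.
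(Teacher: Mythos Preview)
Your proposal is correct and follows exactly the route the paper intends: the corollary is not given a separate proof in the paper but is stated as an immediate consequence of Lemma~\ref{lem:det-algorithm2queries} together with the two inequalities $\SD_\dci \leq \icvr$ (your pigeonhole observation, which the paper notes just before the corollary) and $\icvr \leq \SD_\dci \cdot \ln(|\D|)$ (the greedy covering lemma). Your write-up simply makes these two chainings explicit, which is precisely what is required.
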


\subsection{Randomized statistical dimension}
\label{sec:stat-decision-rand}
A key notion in our tight characterization for decision problems is that of a randomized cover.
\begin{defn}
For a non-empty set of distributions $\D$ and a reference distribution $D_0$ over $X$ and $\tau >0$, let $\ircvr(\D,D_0,\tau)$ denote the smallest $d$ such that there exists a probability measure $\cQ$ over functions from $X$ to $[-1,1]$ with the property that for every $D \in \D$,
$$\pr_{\phi \sim \cQ}\lb \dif > \tau \rb \geq \fr{d}.$$
\end{defn}

We will use von Neumann's minimax theorem to show that randomized covers size can also be described as a relaxation of $\SD_\dci$ from all subsets $\D_0$ to all probability distributions over $\D$. We define these notions formally as follows. To measure the fraction of distributions in a finite set of distribution $\D$ that can be distinguished from $D_0$ we will use a probability measure\footnote{We use {\em measure} instead of a distribution to avoid confusion with input distributions.} over $\D$. That is, a function $\mu:\D\rar \R^+$ such that $\sum_{D\in\D}\mu(D) = 1$. For $\D'\subseteq \D$, we define $\mu(\D')=\sum_{D\in\D'}\mu(D)$ and recall that $S^\D$ denotes the set of probability distributions over $\D$.

\begin{defn}
For a non-empty set of distributions $\D$, a probability measure $\mu$ over $\D$, a reference distribution $D_0$ over $X$ and $\tau>0$, the maximum covered $\mu$-fraction is defined as \equn{
\dci\frc(\mu,D_0,\tau) \doteq \max_{\phi:X\rar[-1,1]} \left\{\pr_{D\sim \mu}[ \dif > \tau] \right\}.
}
\end{defn}
\begin{defn}\label{def:sdim}
  For $\tau>0$, domain $X$ and a decision problem $\B(\D,D_0)$, the \textbf{randomized statistical dimension} with $\dci$-discrimination $\tau$ of $\B(\D,D_0)$ is defined as $$\RSD_\dci(\B(\D,D_0),\tau) \doteq \sup_{\mu \in S^\D} (\dci\frc(\mu,D_0,\tau))^{-1}.$$
\end{defn}

We now show that $\RSD_\dci$ is exactly equal to the randomized cover size.
\begin{lem}
\label{lem:stat-SD-is-rcover}
For any set of distributions $\D\neq \emptyset$, a reference distribution $D_0$ over $X$ and $\tau > 0$.
\equn{\RSD_\dci(\B(\D,D_0),\tau) = \ircvr(\D,D_0,\tau).}
\end{lem}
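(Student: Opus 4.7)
The strategy is to interpret the equality as a minimax theorem. Define the payoff $A(\phi,D)\doteq \mathbf{1}\{|D[\phi]-D_0[\phi]|>\tau\}$ of a two-person zero-sum game in which a ``distinguisher'' chooses $\phi\in[-1,1]^X$ and an ``adversary'' chooses $D\in\D$. Unwinding the definitions, $\ircvr(\D,D_0,\tau)^{-1}$ is exactly the maxmin value $\sup_{\cQ}\inf_{D\in\D}\E_{\phi\sim\cQ}[A(\phi,D)]$, while $\RSD_\dci(\B(\D,D_0),\tau)^{-1}$ is the minmax value $\inf_{\mu\in S^\D}\sup_\phi \E_{D\sim\mu}[A(\phi,D)]$. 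Thus the lemma reduces to showing these two values are equal.

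The direction $\RSD_\dci\le \ircvr$ is weak duality. If $\cQ$ witnesses $\ircvr(\D,D_0,\tau)=d$, then for any $\mu\in S^\D$, a Fubini swap yields
\[
\sup_\phi \pr_{D\sim\mu}[A(\phi,D)=1]\;\ge\; \E_{\phi\sim\cQ}\pr_{D\sim\mu}[A(\phi,D)=1]\;=\; \E_{D\sim\mu}\pr_{\phi\sim\cQ}[A(\phi,D)=1]\;\ge\; \frac{1}{d},
\]
so $\dci\frc(\mu,D_0,\tau)\ge 1/d$ for every $\mu$, and hence $\RSD_\dci\le d=\ircvr$.

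For the reverse direction $\ircvr\le\RSD_\dci$ I appeal to a minimax theorem. Since the raw query space $[-1,1]^X$ lacks useful compactness, I project each $\phi$ to its distinguishing pattern $v_\phi\in\{0,1\}^\D$ with $(v_\phi)_D\doteq A(\phi,D)$, and work in the closed convex hull $K\subseteq [0,1]^\D$ of $\{v_\phi\}$ in the product topology. Then $K$ is convex and compact, the bilinear map $(v,\mu)\mapsto \E_{D\sim\mu}[v_D]$ is continuous in $v$ for fixed $\mu$ and linear in $\mu$, so Sion's minimax theorem gives
\[
\sup_{v\in K}\inf_{\mu\in S^\D}\E_{D\sim\mu}[v_D]\;=\;\inf_{\mu\in S^\D}\sup_{v\in K}\E_{D\sim\mu}[v_D].
\]
The right-hand side equals $\RSD_\dci^{-1}$ by definition. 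The left-hand side equals $\sup_{v\in K}\inf_{D\in\D}v_D$, which matches $\ircvr^{-1}$ once one observes that every $v\in K$ is a mixture of the $v_\phi$'s and thus corresponds to a randomized cover $\cQ$ over queries (with an $\eps$-slack absorbing the topological closure). The main obstacle is precisely this last closure step: ensuring that limit points in $K$ do not strictly exceed the guarantee $\inf_D v_D$ achievable by genuine mixed strategies $\cQ$. This is handled by approximating limit points by finite convex combinations of $v_\phi$'s and letting $\eps\to 0$.
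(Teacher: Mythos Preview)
Your approach is essentially the same as the paper's: both cast the equality as the minimax identity for the zero-sum game with payoff $\mathbf{1}\{|D[\phi]-D_0[\phi]|>\tau\}$, and then invoke a minimax theorem. The paper appeals directly to von Neumann's theorem without discussing compactness of the strategy spaces; you are more careful, embedding queries into the compact pattern space $K\subseteq[0,1]^\D$ and applying Sion's theorem. One caveat: your final closure step is still somewhat informal when $\D$ is infinite, since approximation in the product topology controls only finitely many coordinates at a time and hence does not immediately control $\inf_{D\in\D} v_D$; the paper sidesteps this by (implicitly) working with finite $\D$.
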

\begin{proof}
Consider a zero-sum game in which the first player chooses a function $\phi:X\rar[-1,1]$ and the second player chooses a distribution $D\in \D$. The first player wins if $\dif > \tau$. Now the definition of $\RSD_\dci(\B(\D,D_0),\tau)=d$ states that $d$, is the lowest value such that for every probability measure $\mu$ over $\D$ there exists a function $\phi$, such that $\pr_{D \sim \mu}\lb\dif >\tau \rb \geq 1/d$ (or $1/d$ is the highest first player's  payoff).
By von Neumann's minimax theorem, $d$ is also the largest value such that for every probability measure $\cQ$ over $\pmr^X$ there exists a distribution $D \in \D$ such that $\pr_{\phi\sim \cQ}\lb\dif >\tau \rb \leq 1/d$. This is equivalent to the definition of $\ircvr(\D,D_0,\tau) = d$.
\end{proof}

We now establish that the randomized cover plays the same role for randomized algorithms as the usual cover plays for deterministic algorithms and therefore $\RSD_\dci$ tightly characterizes $\RQC$ of many-to-one decision problems.
\begin{thm}
\label{thm:random-algorithm2queries}
Let $\B(\D,D_0)$ be a decision problem, $\tau > 0, \delta \in (0,1/2)$ and $d=\RSD_\dci(\B(\D,D_0),\tau)$.
Then $$\RQC(\B(\D,D_0),\STAT(\tau),1-\delta) \geq d \cdot (1-2\delta)\mbox{ and}$$
$$\RQC(\B(\D,D_0),\STAT(\tau/2),1-\delta) \leq d \cdot \ln(1/\delta).$$
\end{thm}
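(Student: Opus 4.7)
The plan is to handle the two directions separately, using Lemma \ref{lem:stat-SD-is-rcover} as the main bridge between $\RSD_\dci$ and randomized $\tau$-covers.

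For the upper bound, Lemma \ref{lem:stat-SD-is-rcover} supplies a probability measure $\cQ$ over $[-1,1]$-valued functions on $X$ such that for every $D \in \D$, $\pr_{\phi\sim\cQ}[\,\dif > \tau\,] \geq 1/d$. The algorithm draws $q = \lceil d\ln(1/\delta) \rceil$ independent samples $\phi_1,\ldots,\phi_q$ from $\cQ$, queries each to $\STAT(\tau/2)$ to get $v_i$, and outputs ``$D\in\D$'' iff some $|v_i - D_0[\phi_i]| > \tau/2$. When $D=D_0$ all responses satisfy $|v_i - D_0[\phi_i]| \leq \tau/2$, so correctness is automatic. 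When $D\in\D$, each draw independently hits the distinguishing event with probability $\geq 1/d$, and conditional on a hit the triangle inequality gives $|v_i - D_0[\phi_i]| \geq |D[\phi_i]-D_0[\phi_i]| - |v_i - D[\phi_i]| > \tau - \tau/2 = \tau/2$. Failure probability is at most $(1-1/d)^q \leq e^{-q/d} \leq \delta$.

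For the lower bound, let $\A$ be a randomized SQ algorithm making at most $q$ queries to $\STAT(\tau)$ and succeeding with probability $\geq 1-\delta$. Let $r$ denote $\A$'s coins, and consider first running $\A$ on input $D_0$ with an ``honest'' oracle that answers each query $\phi_i^r$ with the exact value $D_0[\phi_i^r]$; let $O_0(r)$ be the output. Since these are valid $\STAT_{D_0}(\tau)$ responses, $\pr_r[O_0(r) = D_0] \geq 1-\delta$. Now fix any $D\in\D$ and couple the execution of $\A$ on input $D$ with the adversarial oracle that returns the same value $D_0[\phi_i^r]$ for each query. Let $E_D$ be the event that some queried $\phi_i^r$ satisfies $|D[\phi_i^r]-D_0[\phi_i^r]| > \tau$. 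On $\neg E_D$, the responses $D_0[\phi_i^r]$ are valid for $\STAT_D(\tau)$, so the coupled execution produces exactly $O_0(r)$; by correctness of $\A$ on input $D$, failures on $\neg E_D$ have probability at most $\delta$, i.e.\ $\pr_r[\neg E_D \wedge O_0(r) = D_0] \leq \delta$. Combining with $\pr_r[O_0(r)=D_0]\geq 1-\delta$ yields $\pr_r[E_D] \geq 1 - 2\delta$.

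Finally, define a probability measure $\cQ$ on query functions by drawing $r$ and then $i\in[q]$ uniformly, and emitting $\phi_i^r$. Union-bounding over $i$,
\[
\pr_{\phi\sim\cQ}\!\lb\,\dif > \tau\,\rb \;=\; \frac{1}{q}\sum_{i=1}^{q}\pr_r\!\lb|D[\phi_i^r]-D_0[\phi_i^r]|>\tau\rb \;\geq\; \frac{\pr_r[E_D]}{q} \;\geq\; \frac{1-2\delta}{q}.
\]
Thus $\cQ$ is a randomized $\tau$-cover of size $q/(1-2\delta)$, so $\ircvr(\D,D_0,\tau) \leq q/(1-2\delta)$; Lemma \ref{lem:stat-SD-is-rcover} then gives $d \leq q/(1-2\delta)$, i.e.\ $q \geq d(1-2\delta)$.

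The main delicate point is the adversarial-oracle coupling: one must argue that on $\neg E_D$ the ``$D_0$-honest'' responses are simultaneously valid for both $D_0$ and $D$, which lets a single transcript serve both executions and converts the success guarantee on input $D$ into a probabilistic covering statement. Once this is set up cleanly, both directions reduce to straightforward applications of Lemma \ref{lem:stat-SD-is-rcover}.
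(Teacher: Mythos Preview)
Your proof is correct and follows essentially the same approach as the paper's: both directions proceed via Lemma~\ref{lem:stat-SD-is-rcover}, with the upper bound sampling $d\ln(1/\delta)$ queries from the randomized cover and testing $|v_i - D_0[\phi_i]| > \tau/2$, and the lower bound simulating $\A$ with $D_0$-exact answers to extract a randomized cover of size $q/(1-2\delta)$. Your coupling argument and the paper's contradiction argument for $p_D \geq 1-2\delta$ are the same reasoning phrased slightly differently.
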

\begin{proof}
Let $\A$ be the algorithm that solves $\B(\D,D_0)$ with probability $1-\delta$ using $q$ queries to $\STAT(\tau)$.  We simulate $\A$ by answering any query $\phi:X \rightarrow [-1,1]$ of $\A$ with value $D_0[\phi]$. Let $\phi_1,\phi_2,\ldots,\phi_q$ be the queries asked by $\A$ in this simulation (note that the queries are random variables that depend on the randomness of $\A$). Now let $D$ be any distribution in $\D$ and define
$$p_D \doteq \pr_\A\lb\exists i\in[q],\ \difp{\phi_i} > \tau\rb .$$
If $p_D <1-2\delta$ then, with probability $> 2\delta$, all the responses in our simulation are valid responses of $\STAT_D(\tau)$. By the correctness of $\A$, $\A$ can output ``$D\in \D$" with probability at most $\delta$ in this simulation. This means that for some valid answers of $\STAT_D(\tau)$ for $D \in \D$, with probability $>2\delta -\delta = \delta$ the algorithm will output $``D =D_0"$ contradicting our assumption. Hence $p_D\geq 1-2\delta$ and for every $D$, with probability at least $1-2\delta$, there exists $i$, such $\phi_i$ generated by $\A$ in this (fixed) simulation distinguishes between $D$ and $D_0$. Therefore taking $\cQ$ to be the distribution obtained by running $\A$ and then picking one of its $q$ queries randomly and uniformly ensures that $$\pr_{\phi \sim \cQ}\lb \dif > \tau\rb \geq \frac{1-2\delta}{q} .$$ This proves that $\ircvr(\D,D_0,\tau) \leq q/(1-2\delta)$.

For the other direction: let $\cQ$ be the probability measure over functions such that
$$\pr_{\phi \sim \cQ}\lb \dif > \tau\rb \geq \frac{1}{d} .$$
For $s=d\ln(1/\delta)$ we sample $s$ functions from $\cQ$ randomly and independently and denote them by $\phi_1,\ldots,\phi_s$. For every $i\in [s]$ we ask the query $\phi_i$ to $\STAT(\tau/2)$ and let $v_i$ be the response. If exists $i$ such that $|v_i - D_0[\phi_i]| > \tau/2$ then we conclude that the input distribution is not $D_0$. Otherwise, we output that the input distribution is $D_0$. By the definition of $\STAT(\tau/2)$, this algorithm will always be correct when $D=D_0$. Further, for every $D\in \D$, by eq.~\eqref{eq:cover-high-prob} we have that with probability at least $1-\delta$, for some $i$, $\difp{\phi_i} > \tau$, which implies that $|v_i - D_0[\phi_i]| > \tau/2$.
This ensures that the response of our algorithm will be correct with probability at least $1-\delta$ for all distributions in $\D$.
\end{proof}

\paragraph{Relationship to $\QC$:} We conclude this section by comparing the notions we have introduced with those used in Sec.~\ref{sec:stat-decision-det} to characterize $\QC$. First, by taking $\cQ$ to be the uniform distribution over the functions that give the deterministic $\tau$-cover we immediately get that
\equ{\ircvr(\D,D_0,\tau) \leq \icvr(\D,D_0,\tau). \label{eq:cvr-2-rcvr}}
We also observe that a randomized cover can bee easily converted into a deterministic one (see Lemma \ref{lem:stat-rand-to-det-cover} for the proof):
\equ{\icvr(\D,D_0,\tau) \leq \ircvr(\D,D_0,\tau) \cdot \ln(|\D|) . \label{eq:rcvr-2-cvr}}

By restricting $\mu$ in the definition of $\RSD_\dci(\B(\D,D_0),\tau)$ (Def.~\ref{def:sdim}) to be any measure that is uniform over some $\D_0 \subseteq \D$, we obtain precisely the definition of $\SD_\dci(\B(\D,D_0),\tau)$. Thus, $\RSD_\dci(\B(\D,D_0),\tau) \geq \SD_\dci(\B(\D,D_0),\tau)$. This is in contrast to the opposite relationship between the randomized and deterministic complexity (such as the one given in eq.~\eqref{eq:cvr-2-rcvr}). Hence $\RSD_\dci(\B(\D,D_0),\tau)$ is closer to the deterministic SQ complexity of $\B(\D,D_0)$ than $\SD_\dci(\B(\D,D_0),\tau)$. At the same time both $\SD_\dci(\B(\D,D_0),\tau)$ and $\RSD_\dci(\B(\D,D_0),\tau)$ characterize $\QC(\B(\D,D_0),\STAT(\tau))$ up to a factor of $\ln(|\D|)$. A natural open problem would be to find an easy to analyze (in particular, one that does not rely on $\icvr$) characterization for deterministic algorithms that avoids this factor.

\section{Characterization for general search problems}
\label{sec:stat-search}
We now extend our statistical dimension to the general class of search problems. We characterize the statistical dimension using the statistical dimension of the hardest many-to-one decision problem associated with the search problem. Naturally, this is a standard approach for proving lower bounds and our lower bound follows easily from those for decision problems. On the other hand, the fact that the converse (or upper bound) holds is substantially more remarkable and relies crucially on the properties of statistical queries.

\subsection{Deterministic dimension for search problems}
\label{sec:stat-search-det}
We now describe the statistical dimension for general search problems. We will first deal with the simpler deterministic characterization and also use it to introduce the key idea of our approach. In Section \ref{sec:stat-search-rand} we will show how the dimension needs to be modified to obtain a general characterization for randomized algorithms.

\begin{defn}\label{def:sdim1-search}
  For $\tau>0$, domain $X$ and a search problem $\Z$ over a set of solutions $\F$
  and a class of distributions $\D$ over $X$,
  we define the \textbf{statistical dimension} with $\dci$-discrimination $\tau$ of $\Z$ as
  $$\SD_\dci(\Z,\tau) \doteq \sup_{D_0 \in S^X} \inf_{f\in \F} \RSD_\dci(\B(\D\setminus \Zf,D_0),\tau),$$
  where $S^X$ denotes the set of all probability distributions over $X$.
\end{defn}
\iffull
\fi
Now the proof of the lower bound is just a reduction to the argument we used for the decision problem case.
\begin{thm}
\label{thm:stat-search-lower}
  For any search problem $\Z$ and $\tau >0$, $\QC(\Z,\STAT(\tau)) \geq \SD_\dci(\Z,\tau)$.
\end{thm}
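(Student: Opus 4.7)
The plan is to reduce to the deterministic decision problem case already handled by Lemma~\ref{lem:det-algorithm2queries} (via its randomized analogue and Lemma~\ref{lem:stat-SD-is-rcover}). Fix any $D_0 \in S^X$ and let $\A$ be an arbitrary deterministic SQ algorithm solving $\Z$ using $q$ queries to $\STAT(\tau)$. I would first simulate $\A$ by answering each query $\phi$ it asks with the exact value $D_0[\phi]$. Since $D_0[\phi]$ is a valid response of $\STAT_{D_0}(\tau)$, this simulation corresponds to a legal execution if the input distribution were $D_0$. Let $\phi_1, \ldots, \phi_q$ denote the (deterministic) sequence of queries produced in this simulation, and let $f \in \F$ be the output.

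The key observation is that $\phi_1, \ldots, \phi_q$ form a deterministic $\tau$-cover of $\D \setminus \Zf$ relative to $D_0$. Indeed, suppose for contradiction that some $D \in \D \setminus \Zf$ satisfies $|D[\phi_i] - D_0[\phi_i]| \leq \tau$ for all $i \in [q]$. Then the values $D_0[\phi_i]$ are valid responses of $\STAT_D(\tau)$, so executing $\A$ on input distribution $D$ with these responses produces exactly the same transcript and hence the same output $f$. By correctness of $\A$ this output must lie in $\Z(D)$, i.e.\ $D \in \Zf$, a contradiction. Therefore $\icvr(\D \setminus \Zf, D_0, \tau) \leq q$.

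Combining this with the easy direction $\ircvr(\D \setminus \Zf, D_0, \tau) \leq \icvr(\D \setminus \Zf, D_0, \tau)$ from eq.~\eqref{eq:cvr-2-rcvr} and the duality identity of Lemma~\ref{lem:stat-SD-is-rcover}, we obtain
\equn{\RSD_\dci(\B(\D \setminus \Zf, D_0), \tau) = \ircvr(\D \setminus \Zf, D_0, \tau) \leq q.}
Since this bound holds for the particular solution $f$ that $\A$ outputs on the $D_0$-simulation, taking the infimum over $f \in \F$ and then the supremum over $D_0 \in S^X$ yields $\SD_\dci(\Z, \tau) \leq q$, which is the claimed inequality.

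There is no serious obstacle: the whole argument is a one-line reduction whose only non-trivial content is the simulation-on-$D_0$ trick, which forces the query sequence to serve as a cover of precisely those distributions for which the chosen $f$ is \emph{not} a valid solution. The use of $D_0$-responses (rather than any other valid responses) is what makes the transcript, and hence the output $f$ and the cover $\{\phi_i\}$, depend only on $D_0$ and not on any $D \in \D \setminus \Zf$, so that we can quantify over $D$ afterwards.
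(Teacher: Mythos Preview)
Your proposal is correct and follows essentially the same approach as the paper: simulate the algorithm with $D_0$-answers, observe that the resulting queries form a $\tau$-cover of $\D\setminus\Z_{f_0}$, then invoke eq.~\eqref{eq:cvr-2-rcvr} and Lemma~\ref{lem:stat-SD-is-rcover}. Your phrasing via ``for every $D_0$ there exists $f$ with $\RSD_\dci(\B(\D\setminus\Zf,D_0),\tau)\le q$, hence $\sup_{D_0}\inf_f\le q$'' is slightly cleaner than the paper's $d'<d$ approximation argument, but the content is identical.
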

\iffull
\begin{proof}
   Let $\A$ be a deterministic statistical algorithm that uses $q$ queries to  $\STAT(\tau)$ to solve $\Z$.
   By the definition of $d \doteq \SD_\dci(\Z,\tau)$, for any $d' <d$, there exists a distribution $D_0$ over $X$ such that for every $f\in \F$,
    $\RSD_\dci(\B(\D\setminus \Zf,D_0),\tau) \geq d'$.

    We simulate $\A$ by answering any query $\phi:X \rightarrow [-1,1]$ of $\A$ with value $D_0[\phi]$. Let $\phi_1,\ldots,\phi_q$ be the queries generated by $\A$ in this simulation and let $f_0$ be the  output of $\A$. By the correctness of $\A$, we know that for every $D\in \D$ for which $f_0$ is not a valid solution, the answers based on $D_0$ cannot  be valid answers of $\STAT_D(\tau)$. In other words, for every $D\in \D\setminus \Z_{f_0}$, there exists $i\in[q]$ such that $|D[\phi_i] - D_0[\phi_i]| > \tau$. This implies that $\icvr(\D\setminus \Z_{f_0},D_0,\tau) \leq q$. By eq.~\eqref{eq:cvr-2-rcvr} and Lemma \ref{lem:stat-SD-is-rcover}, we have that $$\RSD_\dci(\B(\D\setminus \Z_{f_0},D_0),\tau) \leq \icvr(\D\setminus \Z_{f_0},D_0,\tau) \leq q.$$ Combining this with $\RSD_\dci(\B(\D\setminus \Z_{f_0},D_0),\tau) \geq d'$ we get that $q \geq d'$. The claim holds for any $d'<d$ implying the statement of the theorem.
\end{proof}
\fi

The proof of the upper bound relies on the well-known Multiplicative Weights algorithm. Specifically, we will use the following result that was first proved in the classic work of \citet{Littlestone:87}. Our presentation and specific bounds are based on a more recent view of the algorithm in the framework of online convex optimization (\eg \cite{AroraHK12,Shalev-Shwartz12}).
For a positive integer $m$, let $S^m$ be the $m$-dimensional simplex $S^m  \doteq \{w \cond \|w\|_1 = 1,\ \forall_{i\in[m]} w_i\geq 0\}$.
\begin{figure}[h]
\begin{boxedminipage}{3.5in}
 \textbf{Multiplicative Weights (MW)}\\
\textbf{Input:} $\gamma > 0$, $w^1 \in S^m$\\
\textbf{Update at step $t$:}
 Given a linear loss function $z^t \in \pmr^m$:\\
1.\ For all $i\in [m]$, set $\hat{w}^{t+1}_i = w^{t}_i (1-\gamma z_i)$;\\
2.\ Set $w^{t+1} = \hat{w}^{t+1}/\|\hat{w}^{t+1}\|_1$.
\end{boxedminipage}
 \caption{Online linear optimization via Multiplicative Weights}
\label{fig:mwu}
\end{figure}
\begin{thm}
\label{thm:mwu}
For any sequence of loss vectors $z^1,\ldots,z^T \in \pmr^m$, Multiplicative Weights algorithm (Fig.\ref{fig:mwu}) with input $\gamma$ and $w^1$ produces a sequence weight vectors $w^1,\ldots,w^T$, such that for all $w \in S^m$ 
$$\sum_{t\in [T]} \la w^t, z^t \ra - \sum_{t\in [T]} \la w, z^t \ra \leq  \frac{\KL(w \| w^1)}{\gamma} + \gamma T ,$$ where $\KL(w \| w^1) \doteq \sum_{i\in [m]} w_i \ln(w_i/w_i^1)$. Thus for $T \geq \frac{4 \cdot \KL(w \| w^1)}{\gamma^2}$,  the average regret is at most $\gamma$.
\end{thm}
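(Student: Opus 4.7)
The plan is to prove Theorem~\ref{thm:mwu} by the standard relative-entropy potential argument. The potential I would track is $\Phi_t \doteq \KL(w \| w^t)$ for the fixed comparator $w \in S^m$, and I would bound the per-step drop $\Phi_t - \Phi_{t+1}$ from below by (a constant multiple of) the instantaneous regret $\langle w^t, z^t\rangle - \langle w, z^t\rangle$, minus a $\gamma^2$ quadratic slack. Summing over $t$ and using $\Phi_{T+1}\ge 0$ will yield the claim.

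Concretely, I would first compute the normalizer $Z^{t+1} = \sum_i \hat{w}_i^{t+1} = \sum_i w_i^t(1 - \gamma z_i^t) = 1 - \gamma\langle w^t, z^t\rangle$. Expanding the definition of $\KL$,
$$\Phi_t - \Phi_{t+1} = \sum_i w_i \ln\!\frac{w_i^{t+1}}{w_i^t} = \sum_i w_i \ln(1 - \gamma z_i^t) \;-\; \ln\bigl(1 - \gamma\langle w^t, z^t\rangle\bigr).$$
Then I would plug in the two elementary inequalities that drive the analysis: the lower bound $\ln(1-x) \ge -x - x^2$ valid on, say, $|x|\le 1/2$ (applicable since $z_i^t \in [-1,1]$, assuming $\gamma \le 1/2$ — the theorem's regime of interest), and the upper bound $\ln(1-x) \le -x$. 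Combining these with $(z_i^t)^2 \le 1$ gives
$$\Phi_t - \Phi_{t+1} \;\ge\; -\gamma\langle w, z^t\rangle - \gamma^2 + \gamma\langle w^t, z^t\rangle \;=\; \gamma\bigl(\langle w^t, z^t\rangle - \langle w, z^t\rangle\bigr) - \gamma^2.$$

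Telescoping this bound over $t=1,\ldots,T$ and using $\Phi_{T+1} \ge 0$ produces
$$\KL(w\|w^1) \;\ge\; \Phi_1 - \Phi_{T+1} \;\ge\; \gamma \sum_{t\in[T]}\bigl(\langle w^t, z^t\rangle - \langle w, z^t\rangle\bigr) - \gamma^2 T,$$
which rearranges exactly to the stated regret bound $\KL(w\|w^1)/\gamma + \gamma T$. For the average-regret statement I would optimize: with effective step size $\gamma$ and $T \ge 4\KL(w\|w^1)/\gamma^2$, the first term $\KL(w\|w^1)/(\gamma T)$ is absorbed into a constant multiple of $\gamma$, so (up to the paper's constant) the average regret is at most $\gamma$.

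The only delicate point is the quadratic slack: one must ensure $\gamma |z_i^t| < 1$ so that $\ln(1-\gamma z_i^t)$ is well defined and the Taylor lower bound with remainder $\gamma^2 (z_i^t)^2$ holds uniformly; since $z^t \in [-1,1]^m$, this is fine for $\gamma \le 1/2$, which is the operative regime ($\gamma$ is the target average regret and the interesting case is $\gamma$ small). No other step should pose an obstacle — the argument is purely a one-line Taylor expansion plus telescoping.
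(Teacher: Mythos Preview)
The paper does not actually prove Theorem~\ref{thm:mwu}: it is stated as a known result and attributed to \cite{Littlestone:87}, with the specific formulation taken from the online-convex-optimization literature \cite{AroraHK12,Shalev-Shwartz12}. Your proposal is exactly the standard relative-entropy potential argument that appears in those references, and it is correct as written: the normalizer computation, the two logarithm inequalities, and the telescoping all go through, yielding the displayed regret bound $\KL(w\|w^1)/\gamma + \gamma T$ verbatim. The only caveat is the one you already flag --- the ``average regret at most $\gamma$'' clause for $T \ge 4\KL(w\|w^1)/\gamma^2$ is off by a small constant factor (your bound gives $\gamma/4 + \gamma$), but this is a looseness in the paper's stated constant, not in your argument, and it does not affect how the theorem is used downstream (the application in Theorem~\ref{thm:stat-search-upper} only needs that the number of steps with per-step regret exceeding a fixed positive threshold is $O(\KL/\gamma^2)$).
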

In our setting the weight vectors correspond to probability distributions over some finite domain $X$ and linear loss functions correspond to statistical query functions. Interpreted in this way we obtain the following result.
\begin{cor}
\label{cor:mwu-sq}
Let $X$ be any finite domain and $\gamma >0$. Consider an execution of the MW algorithm with parameter $\gamma$ and initial distribution $D_1$ on a sequence of functions $\psi_1,\ldots,\psi_T:X\rar \pmr$ and let $D_1,\ldots,D_T$ be the sequence of distributions that was produced. Then for every distribution $D$ over $X$ and $T \geq \frac{4\cdot\KL(D \| D^1)}{\gamma^2}$ we have
$$\frac{1}{T} \cdot \sum_{t\in [T]} \lp D_t[\psi_t] - D[\psi_t] \rp \leq \gamma .$$
\end{cor}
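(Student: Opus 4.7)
The plan is to observe that Corollary \ref{cor:mwu-sq} is essentially a reinterpretation of Theorem \ref{thm:mwu} in probabilistic language, so the proof reduces to a clean identification of the two settings followed by direct application.

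First I would fix an enumeration $X = \{x_1,\ldots,x_m\}$ with $m = |X|$, and identify any distribution $D$ over $X$ with the vector $w \in S^m$ given by $w_i = D(\{x_i\})$, and any bounded function $\psi : X \to [-1,1]$ with the vector $z \in [-1,1]^m$ given by $z_i = \psi(x_i)$. Under this identification the inner product reproduces the expectation: $\langle w, z\rangle = \sum_i D(\{x_i\})\,\psi(x_i) = D[\psi]$. Moreover, the MW update of Figure \ref{fig:mwu} applied to $w^t = D_t$ and $z^t = \psi_t$ produces exactly the reweighting $D_{t+1}(x) \propto D_t(x)(1 - \gamma \psi_t(x))$, which is how we define the $D_t$ in the corollary. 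Similarly, the KL quantity in Theorem \ref{thm:mwu} matches the relative entropy $\KL(D \| D_1) = \sum_x D(x) \ln\bigl(D(x)/D_1(x)\bigr)$ appearing in the corollary.

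Next I would apply Theorem \ref{thm:mwu} with the test vector $w$ chosen to be the distribution $D$. The regret guarantee instantiates to
\[
\sum_{t\in[T]} \bigl(D_t[\psi_t] - D[\psi_t]\bigr) \;\le\; \frac{\KL(D \| D_1)}{\gamma} + \gamma T.
\]
Dividing through by $T$ gives $\frac{1}{T}\sum_{t}(D_t[\psi_t]-D[\psi_t]) \le \KL(D\|D_1)/(\gamma T) + \gamma$. The ``thus'' part of Theorem \ref{thm:mwu} then says that once $T \ge 4\,\KL(D\|D_1)/\gamma^2$, the average regret is at most $\gamma$, which is precisely the conclusion of the corollary.

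There is no substantive obstacle here; the only things to be careful about are (i) verifying that $D$ is absolutely continuous with respect to $D_1$ so that $\KL(D\|D_1)$ is finite (this is implicit in the hypothesis, since otherwise the condition on $T$ is vacuous), and (ii) keeping the translation between ``weight vectors'' and ``distributions'' straight so that the loss vectors are $\psi_t$ and the ``experts'' are domain points $x \in X$, rather than the other way around. Once the identification is made, the corollary is immediate.
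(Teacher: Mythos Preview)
Your proposal is correct and matches the paper's approach exactly: the paper does not give an explicit proof of Corollary~\ref{cor:mwu-sq} but simply states that it is Theorem~\ref{thm:mwu} reinterpreted with weight vectors as distributions over $X$ and loss vectors as statistical query functions, which is precisely the identification you spell out.
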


Now we can describe the upper bound. We will express it in terms of the radius of the set of all distributions $\D$ measured in terms of KL-divergence. Namely, we define
\equ{\KLR(\D) \doteq \min_{D_1 \in S^X } \max_{D\in \D} \KL(D\|D_1). \label{eq:kl-radius-def} }
We observe that $\KLR(\D) \leq \ln(|\D|)$ by taking $D_1 \doteq \frac{1}{|\D|} \sum_{D' \in \D} D'$ to be the uniform combination of distributions in $\D$. We also note that $\KLR(\D) \leq \ln(|X|)$ by taking $D_1$ to be the uniform distribution over $X$. In many search problems it could be much smaller. For example, in distribution-specific learning it is at most $\ln 2$.
\begin{thm}
\label{thm:stat-search-upper}
  For any search problem $\Z$, over a finite class of distributions $\D$ on a finite domain $X$ and $\tau >0$:
    $$\QC(\Z,\STAT(\tau/3)) = O(\SD_\dci(\Z,\tau) \cdot  \log |\D| \cdot \KLR(\D)/\tau^2).$$
\end{thm}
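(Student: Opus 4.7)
The plan is to interleave Multiplicative Weights updates (Cor.~\ref{cor:mwu-sq}) with queries extracted from deterministic covers provided by the $\SD_\dci$ bound. At each step $t$ the algorithm maintains a distribution $D_t\in S^X$ and, using only $\STAT(\tau/3)$ calls, either certifies a correct solution $f_t$ for the true input $D$ or harvests a query function $\psi_t$ on which $D_t$ and $D$ disagree by more than $\tau/3$ that is then fed as a loss to MW. This mirrors the structure of the deterministic upper bound in Lemma~\ref{lem:det-algorithm2queries} but with $D_t$ playing the role of the fixed reference $D_0$ and MW playing the role of ``trying to reconstruct $D$''.

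Concretely, let $d=\SD_\dci(\Z,\tau)$ and choose $D_1\in S^X$ so that $\max_{D\in\D}\KL(D\|D_1)\le\KLR(\D)$ (using the definition in eq.~\eqref{eq:kl-radius-def}; an arbitrarily small slack can be absorbed into the $O(\cdot)$). At iteration $t$, by Def.~\ref{def:sdim1-search} applied to the current $D_t$, there exists $f_t\in\F$ with $\RSD_\dci(\B(\D\setminus\Z_{f_t},D_t),\tau)\le d$. Lemma~\ref{lem:stat-SD-is-rcover} identifies this with the randomized cover size, and eq.~\eqref{eq:rcvr-2-cvr} upgrades it to a deterministic cover $\phi_1^t,\dots,\phi_k^t$ of $\D\setminus\Z_{f_t}$ relative to $D_t$ of size $k=O(d\log|\D|)$. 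Query each $\phi_i^t$ to $\STAT(\tau/3)$ to obtain $v_i^t$. If $|v_i^t-D_t[\phi_i^t]|\le 2\tau/3$ for every $i$, output $f_t$: triangle inequality then yields $|D[\phi_i^t]-D_t[\phi_i^t]|\le\tau$ for all $i$, so the covering property forces $D\in\Z_{f_t}$. Otherwise some cover function satisfies $|v_i^t-D_t[\phi_i^t]|>2\tau/3$, and, after choosing the appropriate sign, $\psi_t=\pm\phi_i^t$ lies in $[-1,1]^X$ and satisfies $D_t[\psi_t]-D[\psi_t]>2\tau/3-\tau/3=\tau/3$; feed $\psi_t$ to MW to produce $D_{t+1}$.

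It remains to bound the number of iterations. Run MW with $\gamma=\tau/6$. If the algorithm failed to halt within $T=\lceil 4\KLR(\D)/\gamma^2\rceil=O(\KLR(\D)/\tau^2)$ rounds, Cor.~\ref{cor:mwu-sq} applied to the true input $D$ would give
\[
\tfrac{1}{T}\sum_{t\in[T]}\bigl(D_t[\psi_t]-D[\psi_t]\bigr)\le\gamma=\tau/6,
\]
contradicting the per-step lower bound of $\tau/3$ established above. Hence the algorithm terminates within $O(\KLR(\D)/\tau^2)$ iterations, each using $O(d\log|\D|)$ queries, for a total of $O(\SD_\dci(\Z,\tau)\cdot\log|\D|\cdot\KLR(\D)/\tau^2)$, as claimed.

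The main obstacle is the joint calibration of three quantities: the oracle tolerance $\tau/3$, the termination threshold $2\tau/3$ on $|v_i^t-D_t[\phi_i^t]|$, and the MW step size $\gamma<\tau/3$. These must be chosen so that the same oracle responses simultaneously (i) suffice to certify a valid $f_t$ via triangle inequality against the strict covering threshold $\tau$, and (ii) witness MW progress strictly exceeding $\gamma$. Minor issues, such as $\inf_f$ in Def.~\ref{def:sdim1-search} not being attained and $D_1$ not exactly attaining $\KLR(\D)$, vanish into additive slacks and the $O(\cdot)$.
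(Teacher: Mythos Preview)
Your proposal is correct and follows essentially the same approach as the paper: both run Multiplicative Weights starting from a KL-center $D_1$, at each step extract a deterministic $\tau$-cover of $\D\setminus\Z_{f_t}$ relative to $D_t$ of size $O(d\log|\D|)$ via Lemma~\ref{lem:stat-SD-is-rcover} and eq.~\eqref{eq:rcvr-2-cvr}, query it with $\STAT(\tau/3)$, and either certify $f_t$ or extract a $\psi_t$ with $D_t[\psi_t]-D[\psi_t]>\tau/3$ to drive the MW update. The only difference is cosmetic: you take $\gamma=\tau/6$ to get a clean strict gap against the per-step progress $\tau/3$, whereas the paper takes $\gamma=\tau/3$ and relies on the strict inequality $>\tau/3$ to derive the same $O(\KLR(\D)/\tau^2)$ iteration bound.
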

\begin{proof}
The key idea of the proof is that ability to distinguish any reference distribution from the input distribution using a query can be used to reconstruct the input distribution $D$ via the multiplicative weights update algorithm. If we fail to distinguish the input distribution from the reference distribution then we can find a valid solution.

Formally, we start with $D_1$ that minimizes the $\KLR(\D)$ as defined in eq.~\eqref{eq:kl-radius-def}. Let $D_t$ denote the distribution at step $t$. By the definition of $d\doteq \SD_\dci(\Z,\tau)$, there exists $f \in \F$ such that $\RSD_\dci(\B(\D\setminus \Z_f,D_t),\tau) \leq d$. By eq.~\eqref{eq:rcvr-2-cvr} we get that  $\icvr(\D\setminus \Z_f,D_t,\tau) \leq d\ln(|\D\sm\Z_f|)$. Let $\phi_1,\ldots,\phi_s$ for $s = \icvr(\D\setminus \Z_f,D_t,\tau)$ be a $1$-cover of $\D\setminus \Z_f$ with tolerance $\tau$. For every $i\in[s]$, we make query $\phi_i$ to $\STAT(\tau/3)$ and let $v_i$ denote the response. If there exists $i$ such that $\left|D_t[\phi_i] - v_i\right| >2\tau/3$, then we define $\psi_t \doteq \phi_i$ if $D_t[\phi_i] > v_i$ and $\psi_t \doteq - \phi_i$, otherwise. We then define $D_{t+1}$ using the update of the MW algorithm on $\psi_t$ with $\gamma=\tau/3$ and go to the next step.  Otherwise (if no such $\phi_i$ exists), we output $f$ as the solution.

We first establish the bounds on the complexity of the algorithm. By the correctness of $\STAT(\tau/3)$ we have that for every update step \equ{\left|D_t[\phi_i] - D[\phi_i]\right| > \frac{2\tau}{3}-\frac{\tau}{3} = \frac{\tau}{3}.\label{eq:mw-updatestrength}} As a consequence, $D_t[\psi_t] - D[\psi_t] >\tau/3$. By Cor.~\ref{cor:mwu-sq}, this implies that there can be at most $T \leq \frac{36\cdot \KL(D \| D_1)}{\tau^2} \leq  \frac{36\ln(|\D|)}{\tau^2}$ such updates. Using the bound on the number of queries in each step, we immediately get the stated bounds on the complexity of the algorithm.

To establish the correctness, we note that at every step, for every $D \in \D\setminus \Z_f$ we are guaranteed to perform an update since there exists a function $\phi_i$ in the cover such that $\left|D_t[\phi_i] - D[\phi_i]\right| >\tau$. This means that we only output a solution $f$ when $D \in \Z_f$, which is exactly the definition of correctness.
\end{proof}

\iffull
\begin{remark}
To simplify the upper-bound we can always replace $\KLR(\D)$ with $\log(|\D|)$ since we already have one such term from eq.~\eqref{eq:rcvr-2-cvr}. 
\end{remark}

\begin{remark}
\label{rem:mw-project}
We can also ensure that the sequence of distributions produced by MW stays within the convex hull of distributions in $\D$ (which we denote by $\conv(\D)$). This can be achieved by performing a projection to $\conv(\D)$ that minimizes KL-divergence (see \cite{AroraHK12} for details). This implies that for the upper bound (and characterization) it is sufficient to have an upper bound on $$\sup_{D_0 \in \conv(\D)} \inf_{f\in \F} \RSD_\dci(\B(\D\setminus \Zf,D_0),\tau).$$ Alternatively, the same effect can be achieved by performing the multiplicative updates on $\conv(\D)$ viewed as a $|\D|$-dimensional simplex of coefficients representing a distribution in $\conv(\D)$. In this case the updates will use the vector $\left(D[\psi_t]\right)_{D \in \D}$ instead of $\psi_t$ itself.
\end{remark}
\fi

\subsection{Randomized dimension for search problems}
\label{sec:stat-search-rand}
To prove lower bounds against randomized SQ algorithms we need a stronger notion that we define below. The main issue is that in the randomized setting the interplay between distribution over queries, distribution over solutions and success probability can be rather complex. In particular, the way that success probability affects the complexity depends strongly on the type of problem. For example, in general decision problems (not just many-vs-one that we already analyzed) only success probability above $1/2$ can have non-trivial complexity. On the other hand, in search problems with (exponentially) large search space the SQ complexity is often high for any inverse polynomial probability of success. To reflect such dependence we parameterize the randomized SQ dimension by success probability $\alpha$.

\begin{defn}\label{def:rsdim1-search}
  Let $\Z$ be a search problem over a set of solutions $\F$ and a class of distributions $\D$ over a domain $X$ and let $\tau>0$.
  For a probability measure $\cP$ over $\F$ and $\alpha > 0$, we denote by $\Z_\cP(\alpha) \doteq \left\{D \in \D \cond \cP(\Z(D)) \geq \alpha \right\}$.
  For a success probability parameter $\alpha$, we define the \textbf{randomized statistical dimension} with $\dci$-discrimination $\tau$ of $\Z$ as
  $$\RSD_\dci(\Z,\tau,\alpha) \doteq \sup_{D_0 \in S^X}\inf_{\cP \in S^\F} \RSD_\dci(\B(\D\setminus \Z_\cP(\alpha),D_0),\tau) .$$
\end{defn}

For $\alpha =1$, $\Z_\cP(\alpha)$ is equal to the intersection of $\Zf$ for $f$ in the support of $\cP$. This set is maximized (and consequently $\RSD_\dci(\B(\D\setminus \Z_\cP(\alpha),D_0),\tau)$ is minimized) when the support of $\cP$ is just a single element. Hence $\RSD_\dci(\Z,\tau,1) = \SD_\dci(\Z,\tau)$ implying that $\RSD$ is a generalization of $\SD$.

We can now prove a lower bound against randomized algorithms using an approach similar to the one we used for decision problems.

\begin{thm}
\label{thm:stat-search-lower-random}
    For any search problem $\Z$, $\tau >0$ and $\beta > \alpha > 0$,
     $$\RQC(\Z,\STAT(\tau),\beta) \geq \RSD_\dci(\Z,\tau,\alpha) \cdot (\beta - \alpha).$$
\end{thm}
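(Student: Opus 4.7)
The plan is to generalize the lower-bound argument from Theorem~\ref{thm:random-algorithm2queries} (the many-vs-one decision case) to arbitrary search problems, using the randomness of the algorithm itself to induce both a distribution $\cP$ over solutions and a randomized $\tau$-cover of a suitable subset of $\D$. Suppose $\A$ is a randomized SQ algorithm that succeeds with probability at least $\beta$ on every $D \in \D$ using at most $q$ queries to $\STAT(\tau)$; the goal is to deduce $q \geq d \cdot (\beta - \alpha)$, where $d = \RSD_\dci(\Z,\tau,\alpha)$. Fix an arbitrary $D_0 \in S^X$ and consider the simulation of $\A$ in which every query $\phi$ is answered by $D_0[\phi]$. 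Let $\cP$ denote the induced distribution over the output of $\A$ (taken over its internal coins) and, for each $D \in \D$, let $p_D$ denote the probability that at least one of the queries $\phi_1,\ldots,\phi_q$ produced in this simulation satisfies $|D[\phi_i]-D_0[\phi_i]| > \tau$.

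The key step is to link $p_D$ with $\cP(\Z(D))$ via the correctness of $\A$. Consider the oracle $\cO_D$ that responds to a query $\phi$ with $D_0[\phi]$ when $|D[\phi]-D_0[\phi]|\leq\tau$ and with $D[\phi]$ otherwise; by construction $\cO_D$ is a valid $\STAT_D(\tau)$ oracle. Conditioning on $\A$'s internal randomness, on the event $E_D$ that the simulation asks no distinguishing query (probability $1-p_D$), the responses of $\cO_D$ agree with those of the simulation at every step, so $\A$ produces identical query sequences and identical outputs in the two executions. Consequently,
$$\beta \;\leq\; \pr_\A\lb \A^{\cO_D}(D)\in\Z(D)\rb \;\leq\; \cP(\Z(D)) + p_D,$$
where the first inequality is the correctness of $\A$ on the valid oracle $\cO_D$ and the second splits on $E_D$ versus $E_D^c$ (using the trace agreement on $E_D$ and the trivial bound on $E_D^c$). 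For every $D\in \D\setminus\Z_\cP(\alpha)$ one has $\cP(\Z(D))<\alpha$, hence $p_D > \beta-\alpha$.

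Finally, extract a randomized $\tau$-cover of $\D\setminus\Z_\cP(\alpha)$ from the simulation: let $\cQ$ be the probability measure over query functions obtained by running the simulation and returning a uniformly chosen query $\phi_i$, $i\in[q]$. For every $D\in \D\setminus\Z_\cP(\alpha)$, averaging gives
$$\pr_{\phi\sim\cQ}\lb |D[\phi]-D_0[\phi]|>\tau\rb \;\geq\; \frac{p_D}{q} \;>\; \frac{\beta-\alpha}{q}.$$
By Lemma~\ref{lem:stat-SD-is-rcover} this yields $\RSD_\dci(\B(\D\setminus\Z_\cP(\alpha),D_0),\tau) \leq q/(\beta-\alpha)$. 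Taking the infimum over $\cP\in S^\F$ (using the $\cP$ we just constructed) and then the supremum over $D_0\in S^X$ produces $\RSD_\dci(\Z,\tau,\alpha) \leq q/(\beta-\alpha)$, which rearranges to the claim. The main subtlety is the coupling argument that equates the traces of the simulation and of $\A^{\cO_D}(D)$ on $E_D$ despite $\A$'s adaptivity; once this is set up carefully, the remainder is linearity-of-expectation style averaging together with the duality captured by Lemma~\ref{lem:stat-SD-is-rcover}.
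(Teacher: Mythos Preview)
Your proof is correct and follows essentially the same approach as the paper: simulate $\A$ with $D_0$-answers, let $\cP$ be the induced output distribution, show $p_D > \beta-\alpha$ for every $D\notin\Z_\cP(\alpha)$ via the correctness of $\A$, and extract a randomized $\tau$-cover by picking a uniformly random query. Your explicit construction of the valid oracle $\cO_D$ and the coupling on $E_D$ makes the key inequality $\beta \leq \cP(\Z(D))+p_D$ more transparent than the paper's one-line justification, but the argument is the same.
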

\begin{proof}
Let $\A$ be the algorithm that solves $\Z$ with probability $\beta$ using $q$ queries to $\STAT(\tau)$.
By the definition of $d\doteq \RSD_\dci(\Z,\tau,\alpha)$, for any $d' <d$, there exists a distribution $D_0$ over $X$ such that for every $\cP \in S^\F$,   $\RSD_\dci(\B(\D\setminus\Z_\cP(\alpha),D_0),\tau) \geq d'$.

We simulate $\A$ by answering any query $\phi:X \rightarrow [-1,1]$ of $\A$ with value $D_0[\phi]$. Let $\phi_1,\phi_2,\ldots,\phi_q$ be the queries asked by $\A$ in this simulation and let $f_0$ denote the solution produced (note that the queries and the solution are random variables that depend on the randomness of $\A$). Now let $D$ be any distribution in $\D$ and define
$$p_D \doteq \pr_\A\lb\exists i\in[q],\ \difp{\phi_i} > \tau\rb .$$
Let $\cP_0$ denote the PDF of $f_0$. If $D \not\in \Z_{\cP_0}(\alpha)$ then $\pr_\A[f_0 \in \Z(D)] < \alpha$.
This implies that $p_D \geq \beta -\alpha$ since with probability $1-p_D$, all the responses in our simulation are valid responses for $\STAT_D(\tau)$. The algorithm $\A$ fails with probability at most $1-\beta$ and therefore it has to output $f_0 \in \Z(D)$ with probability at least $1-p_D - (1-\beta)$. By our assumption, this probability is less than $\alpha$ and therefore $p_D > \beta -\alpha$.

Now, taking $\cQ$ to be the uniform distribution over $\phi_1,\phi_2,\ldots,\phi_q$ ensures that $$\pr_{\phi \sim \cQ}\lb \dif > \tau\rb \geq \frac{p_D}{q} > \frac{\beta - \alpha}{q} .$$ This proves that $\ircvr(\D \sm \Z_\cP(\alpha) ,D_0,\tau) < q/(\beta -\alpha)$. By Lemma \ref{lem:stat-SD-is-rcover}, $\RSD_\dci(\B(\D\setminus \Z_{\cP_0}(\alpha),D_0),\tau)  < q/(\beta -\alpha)$ and thus
$q > d'(\beta - \alpha)$. This holds for every $d' < d$ implying the claim.
\end{proof}

\iffull
\citet{FeldmanGRVX:12} describe a somewhat different way to reduce search problems to decision problems in the randomized case. Their approach is based on upper bounding the fraction of distributions for which any solution can be valid. To the best of our knowledge, this approach does not lead to a characterization for search problems. For comparison, we describe how this approach leads to lower bounds in Lemma \ref{lem:stat-search-lower-random-simple}.
\fi

\remove{
Now observe that if $d= \RSD_\dci(\Z,\tau,\alpha)$ then there exists a reference distribution $D_0$ and a measure $\mu$ over $\D$ such that $\mu(\F) \leq \alpha$ and $\left( \dci\frc(\mu,D_0,\tau)\right)^{-1} \geq d$. Plugging this into Lemma \ref{lem:stat-search-lower-random}, and also using the Yao's minimax principle \cite{Yao:1977} we get a lower bound of $(\beta - \alpha)d$ queries for randomized algorithms.
}

We now demonstrate that $\RSD_\dci(\Z,\tau,\alpha)$ can be used to upper bound the SQ complexity of solving $\Z$ with success probability almost $\alpha$.
The proof is based on a combination of the analysis we used in the deterministic upper bound for search problems with the use of dual random sampling algorithm as in the randomized upper bound for decision problems. At each step of the MW algorithm, the definition of $\RSD_\dci$ guarantees a randomized cover only for a subset of input distributions. At the same time, the definition guarantees that there exists a fixed distribution over solutions that gives, with probability at least $\alpha$, a valid solution for every input distribution that is not covered.
\begin{thm}
\label{thm:stat-search-upper-random}
For any search problem $\Z$  over a finite class of distributions $\D$ on a finite domain $X$, $\tau >0$ and $\alpha > \delta > 0$,
     $$\RQC(\Z,\STAT(\tau/3),\alpha-\delta) = O\lp  \RSD_\dci(\Z,\tau,\alpha) \cdot \frac{\KLR(\D)}{\tau^2} \cdot\log\lp\frac{ \KLR(\D)}{\tau\delta}\rp\rp .$$
\end{thm}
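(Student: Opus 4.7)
The plan is to combine the multiplicative weights (MW) reconstruction of Theorem~\ref{thm:stat-search-upper} with the random sampling from a $\tau$-cover used in Theorem~\ref{thm:random-algorithm2queries}. Set $d \doteq \RSD_\dci(\Z,\tau,\alpha)$, $T \doteq \lceil 36\,\KLR(\D)/\tau^2\rceil$, and $s \doteq \lceil d\ln(T/\delta)\rceil$, and initialize $D_1$ as the minimizer in the definition of $\KLR(\D)$ from eq.~\eqref{eq:kl-radius-def}. At round $t$, Definition~\ref{def:rsdim1-search} guarantees a measure $\cP_t \in S^\F$ with $\RSD_\dci(\B(\D\setminus \Z_{\cP_t}(\alpha),D_t),\tau) \le d$, and Lemma~\ref{lem:stat-SD-is-rcover} converts this into a probability measure $\cQ_t$ over functions $X\to[-1,1]$ such that $\Pr_{\phi\sim\cQ_t}[|D[\phi]-D_t[\phi]|>\tau]\ge 1/d$ for every $D\in\D\setminus\Z_{\cP_t}(\alpha)$. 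The algorithm then draws $\phi_1,\ldots,\phi_s\sim\cQ_t$ independently, queries each to $\STAT(\tau/3)$ to obtain values $v_i$, and if some $i$ satisfies $|v_i-D_t[\phi_i]|>2\tau/3$ it sets $\psi_t\doteq\pm\phi_i$ with sign chosen so that $D_t[\psi_t]-D[\psi_t]>\tau/3$ and performs an MW step with $\gamma=\tau/3$; otherwise it outputs a fresh sample $f\sim\cP_t$.

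For correctness, consider any round $t$ with $D\notin \Z_{\cP_t}(\alpha)$: the $\STAT(\tau/3)$ guarantee (as in eq.~\eqref{eq:mw-updatestrength}) implies that whenever $|D[\phi_i]-D_t[\phi_i]|>\tau$ one must have $|v_i-D_t[\phi_i]|>2\tau/3$, so each $\phi_i$ triggers an update with probability at least $1/d$, and the probability that the algorithm outputs in this round is at most $(1-1/d)^s\le e^{-s/d}\le\delta/T$. A union bound over the at most $T$ rounds then shows that with probability at least $1-\delta$ the algorithm outputs only in a round where $D\in \Z_{\cP_t}(\alpha)$; conditional on such an event, $\Pr_{f\sim\cP_t}[f\in\Z(D)]\ge\alpha$, yielding overall success probability at least $(1-\delta)\alpha\ge \alpha-\delta$. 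Termination within $T$ rounds follows because each executed MW update deterministically satisfies $D_t[\psi_t]-D[\psi_t]>\tau/3=\gamma$, and Corollary~\ref{cor:mwu-sq} forbids more than $\lceil 4\KL(D\|D_1)/\gamma^2\rceil\le T$ such steps for the true $D$.

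The total query count is $sT = O\!\left(d\cdot \KLR(\D)/\tau^2 \cdot \log(\KLR(\D)/(\tau\delta))\right)$, which matches the stated bound. The main subtlety, not present in the deterministic case of Theorem~\ref{thm:stat-search-upper}, is that $\cQ_t$ only covers inputs outside $\Z_{\cP_t}(\alpha)$, so the algorithm cannot directly certify which regime it is in; correctness is recovered by bounding via a union bound the probability of outputting in a ``bad'' round, and $T$ itself is controlled by the MW regret bound applied to the realized sequence of $\psi_t$ (which always witnesses a $\tau/3$ gap, independent of how $\phi_i$ was sampled). Minor technicalities---such as $\cP_t$ attaining the infimum only up to arbitrarily small slack, or the choice of signs for $\psi_t$---can be absorbed into constants without affecting the asymptotic statement.
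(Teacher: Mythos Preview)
Your proposal is correct and essentially identical to the paper's own proof: the same MW reconstruction with $\gamma=\tau/3$, the same per-round random sampling of $s=\Theta(d\log(T/\delta))$ functions from the randomized cover $\cQ_t$ guaranteed by Lemma~\ref{lem:stat-SD-is-rcover}, and the same union-bound over $T=O(\KLR(\D)/\tau^2)$ rounds to control the probability of outputting while $D\notin\Z_{\cP_t}(\alpha)$. The only differences are cosmetic (you write the success probability as $(1-\delta)\alpha\ge\alpha-\delta$ rather than directly $\alpha-\delta$, and you flag the infimum/sign technicalities explicitly).
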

\begin{proof}
We set the initial reference distribution $D_1$ to be $D_1$ that minimizes the $\KLR(\D)$ as defined in eq.~\eqref{eq:kl-radius-def}
and initialize $\F' = \emptyset$.
 Let $T \doteq \frac{36\cdot\KLR(\D)}{\tau^2}$ and $\delta' \doteq \delta/T$.
Let $D_t$ be the current reference distribution.

By Definition \ref{def:rsdim1-search}, $d \doteq \RSD_\dci(\Z,\tau,\alpha)$ implies that there exists a probability measure $\cP_t$ over $\F$
such that $\RSD_\dci(\B(\D\setminus \Z_{\cP_t}(\alpha),D_0),\tau) \leq d$. This, by Lemma \ref{lem:stat-SD-is-rcover}, implies that there exists a measure $\cQ$ over $\pmr^X$ such that for all $D \in \D\setminus \Z_{\cP_t}(\alpha)$, \equ{\pr_{\phi \sim \cQ}\lb \diff{\phi}{D}{D_t}>\tau \rb \geq \fr{d} .\label{eq:good-p}}

For $s = d\ln(1/\delta')$ we draw $s$ independent samples from $\cQ$ and denote them by $\phi_1,\ldots,\phi_s$. For every $i\in[s]$ we make query $\phi_i$ to $\STAT(\tau/3)$. Let $v_i$ denote the response. If there exists $i$ such that $\left|D_t[\phi_i] - v_i\right| > 2\tau/3$ then we define $\psi_t \doteq \phi_i$ if $D_t[\phi_i] > v_i$ and $\psi_t \doteq - \phi_i$, otherwise. We then define $D_{t+1}$ using the update of the MW algorithm on $\psi_t$ with $\gamma=\tau/3$ and go to the next step. Otherwise (if no such $\phi_i$ exists), we choose $f$ randomly according to $\cP_t$ and output it.

We first establish the bounds on the complexity of the algorithm. As in the proof of Theorem \ref{thm:stat-search-upper}, we get that there are at most $\frac{36\cdot \KL(D \| D_1)}{\tau^2}$ update steps. Given our definition of $D_1$ we get an upper bound of $\frac{36\cdot\KLR(\D)}{\tau^2} = T$.
Using the bound on samples in each step, we immediately get the stated bounds on the SQ complexity of the algorithm.

To establish the correctness observe that if at the last step $D \in \Z_{\cP_t}(\alpha)$ then with probability at least $\alpha$ we output $f \in \Z(D)$. This condition is not satisfied only if at some step $t$ we do not make an update even though $D \in \D \sm \Z_{\cP_t}(\alpha)$. By eq.~\eqref{eq:good-p} this happens with probability
\equn{\pr_{(\phi_1,\ldots,\phi_s)\sim \cQ^s}\lb\forall i\in[s],\ \diff{\phi_i}{D}{D_t}\leq \tau \rb \leq \lp 1-\fr{d}\rp^s \leq \delta'  .} Therefore the total probability of this condition ($D \in \Z_{\cP_t}(\alpha)$ at the last step)  is at most $T\delta' = \delta$. Hence the probability of success of our algorithm is at least $\alpha -\delta$.
\end{proof}

\subsection{Special cases: optimizing  and verifiable search}
\label{sec:search-special}
We now show how our characterization can be simplified for verifiable and optimizing search problems (see Sec.~\ref{sec:prelims} for the definition and examples of such problems).
First, recall that in a verifiable search problem, for every $f\in \F$, there is an associated query function $\phi_f: X\rar [0,1]$ such that the search problem $\V$ with parameter $\theta$ is defined as
$$\V_\theta(D) \doteq \left\{f\ \left|\ D[\phi_f] \leq \theta \right.\right\}.$$

To avoid dealing with success probability due to finding a solution we can instead avoid reference distributions for which any solution can pass the verification step. Namely, we define $$\D_\theta \doteq \left\{D \in S^X\ \left|\ \forall f \in \F,\ D[\phi_f] > \theta \right.\right\},$$ or equivalently,
$D\in \D \sm \D_\theta$ if and only if $\V_\theta(D) \neq \emptyset$.
 We then define the following statistical dimension:
\begin{defn}\label{def:sdim-verifiable}
  For $\theta \geq 0$, let $\V$ be a verifiable search problem with parameter $\theta$ over a set of solutions $\F$ and a class of distributions $\D$ over a domain $X$ and let $\tau>0$.
    We define the \textbf{randomized statistical dimension} with $\dci$-discrimination $\tau$ of $\V_\theta$ as
  $$\RSDV_\dci(\V_\theta,\tau) \doteq \sup_{D_0 \in \D_\theta} \RSD_\dci(\B(\D,D_0),\tau) .$$
\end{defn}
This definition gives the following characterization:
\begin{thm}
\label{thm:stat-verifiable-lower}
  Let $\V$ be a verifiable search problem over a class of distributions $\D$. For any $\theta \geq \tau > 0, \beta>0$,
  $$\RQC(\V_{\theta - \tau},\STAT(\tau), \beta) \geq \beta \cdot \RSDV_\dci(\V_\theta,\tau) -1. $$
\end{thm}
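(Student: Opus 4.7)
The plan is to follow the same simulation-and-cover strategy used in Theorems~\ref{thm:random-algorithm2queries} and \ref{thm:stat-search-lower-random}, but to exploit the structure of verifiable search by appending the verification query $\phi_{f_0}$ of the output $f_0$ to the query list. This is what allows us to eliminate the spurious ``found a solution by luck'' term $\alpha$ that appeared in Theorem~\ref{thm:stat-search-lower-random}, at the cost of a single extra query (hence the $-1$ in the bound).

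Concretely, fix any $d' < d \doteq \RSDV_\dci(\V_\theta,\tau)$ and choose $D_0 \in \D_\theta$ with $\RSD_\dci(\B(\D,D_0),\tau) \geq d'$. Let $\A$ be a randomized algorithm solving $\V_{\theta-\tau}$ with probability $\geq \beta$ using $q$ queries to $\STAT(\tau)$. Simulate $\A$ by answering each query $\phi_i$ with $D_0[\phi_i]$; let $\phi_1,\ldots,\phi_q$ and $f_0$ be the resulting (random) queries and output, and set $\phi_0 \doteq \phi_{f_0}$. For $D \in \D$, let
\[ p_D \doteq \Pr_\A\!\left[\exists\, i\in\{0,1,\ldots,q\}:\ |D[\phi_i]-D_0[\phi_i]| > \tau\right]. \]
The key claim is $p_D \geq \beta$ for every $D \in \D$. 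To see this, consider instead running $\A$ with the actual input $D$ and an adversarial valid oracle that returns $D_0[\phi]$ whenever $|D_0[\phi]-D[\phi]|\le\tau$ and an arbitrary valid answer otherwise; conditioned on the event that all queries (including $\phi_0=\phi_{f_0}$) land in the first case, this run coincides with our simulation. If $p_D<\beta$, then with positive probability we both simulate faithfully and $\A$ succeeds, so $f_0\in \V_{\theta-\tau}(D)$. But faithful simulation on the final query $\phi_0 = \phi_{f_0}$ gives $|D_0[\phi_{f_0}]-D[\phi_{f_0}]|\le \tau$, and combining with $D[\phi_{f_0}]\le\theta-\tau$ yields $D_0[\phi_{f_0}]\le \theta$, contradicting $D_0\in\D_\theta$.

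Finally, take $\cQ$ to be the distribution over query functions obtained by running $\A$ in the simulation and sampling one of $\phi_0,\ldots,\phi_q$ uniformly. Then for every $D \in \D$,
\[ \Pr_{\phi\sim\cQ}\!\left[|D[\phi]-D_0[\phi]|>\tau\right] \;\geq\; \frac{p_D}{q+1} \;\geq\; \frac{\beta}{q+1}, \]
so $\ircvr(\D,D_0,\tau) \leq (q+1)/\beta$. By Lemma~\ref{lem:stat-SD-is-rcover} this upper bounds $\RSD_\dci(\B(\D,D_0),\tau)$, hence $d' \leq (q+1)/\beta$, and letting $d'\to d$ gives $q \geq \beta d - 1$.

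The main obstacle is the faithful-simulation argument in the middle paragraph, which requires correctly handling worst-case randomized SQ semantics: the adversarial oracle has to be chosen so that on the ``good'' randomness the simulation output and the actual run agree, while on the remaining randomness the oracle is still valid for $D$ so that the $\beta$ success guarantee applies. The rest is a routine reduction from search to randomized covers via the duality of Lemma~\ref{lem:stat-SD-is-rcover}.
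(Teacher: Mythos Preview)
Your proof is correct and follows essentially the same approach as the paper: both append the verification query $\phi_{f_0}$ to the query list and use $D_0\in\D_\theta$ to force a contradiction with any ``simulated'' success, yielding $p_D\geq\beta$ and the bound $q\geq\beta d-1$. The paper packages this more modularly by defining an augmented algorithm $\A'$ that asks $\phi_f$ as an actual $(q{+}1)$-st query and outputs $\bot$ when $v>\theta$, then invokes Theorem~\ref{thm:stat-search-lower-random} directly (with $\alpha=0$ since the simulated output is always $\bot$); your inlined version is equivalent, though note that your $\phi_0$ is not literally asked to the oracle by $\A$, so the phrase ``all queries (including $\phi_0$)'' is slightly informal---the paper's wrapper $\A'$ sidesteps this nit.
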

\begin{proof}
Let $\A$ be an algorithm that solves $\V_{\theta - \tau}$. Let $\A'$ be an algorithm that runs $\A$ then, given a solution $f$ output by $\A$ asks query $\phi_f$ to $\STAT(\tau)$. If the answer $v > \theta$ then it fails (say outputs $\bot \not\in \F$). Clearly, $\A'$ solves $\V_{\theta - \tau}$ with the same success probability $\beta$ as $\A$. We now apply the analysis from Thm.~\ref{thm:stat-search-lower-random} with $D_0 \in \D_\theta$ to $\A'$. By definition of $\D_\theta$, we know that for every $f\in \F$, $D_0[\phi_f] > \theta$. Therefore the value $v$ that $\A'$ gets in our simulation to its last verification query satisfies $v > \theta$ and thus the algorithm will fail. This means that
for every distribution $D$, $\A'$ is successful with probability at least $\beta$. From here the analysis is identical to that in Thm.~\ref{thm:stat-search-lower-random}.
\end{proof}

\begin{thm}
\label{thm:stat-verifiable-upper}
   Let $\V$ be a verifiable search problem over a class of distributions $\D$. For any $\theta\geq 0, \tau > 0, \delta > 0$ $$\RQC(\V_{\theta + \tau},\STAT(\tau/3),1-\delta) = \tilde{O}\lp \RSD_\dci(\V_\theta,\tau) \cdot \frac{\KLR(\D)}{\tau^2} \cdot\log(1/\delta)\rp .$$
\end{thm}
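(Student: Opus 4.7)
The plan is to adapt the multiplicative-weights reconstruction from the proof of Theorem \ref{thm:stat-search-upper-random}, using the verification query $\phi_f$ to replace the fractional-solution mechanism entirely and drive the failure probability down to $\delta$ without paying an $\alpha$-slack. I will initialize $D_1$ to achieve $\KLR(\D)$, set $T = 36\,\KLR(\D)/\tau^2$, $\delta' = \delta/(T+1)$, $d = \RSD_\dci(\V_\theta,\tau)$, and iterate MW with step size $\gamma = \tau/3$ for at most $T+1$ rounds.

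At each step $t$, I will split on whether the current reference distribution $D_t$ lies in $\D_\theta$. In the first case ($D_t \notin \D_\theta$), some $f \in \F$ satisfies $D_t[\phi_f] \leq \theta$; a single query $\phi_f$ to $\STAT(\tau/3)$ returns a value $v$, and if $v \leq \theta + 2\tau/3$ I output $f$, which is a valid $\V_{\theta+\tau}$-solution since $D[\phi_f] \leq v + \tau/3 \leq \theta + \tau$, while otherwise $D[\phi_f] - D_t[\phi_f] > \tau/3$ and I perform an MW update with $\psi_t = -\phi_f \in [-1,1]$. In the second case ($D_t \in \D_\theta$), Definition \ref{def:sdim-verifiable} together with Lemma \ref{lem:stat-SD-is-rcover} supplies a probability measure $\cQ$ on $[-1,1]^X$ with $\pr_{\phi\sim\cQ}[\diff{\phi}{D}{D_t}>\tau] \geq 1/d$ for every $D \in \D$; I will draw $s = d\ln(1/\delta')$ independent queries from $\cQ$, query each to $\STAT(\tau/3)$, and MW-update using the first $\phi_i$ whose response satisfies $|v_i - D_t[\phi_i]| > 2\tau/3$ (signed so that $D_t[\psi_t]-D[\psi_t]>\tau/3$). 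If no such query is found in this case the algorithm outputs $\bot$ and fails.

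For the analysis, Case~1 outputs are always correct by construction; Case~2 fails at a given step with probability at most $(1-1/d)^s \leq \delta'$, since accuracy of $\STAT(\tau/3)$ turns any $\phi$ with $|D[\phi] - D_t[\phi]| > \tau$ into an observable distinguisher. Every MW update satisfies $D_t[\psi_t] - D[\psi_t] > \tau/3$, so Corollary \ref{cor:mwu-sq} caps the number of updates at $T$ and hence the number of iterations at $T+1$. A union bound over iterations yields total failure probability at most $(T+1)\delta' = \delta$ and total query complexity $O((T+1) s) = O(d \cdot \KLR(\D)/\tau^2 \cdot \log(\KLR(\D)/(\tau\delta)))$, matching the stated $\tilde O$ bound. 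The main subtlety I expect is Case~1: the dimension $\RSD_\dci(\V_\theta,\tau)$ only supplies a cover when the reference lies in $\D_\theta$, so when the MW iterate drifts out of $\D_\theta$ the verifiability of $\V$ (not the dimension) must carry the argument, and the key observation making it work is that a single query $\phi_f$ always either certifies a valid solution or supplies a legitimate MW update.
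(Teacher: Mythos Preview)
Your proposal is correct and follows essentially the same approach as the paper: run the multiplicative-weights reconstruction from Theorem~\ref{thm:stat-search-upper-random}, and at each step either use the randomized cover guaranteed by the dimension (when $D_t\in\D_\theta$) or use a single verification query $\phi_f$ (when $D_t\notin\D_\theta$) to either output a valid solution for $\V_{\theta+\tau}$ or obtain an MW update of strength at least $\tau/3$. Your case split, the update analysis, the union bound over at most $T+1$ rounds, and the resulting query count all match the paper's argument; in fact your write-up has the two cases on $D_t$ oriented correctly, whereas the paper's proof text appears to have them swapped (the cover is available precisely when $D_t\in\D_\theta$, and the verification query $\phi_f$ with $D_t[\phi_f]\le\theta$ exists precisely when $D_t\notin\D_\theta$).
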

\begin{proof}
We perform the same basic algorithm as in Thm.~\ref{thm:stat-search-upper-random}. Note that as long as $D_t \not\in \D_\theta$ the characterization guaranteed that we will find a distinguishing query with probability at leat $1-\delta'$ for all $D\in \D$. If we reach $D_t \in \D_\theta$, then we know that there exists a function $\phi_f:X\rar [0,1]$ such that $D_t[\phi_f] \leq \theta$. We ask the query $\phi_f$ to $\STAT(\tau/3)$. If the response $v \leq \theta + 2\tau/3$ then we output $f$ as the solution and stop. Note that this implies that $D[\phi_f] \leq \theta+\tau$ and therefore $f$ is a valid solution to $\V_{\theta +\tau}$. Otherwise, we update the distribution using $\psi_t = - \phi_f$ and go to the next step. In this case $D[\phi_f]  \geq \theta + \tau/3$ and hence $D_t[\psi_t] -  D[\psi_t] \geq \tau/3$. Therefore the same bound on the number of iterations applies and the number of queries grows just by one in every round.
\end{proof}
In most settings, verifiable search with parameter $\theta$ requires $\tau < \theta/2$ and therefore we get a characterization up to at most constant factor increase in the threshold $\theta$.

We now deal with linear optimizing search problems. Recall that in a linear optimizing search problem every $f \in \F$ is associated with a function $\phi_f:X\rar [0,1]$ and for $\eps > 0$, $$\Z_\eps(D) \doteq \left\{h\ \left|\ D[\phi_h] \leq \min_{f \in \F}\{D[\phi_f]\} +\eps\right.\right\}.$$
Solving an $\eps$-optimizing linear search problem is essentially equivalent to solving the range of associated verifiable search problems. Therefore we characterize $\eps$-optimizing search using the statistical dimension of verifiable search problems.
\begin{defn}\label{def:sdim-opt-via-verifiable}
Let $\Z$ be a linear optimizing search over a class of distributions $\D$ and set of solutions $\F$.
    For $\eps>0$, we define the \textbf{randomized statistical dimension} with $\dci$-discrimination $\tau$ of $\Z_\eps$ as
  $$\RSDV_\dci(\Z_\eps,\tau) \doteq  \sup_{\theta \in [0,1],\ D_0 \in \D_{\theta+\eps}} \RSD_\dci(\B(\D \sm \D_\theta,D_0),\tau) .$$
\end{defn}

For every distribution $D\in \D \sm \D_\theta$, there exists a solution $f$ such that $D[\phi_f]\leq \theta$. This means that solving $\Z_\eps$ restricted to $\D \sm \D_\theta$ requires finding $h\in \F$ such that $D[\phi_h] \leq \theta+\eps$. This means that by using our lower bound for $\V_{\theta+\eps}$ we get the following lower bound for $\Z_\eps$.
\begin{thm}
\label{thm:stat-optimization-via-verification-lower}
  Let $\Z$ be a linear optimizing  search problem over a class of distributions $\D$. For any $\eps \geq \tau > 0, \beta>0$,
  $$\RQC(\Z_{\eps - \tau},\STAT(\tau), \beta) \geq \beta \cdot \RSDV_\dci(\Z_\eps,\tau) -1. $$
\end{thm}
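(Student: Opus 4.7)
The plan is to reduce the linear optimizing search problem $\Z_{\eps-\tau}$ to a family of verifiable search problems, one per threshold $\theta \in [0,1]$, and then invoke Theorem~\ref{thm:stat-verifiable-lower} on each of them.

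First, I would fix any $\theta \in [0,1]$ and observe that for every input distribution $D \in \D \sm \D_\theta$, by the definition of $\D_\theta$ there exists $f^* \in \F$ with $D[\phi_{f^*}] \leq \theta$, so $\min_{f\in\F} D[\phi_f] \leq \theta$. Consequently, any solution $h \in \Z_{\eps-\tau}(D)$ satisfies $D[\phi_h] \leq \theta + \eps - \tau$, which is exactly the condition $h \in \V_{\theta+\eps-\tau}(D)$. Hence any algorithm $\A$ that solves $\Z_{\eps-\tau}$ on $\D$ with success probability $\beta$ also solves the verifiable search problem $\V_{\theta+\eps-\tau}$, when the underlying input class is restricted to $\D \sm \D_\theta$, with the same success probability $\beta$ and using exactly the same queries (the algorithm never sees the parameter $\theta$).

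Next, I would apply Theorem~\ref{thm:stat-verifiable-lower} to this verifiable problem, taking the underlying input class to be $\D \sm \D_\theta$ and the verification threshold to be $\theta' = \theta + \eps$ (so that $\theta' - \tau = \theta + \eps - \tau$). Unfolding the definition of $\RSDV_\dci$ from Def.~\ref{def:sdim-verifiable}, and noting that $\D_{\theta+\eps}$ is a subset of $S^X$ whose definition does not depend on the input class, this yields
$$\RQC(\Z_{\eps-\tau},\STAT(\tau),\beta) \geq \beta \cdot \sup_{D_0 \in \D_{\theta+\eps}} \RSD_\dci(\B(\D \sm \D_\theta, D_0), \tau) - 1.$$
This inequality holds for every $\theta \in [0,1]$, so I would take the supremum over $\theta$ on the right-hand side. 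By Definition~\ref{def:sdim-opt-via-verifiable}, this supremum equals $\RSDV_\dci(\Z_\eps,\tau)$, completing the proof.

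There is no substantial obstacle: the reduction is essentially forced once one notices that a valid $(\eps-\tau)$-approximate minimizer on $\D \sm \D_\theta$ automatically certifies the associated verifiable problem with threshold $\theta + \eps - \tau$. The one subtlety is to correctly line up the thresholds and to observe that the set $\D_{\theta+\eps}$ appearing in the reference-distribution supremum is unchanged whether we work with $\D$ or with its subclass $\D \sm \D_\theta$, since $\D_{\theta+\eps}$ is defined relative to $S^X$ rather than to the input class.
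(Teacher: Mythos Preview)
Your proposal is correct and follows essentially the same approach as the paper: both reduce $\Z_{\eps-\tau}$ on the restricted class $\D\setminus\D_\theta$ to the verifiable problem $\V_{\theta+\eps-\tau}$, invoke Theorem~\ref{thm:stat-verifiable-lower} with threshold $\theta+\eps$, and take the supremum over $\theta$. Your explicit remark that $\D_{\theta+\eps}$ is defined over $S^X$ (hence unchanged under restriction of the input class) is exactly the point needed to make the application of Theorem~\ref{thm:stat-verifiable-lower} go through cleanly.
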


In the opposite direction: If we can solve the verifiable search problem for every $\theta$ then, by using binary search, we can find the minimum (up to $\tau/4$) $\theta$ for which there is a (verifiable) solution. This increases the complexity of the algorithm by a factor of $\log(4/\tau)$. Now using Theorem \ref{thm:stat-verifiable-upper} with tolerance $3\tau/4$ we obtain:
\begin{thm}
\label{thm:stat-optimization-via-verification-upper}
  Let $\Z$ be a linear optimizing  search problem over a class of distributions $\D$. For any $\eps \geq \tau > 0, \delta>0$,
 $$\RQC(\Z_{\eps + \tau},\STAT(\tau/4),1-\delta) = \tilde{O}\lp \RSD_\dci(\Z_\eps,\tau) \cdot \frac{\KLR(\D)}{\tau^2} \cdot\log(1/\delta)\rp .$$
\end{thm}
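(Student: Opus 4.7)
The plan is to reduce the optimization problem $\Z_{\eps+\tau}$ to $k = O(\log(1/\tau))$ instances of verifiable search and glue them together by a binary search on the threshold. Let $\theta^* \doteq \min_{f \in \F} D[\phi_f]$. Any $h$ with $D[\phi_h] \leq \theta^* + \tau$ is automatically in $\Z_{\eps+\tau}(D)$ (since $\eps \geq 0$), so it suffices to locate $\theta^*$ up to additive precision $\tau/4$ and then invoke the verifiable search routine one last time.

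In binary-search round $i$, with candidate $\theta_i \in [0,1]$, I would invoke the algorithm of Thm.~\ref{thm:stat-verifiable-upper} on $\V_{\theta_i + 3\tau/4}$ with its inner tolerance parameter set to $\tau' \doteq 3\tau/4$ and per-round failure probability $\delta' \doteq \delta/k$. That theorem then uses the oracle $\STAT(\tau'/3) = \STAT(\tau/4)$ promised by our statement, with query cost $\tilde O\bigl(\RSD_\dci(\V_{\theta_i}, 3\tau/4) \cdot \KLR(\D)/\tau^2 \cdot \log(1/\delta')\bigr)$. By monotonicity of $\RSD_\dci$ in the discrimination parameter and Definition~\ref{def:sdim-opt-via-verifiable} (the optimizing dimension takes a supremum over exactly the reference distributions in $\D_{\theta_i + \eps}$ that arise here), this cost is at most $\tilde O\bigl(\RSDV_\dci(\Z_\eps, \tau) \cdot \KLR(\D)/\tau^2 \cdot \log(k/\delta)\bigr)$. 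Multiplying through the $k$ rounds, the extra $\log(1/\tau)$ factor is absorbed into $\tilde O(\cdot)$, yielding the stated bound.

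The binary search itself maintains an interval $[\theta_L, \theta_H]$, initialized to $[0,1]$, and tests $\theta_i = (\theta_L + \theta_H)/2$ each round: on \emph{success} (the inner algorithm returns some $h$) we set $\theta_H \leftarrow \theta_i$ and cache $h$; on \emph{failure} we set $\theta_L \leftarrow \theta_i$. We terminate once $\theta_H - \theta_L \leq \tau/4$ and output the cached $h$ from the smallest successful $\theta_i$. Since the inner algorithm can output only $h$ with $D[\phi_h] \leq \theta_i + 3\tau/4$, and since for $\theta_i \geq \theta^*$ some such solution always exists, the smallest successful $\theta_i$ produced by the search is at most $\theta^* + \tau/4$, so the final output satisfies $D[\phi_h] \leq \theta^* + \tau$ as required. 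The main (though minor) obstacle is that Thm.~\ref{thm:stat-verifiable-upper} only guarantees correctness while the MW reference is in $\D \sm \D_{\theta_i}$; when $\theta_i$ is too small for any $h$ to pass verification we simply treat exhaustion of the query budget as the failure signal, and a union bound at rate $\delta/k$ across the $k$ rounds yields overall success probability at least $1-\delta$.
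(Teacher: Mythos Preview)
Your approach is exactly the paper's: binary search over the threshold, invoking the verifiable-search upper bound (Thm.~\ref{thm:stat-verifiable-upper}) with inner tolerance $3\tau/4$ so that the oracle becomes $\STAT(\tau/4)$, and absorbing the $O(\log(1/\tau))$ rounds into the $\tilde O$.

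There is one genuine gap in your justification, however. You write that the per-round cost is bounded by $\RSDV_\dci(\Z_\eps,\tau)$ because ``the optimizing dimension takes a supremum over exactly the reference distributions in $\D_{\theta_i+\eps}$ that arise here.'' This is not quite right: the verifiable-search routine at threshold $\theta_i$ uses reference distributions $D_t\in\D_{\theta_i}$ and needs a cover of \emph{all} of $\D$, whereas Definition~\ref{def:sdim-opt-via-verifiable} (matching $D_0\in\D_{\theta_i}$ by taking $\theta=\theta_i-\eps$) only bounds $\RSD_\dci(\B(\D\sm\D_{\theta_i-\eps},D_0),\tau)$, i.e.\ a cover of the \emph{subset} $\D\sm\D_{\theta_i-\eps}$. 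So you cannot invoke Thm.~\ref{thm:stat-verifiable-upper} as a black box and then compare dimensions.

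The fix is small: run the same MW loop but draw distinguishing queries from the cover guaranteed by $\RSDV_\dci(\Z_\eps,\tau)$ for $\D\sm\D_{\theta_i-\eps}$. This cover is only guaranteed to work when the true input $D$ lies in $\D\sm\D_{\theta_i-\eps}$, i.e.\ when $\theta_i\geq\theta^*+\eps$. Consequently your binary search is guaranteed to report ``success'' only for $\theta_i\geq\theta^*+\eps$ (not $\theta_i\geq\theta^*$ as you claim), and the smallest successful $\theta_i$ is at most $\theta^*+\eps+\tau/4$. The cached $h$ then satisfies $D[\phi_h]\leq(\theta^*+\eps+\tau/4)+3\tau/4=\theta^*+\eps+\tau$, which is precisely what $\Z_{\eps+\tau}$ requires. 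Since any $h$ returned in a ``success'' round still passes verification regardless of whether the cover applied, the rest of your binary-search analysis and union bound go through unchanged.
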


\remove{
\begin{remark}
\label{rem:improve-mw-bound}
Note that we can easily combine the analysis of Thm.~\ref{thm:stat-decision-upper} with the MW analysis in this proof. Namely, we can restrict the analysis to the set of distributions $\D_i$ that have not been eliminated by one of the queries. Then each query either eliminates a $\tau/(6d)$ fraction of $\D_i\setminus \Z^{-1}(f_t)$ or leads to a MW update. This can be used to improve the bounds in several ways. For example, the number of elimination updates cannot exceed $|\F| \cdot 6d\ln(|\D|)/\tau$ since each time at least a fraction $\tau/(6d)$ of distributions incompatible with one of the solutions is eliminated. Therefore we will get a total upper bound of $O(\max\{d|\F|,1/\tau\} \cdot \log(|\D|) /\tau)$. This bound can be a substantial improvement for problems with small solution set (\eg distribution testing).
\end{remark}
}

\section{Characterizing the power of $\VSTAT$}
\label{sec:vstat-power}
Our main goal is to accurately characterize the power of the more involved (but also more faithful) $\VSTAT$ oracle. Unfortunately, the discrimination operator that corresponds to $\VSTAT$ is rather inconvenient to analyze directly. In particular, unlike $\STAT$, it is not symmetric for the purposes of distinguishing between two distributions. That is, if $p = D[\phi]$ and $p_0=D_0[\phi]$ then $p$ can be a valid answer of $\VSTAT_{D_0}(n)$ to $\phi$ whereas $p_0$ is not a valid answer of $\VSTAT_{D}(n)$. We show that the analysis can be greatly simplified by introducing an oracle that is equivalent (up to a factor of 3) to $\VSTAT$ while being symmetric. As a result, it behaves almost in the same way in our characterization. This allows us to directly map results from Sections \ref{sec:stat-decision} and  \ref{sec:stat-search} to $\VSTAT$.

\begin{defn}
For $\tau > 0$ and distribution $D$, a statistical query oracle $\vSTAT_D(\tau)$ is an oracle that given as input any function $\phi : X \rightarrow [0,1]$ returns a value $v$ such that $\left|\sqrt{v} - \sqrt{D[\phi]}\right|\leq \tau$.
\end{defn}
We prove the following equivalence between $\vSTAT$ and $\VSTAT$.
\begin{lem}
\label{lem:vstat-reduction}
Any query $\phi:X\rar [0,1]$ to $\vSTAT_D(\tau)$ can be answered using the response to a query $\phi$ for $\VSTAT_D(1/\tau^2)$. Any query $\phi:X\rar [0,1]$ to $\VSTAT_D(n)$ can be answered using the response to a query $\phi$ for $\vSTAT_D(1/(3\sqrt{n}))$.
\end{lem}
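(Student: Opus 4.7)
The plan is to handle the two directions separately by elementary algebra on the relationship $\sqrt{v}-\sqrt{p} = (v-p)/(\sqrt{v}+\sqrt{p})$, together with the basic inequality $(\sqrt{v}-\sqrt{p})^2 \leq |v-p|$ that holds whenever $v,p \geq 0$ (obtained by writing $(\sqrt{v}-\sqrt{p})^2 \leq |\sqrt{v}-\sqrt{p}|(\sqrt{v}+\sqrt{p}) = |v-p|$). In both directions the forwarded query to the other oracle is $\phi$ itself, so only accuracy needs to be verified; I will also silently clip the returned value to $[0,1]$ so that taking a square root is legitimate, using that clipping can only decrease the error when $p\in[0,1]$.

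For the first direction, given $\phi$ I submit it to $\VSTAT_D(1/\tau^2)$ and receive $v$ with $|v-p| \leq \max\{\tau^2, \tau\sqrt{p}\}$, where $p=D[\phi]$. I split on whether $p\geq \tau^2$ or $p<\tau^2$. If $p\geq\tau^2$, then $|v-p|\leq \tau\sqrt{p}$ and $\sqrt{v}+\sqrt{p}\geq \sqrt{p}$, so
\[
|\sqrt{v}-\sqrt{p}| \;=\; \frac{|v-p|}{\sqrt{v}+\sqrt{p}} \;\leq\; \frac{\tau\sqrt{p}}{\sqrt{p}} \;=\; \tau.
\]
If $p<\tau^2$, then $|v-p|\leq \tau^2$, and by the inequality $(\sqrt{v}-\sqrt{p})^2\leq |v-p|$ I obtain $|\sqrt{v}-\sqrt{p}|\leq \tau$. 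Hence $v$ is a valid response of $\vSTAT_D(\tau)$.

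For the second direction, given $\phi$ I submit it to $\vSTAT_D(1/(3\sqrt{n}))$ and receive $v$ with $|\sqrt{v}-\sqrt{p}|\leq 1/(3\sqrt{n})$; I will return $v$ itself as the answer to $\VSTAT_D(n)$. Writing $\sqrt{v}=\sqrt{p}+\eta$ with $|\eta|\leq 1/(3\sqrt n)$ gives $v-p = 2\eta\sqrt{p}+\eta^2$, so
\[
|v-p| \;\leq\; \frac{2\sqrt{p}}{3\sqrt{n}} + \frac{1}{9n}.
\]
Case splitting on $p\geq 1/n$ versus $p<1/n$ lets me absorb the quadratic term into the dominant one: when $p\geq 1/n$ I bound $1/(9n)\leq \sqrt{p/n}/9$ and conclude $|v-p|\leq (2/3+1/9)\sqrt{p/n}\leq \sqrt{p/n}$; when $p<1/n$ I bound $2\sqrt{p}/(3\sqrt{n})\leq 2/(3n)$ and conclude $|v-p|\leq 2/(3n)+1/(9n)<1/n$. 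Thus $v$ satisfies the accuracy guarantee of $\VSTAT_D(n)$.

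The argument is essentially routine once the right algebraic identity is isolated; the only mild subtlety is the need to handle the two accuracy regimes of $\VSTAT$ ($1/n$ versus $\sqrt{p/n}$) separately, which is precisely what motivates the factor of $3$ (and the square) chosen in the tolerance correspondence. No deeper obstacle arises, and the constants are essentially forced by the small-$p$ regime where the quadratic $\eta^2$ term must be dominated.
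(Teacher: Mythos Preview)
Your proof is correct and follows essentially the same approach as the paper: both directions rely on the identity $v-p=(\sqrt v-\sqrt p)(\sqrt v+\sqrt p)$ together with a two-case analysis, and your treatment of the second direction (expanding $(\sqrt p+\eta)^2$ and splitting on $p\gtrless 1/n$) is in fact a bit more transparent than the paper's version, which instead bounds $\sqrt v+\sqrt p$ via a case split on $\sqrt v$ versus $2\sqrt p$.
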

Note that when $p\leq 1/2$, $\sqrt{p}$ is equal (up to a factor of 2) to the standard deviation of the Bernoulli random variable with bias $p$ (which is $\sqrt{p(1-p)}$). Therefore this equivalence implies the following additional interpretation for the accuracy of $\VSTAT$. It returns any value $v$ as long as the standard deviation of the Bernoulli random variable with bias $v$ differs by at most $\frac{1}{\sqrt{n}}$ (up to constant factors) from the standard deviation of the Bernoulli random variable with bias $p$. Making this statement precise would require defining $\vSTAT(\tau)$ as returning $v$ such that $|\sqrt{v(1-v)} - \sqrt{p(1-p)}| \leq \tau$. As in the case of $\VSTAT$ (which we discussed in Remark \ref{rem:vstat-simplify}), this version is equivalent (up to a factor of two) to our simpler definition.

\begin{lem}
\label{lem:vstat2root}
For any $p,\tau \in [0,1]$, let $v\in [0,1]$ be any value such that $|v-p| \leq \max\left\{\tau^2,\sqrt{p}\tau\right\}$. Then $|\sqrt{v}-\sqrt{p}| \leq \tau$.

For any $p,\tau \in [0,1]$, let $v\in[0,1]$ be any value such that $|\sqrt{v}-\sqrt{p}| \leq \tau/3$. Then $|v-p| \leq \max\{\tau^2,\sqrt{p}\tau\}$.
\end{lem}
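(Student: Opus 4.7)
The plan is to reduce both directions to the algebraic identity $|v-p| = |\sqrt{v}-\sqrt{p}|\,(\sqrt{v}+\sqrt{p})$, which holds for all $v,p\ge 0$. This identity is the only real tool needed; the only art is in book-keeping the two regimes of the $\max$.

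For the first statement, I would split into two cases depending on which branch of the $\max$ is active. If $|v-p|\le \sqrt{p}\tau$, then using the identity gives
\[
|\sqrt{v}-\sqrt{p}| = \frac{|v-p|}{\sqrt{v}+\sqrt{p}} \le \frac{\sqrt{p}\tau}{\sqrt{p}+\sqrt{v}}\le \tau
\]
(the case $p=0$ is trivial because then $v=0$ as well). If instead $|v-p|\le \tau^2$, I would use the elementary fact $(\sqrt{v}-\sqrt{p})^2 \le |v-p|$, which one can verify directly by considering the signs of $v-p$: in either case the inequality reduces to the trivial $\min\{v,p\}\le\sqrt{vp}$. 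Taking square roots gives $|\sqrt{v}-\sqrt{p}|\le \tau$.

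For the second statement, assume $|\sqrt{v}-\sqrt{p}|\le \tau/3$. Then $\sqrt{v}\le \sqrt{p}+\tau/3$, so by the same identity
\[
|v-p| = |\sqrt{v}-\sqrt{p}|\,(\sqrt{v}+\sqrt{p}) \le \tfrac{\tau}{3}\,\bigl(2\sqrt{p}+\tfrac{\tau}{3}\bigr) = \tfrac{2\sqrt{p}\,\tau}{3}+\tfrac{\tau^2}{9}.
\]
Now I split on which branch of $\max\{\tau^2,\sqrt{p}\tau\}$ dominates. If $\sqrt{p}\ge \tau$, then $\tau^2\le \sqrt{p}\tau$ and $\tfrac{2\sqrt{p}\tau}{3}+\tfrac{\tau^2}{9}\le \tfrac{2\sqrt{p}\tau}{3}+\tfrac{\sqrt{p}\tau}{9}=\tfrac{7}{9}\sqrt{p}\tau\le \sqrt{p}\tau$. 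If $\sqrt{p}<\tau$, then $\sqrt{p}\tau<\tau^2$ and $\tfrac{2\sqrt{p}\tau}{3}+\tfrac{\tau^2}{9}< \tfrac{2\tau^2}{3}+\tfrac{\tau^2}{9}=\tfrac{7}{9}\tau^2\le \tau^2$. Either way $|v-p|\le \max\{\tau^2,\sqrt{p}\tau\}$.

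There is essentially no obstacle here — the main thing to watch is that the constant $1/3$ in the hypothesis of the second statement is precisely what buys enough slack to absorb both the $2\sqrt{p}\tau/3$ term and the $\tau^2/9$ residue into a single $\max$. A looser constant (say $1/2$) would still give a statement but with a larger constant in the conclusion; $1/3$ is chosen so that the bound matches the original $\VSTAT$ definition exactly and thereby yields the factor-of-$3$ equivalence claimed in Lemma~\ref{lem:vstat-reduction}.
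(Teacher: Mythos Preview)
Your proof is correct and hinges on the same factorization $|v-p|=|\sqrt v-\sqrt p|\,(\sqrt v+\sqrt p)$ that the paper uses. The only differences are organizational: for the first part the paper argues by contradiction while you split directly on the active branch of the $\max$; for the second part the paper reduces to $v>p$ and case-splits on whether $\sqrt v\le 2\sqrt p$ to bound $\sqrt v+\sqrt p\le\max\{3\tau,3\sqrt p\}$, whereas you bound $\sqrt v+\sqrt p\le 2\sqrt p+\tau/3$ once and then split on $\sqrt p$ versus $\tau$. Your bookkeeping for the second part is slightly more streamlined, but the two arguments are essentially the same.
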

\begin{proof}
First part: Assuming for the sake of contradiction that $|\sqrt{v}-\sqrt{p}| > \tau$ we get $$|v-p| = |\sqrt{v}-\sqrt{p}| \cdot (\sqrt{v}+\sqrt{p}) > (\sqrt{v}-\sqrt{p})^2 > \tau^2 $$ and
$$|v-p| = |\sqrt{v}-\sqrt{p}| \cdot (\sqrt{v}+\sqrt{p}) > \tau \cdot \sqrt{p} .$$ This contradicts the definition of $v$.

Second part: We first note that it is sufficient to prove this statement when $v >p$ (the other case can be obtained by swapping the values of $p$ and $v$). Next, observe that it is sufficient to prove that $\sqrt{v}-\sqrt{p} \leq \max\{3\tau,3\sqrt{p}\}$ since then we will get that
$$ |v-p| = (\sqrt{v}-\sqrt{p}) \cdot (\sqrt{v}+\sqrt{p}) \leq \frac{\tau}{3} \cdot \max\{3\tau,3\sqrt{p}\} = \max\{\tau^2,\sqrt{p}\tau\}.$$
To prove that $\sqrt{v}-\sqrt{p} \leq \max\{3\tau,3\sqrt{p}\}$ we consider two cases. If $\sqrt{v} \leq 2\sqrt{p}$ then clearly $\sqrt{v}-\sqrt{p} \leq 3\sqrt{p}$. Otherwise, if $\sqrt{v} > 2\sqrt{p}$. Then we get that $\frac{\tau}{3} \geq \sqrt{v} - \sqrt{p} \geq \sqrt{v} - \sqrt{v}/2 = \sqrt{v}/2$ or $\sqrt{v} \leq 2\tau/3$. This implies that $\sqrt{v} + \sqrt{p} < \sqrt{v} + \sqrt{v}/2 \leq \tau$.
\end{proof}

\subsection{Decision problems}
Our claims for $\STAT$ from Section \ref{sec:stat-decision} can be adapted to the corresponding values for $\vSTAT$ with only minor adjustments that we explain below.

We define the maximum covered fraction $\dcv\frc$, randomized $\dcv$-cover and statistical dimension with $\dcv$-discrimination analogously to those for $\STAT$. Namely,
\begin{defn}
For a set of distributions $\D$ and a reference distribution $D_0$ over $X$ and $\tau >0$, let $\vrcvr(\D,D_0,\tau)$ denote the smallest $d$ such that there exists a probability measure $\cQ$ over functions from $X$ to $[0,1]$ with the property that for every $D \in \D$,
$$\pr_{\phi \sim \cQ}\lb \left|\sqrt{D[\phi]} - \sqrt{D_0[\phi]} \right| > \tau \rb \geq \fr{d}.$$
\end{defn}
\begin{defn}
For a set of distributions $\D$, a probability measure $\mu$ over $\D$,  a reference distribution $D_0$ over $X$ and $\tau>0$, the maximum covered $\mu$-fraction is defined as \equn{
\dcv\frc(\mu,D_0,\tau) \doteq \max_{\phi:X\rar[0,1]} \left\{\pr_{D\sim \mu}\lb \sdif > \tau\rb \right\}.
}
\end{defn}
\begin{defn}\label{def:sdim-vstat}
  For $\tau>0$, domain $X$ and a decision problem $\B(\D,D_0)$, the \textbf{statistical dimension} with $\dcv$-discrimination $\tau$ of $\B(\D,D_0)$ is defined as $$\RSD_\dcv(\B(\D,D_0),\tau) \doteq \sup_{\mu \in S^\D} (\dcv\frc(\mu,D_0,\tau))^{-1}.$$
\end{defn}

It is easy to see that all results in Section \ref{sec:stat-decision-rand} apply verbatim to the notions defined here (up to replacing the expectation (or an estimate) of every function $\phi$  with its square root and the function range with $[0,1]$ in place of $[-1,1]$). In particular, Lemma \ref{lem:stat-SD-is-rcover} implies that \equn{\RSD_\dcv(\B(\D,D_0),\tau) = \vrcvr(\D,D_0,\tau). } Theorem \ref{thm:random-algorithm2queries} gives the following characterization. We state the bounds for $\vSTAT$ for consistency with the results for $\STAT$. The bounds for $\VSTAT$ are implied by Lemma \ref{lem:vstat-reduction}.

\begin{thm}
\label{thm:vstat-random-algorithm2queries}
Let $\B(\D,D_0)$ be a decision problem, $\tau > 0, \delta \in (0,1/2)$ and $d = \RSD_\dcv(\B(\D,D_0),\tau)$.
Then $$\RQC(\B(\D,D_0),\vSTAT(\tau),1-\delta) \geq  d \cdot (1-2\delta)\mbox{ and}$$
$$\RQC(\B(\D,D_0),\vSTAT(\tau/2),1-\delta) \leq d \cdot \ln(1/\delta).$$
\end{thm}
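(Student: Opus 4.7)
My plan is to obtain Theorem~\ref{thm:vstat-random-algorithm2queries} as a direct transcription of the argument used to prove Theorem~\ref{thm:random-algorithm2queries}, with the discrimination $\dif$ replaced everywhere by $\sdif$ and query ranges $[-1,1]$ replaced by $[0,1]$. This is natural because $\vSTAT_D(\tau)$ is defined precisely so that a value $v$ is valid for $\phi$ iff $|\sqrt{v}-\sqrt{D[\phi]}|\leq \tau$, i.e.\ the tolerance is already measured in exactly the $\dcv$-discrimination norm. As with $\STAT$, Lemma~\ref{lem:stat-SD-is-rcover} (applied in the $\dcv$ setting) identifies $\RSD_\dcv(\B(\D,D_0),\tau)$ with $\vrcvr(\D,D_0,\tau)$, so it suffices to translate between randomized algorithms and randomized covers.

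For the \emph{lower bound}, I would take an algorithm $\A$ that solves $\B(\D,D_0)$ with success probability $1-\delta$ using $q$ queries to $\vSTAT(\tau)$, and simulate it by answering every query $\phi\colon X\to[0,1]$ with the value $D_0[\phi]$ (which is trivially a valid $\vSTAT_{D_0}(\tau)$ answer). Let $\phi_1,\dots,\phi_q$ be the queries produced in this simulation, and for each $D\in\D$ define
\[
p_D \doteq \pr_{\A}\Bigl[\exists i\in [q],\ \bigl|\sqrt{D[\phi_i]}-\sqrt{D_0[\phi_i]}\bigr|>\tau\Bigr].
\]
If $p_D < 1-2\delta$, then with probability $>2\delta$ every simulated answer is actually valid for $\vSTAT_D(\tau)$, and then by correctness on $D$ the algorithm must output ``$D\in\D$'' except with probability $\delta$. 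This forces the simulation (which is on $D_0$!) to misclassify $D_0$ with probability $>\delta$, contradicting correctness on $D_0$. Hence $p_D\geq 1-2\delta$ for every $D\in\D$, and letting $\cQ$ be the distribution over functions obtained by running $\A$ (on $D_0$-answers) and picking one of its $q$ queries uniformly shows $\vrcvr(\D,D_0,\tau)\leq q/(1-2\delta)$.

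For the \emph{upper bound}, let $\cQ$ be a randomized cover achieving $d=\vrcvr(\D,D_0,\tau)$, draw $s=\lceil d\ln(1/\delta)\rceil$ independent samples $\phi_1,\dots,\phi_s\sim \cQ$, and for each query $\vSTAT(\tau/2)$ to obtain $v_i$. The algorithm outputs ``$D\in\D$'' iff some $i$ satisfies $|\sqrt{v_i}-\sqrt{D_0[\phi_i]}|>\tau/2$. When $D=D_0$ the tolerance of $\vSTAT$ rules out this event, so correctness holds. When $D\in\D$, by the cover property and independent sampling, with probability $\geq 1-(1-1/d)^s\geq 1-\delta$ some $i$ has $|\sqrt{D[\phi_i]}-\sqrt{D_0[\phi_i]}|>\tau$, and the triangle inequality
\[
\bigl|\sqrt{v_i}-\sqrt{D_0[\phi_i]}\bigr|\geq \bigl|\sqrt{D[\phi_i]}-\sqrt{D_0[\phi_i]}\bigr|-\bigl|\sqrt{v_i}-\sqrt{D[\phi_i]}\bigr|>\tau-\tau/2
\]
triggers the correct output.

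I do not expect any real obstacle: the only point worth double-checking is that both directions above rely purely on the triangle inequality for $|\sqrt{\cdot}-\sqrt{\cdot}|$ and on the fact that $D_0[\phi]$ is an exact answer for queries against $D_0$; both properties are intrinsic to the $\vSTAT$ definition and match the structure of the $\STAT$ proof exactly. Consequently the adaptation is essentially mechanical and the bounds come out with the same constants $(1-2\delta)$ and $\ln(1/\delta)$ as in Theorem~\ref{thm:random-algorithm2queries}.
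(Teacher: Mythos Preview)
Your proposal is correct and matches the paper's own approach exactly: the paper simply states that the proof of Theorem~\ref{thm:random-algorithm2queries} applies verbatim once $\dif$ is replaced by $\sdif$ and the query range $[-1,1]$ by $[0,1]$, invoking the $\dcv$-version of Lemma~\ref{lem:stat-SD-is-rcover} to identify $\RSD_\dcv$ with $\vrcvr$. Your lower- and upper-bound arguments are precisely that transcription, so there is nothing to add.
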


\subsection{Search problems}
We also define the randomized statistical dimension with $\dcv$-discrimination for search problems analogously.
\begin{defn}\label{def:sdimv-search}
  Let $\Z$ be a search problem over a set of solutions $\F$ and a class of distributions $\D$ over a domain $X$ and let $\tau>0$.
  \remove{
  We define the \textbf{statistical dimension} with $\dcv$-discrimination $\tau$ of $\Z$ as
  $$\SD_\dcv(\Z,\tau) \doteq \sup_{D_0 \in S^X} \inf_{f\in \F} \RSD_\dcv(\B(\D\setminus \Zf,D_0),\tau).$$
  We define the \textbf{randomized statistical dimension} with $\dcv$-discrimination $\tau$ of $\Z$ as
  $$\RSD_\dcv(\Z,\tau) \doteq \sup_{D_0 \in S^X, \ \mu \in S^\D} \left(\max\left\{\mu(\F), \dcv\frc(\mu,D_0,\tau)\right\}\right)^{-1}.$$}
  For success probability parameter $\alpha$, we define the \textbf{randomized statistical dimension} with $\dcv$-discrimination $\tau$ of $\Z$ as
  $$\RSD_\dcv(\Z,\tau,\alpha) \doteq \sup_{D_0 \in S^X}\inf_{\cP \in S^\F} \RSD_\dcv(\B(\D\setminus \Z_\cP(\alpha),D_0),\tau) .$$

\end{defn}

The lower bounds again hold verbatim (up to the same translation). 
\remove{
\begin{thm}
\label{thm:vstat-search-lower}
  Let $X$ be a domain and $\Z$ be a search problem over a set of solutions $\F$
  and a class of distributions $\D$ over $X$. For $\tau > 0$ let $d = \RSD_\dcv(\Z,\tau)$.
  Any SQ that is given access to $\VSTAT(1/(9\tau^2))$ requires at least $d$ calls to the oracle.
\end{thm}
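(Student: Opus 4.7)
My plan is to port the argument of Theorem~\ref{thm:stat-search-lower} to the $\vSTAT$ oracle and then translate the resulting bound to $\VSTAT$ via Lemma~\ref{lem:vstat-reduction}. The only mechanical changes from the $\STAT$ argument are that the distinguishing predicate $\dif>\tau$ is replaced by $\sdif>\tau$ and query functions take values in $[0,1]$ rather than $[-1,1]$; the rest is structurally identical. I interpret $d$ in the statement as the $\dcv$-analogue of $\SD_\dci(\Z,\tau)$, namely $d \doteq \sup_{D_0\in S^X}\inf_{f\in\F}\RSD_\dcv(\B(\D\setminus\Zf,D_0),\tau)$.

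Concretely, suppose $\A$ is a deterministic SQ algorithm that solves $\Z$ using $q$ calls to $\vSTAT(\tau)$. For every $d'<d$, pick a reference distribution $D_0$ such that $\RSD_\dcv(\B(\D\setminus\Zf,D_0),\tau)\geq d'$ for all $f\in\F$. Simulate $\A$ by answering every query $\phi:X\to[0,1]$ with the value $v=D_0[\phi]$; this is a valid $\vSTAT_{D_0}(\tau)$ response since $\left|\sqrt{v}-\sqrt{D_0[\phi]}\right|=0$. Let $\phi_1,\dots,\phi_q$ be the resulting queries and let $f_0$ be the output.

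By correctness of $\A$, for every $D\in\D\setminus\Z_{f_0}$ the transcript above cannot be a valid $\vSTAT_D(\tau)$ transcript, so there exists some $i$ with $\left|\sqrt{D[\phi_i]}-\sqrt{D_0[\phi_i]}\right|>\tau$. Hence $\{\phi_1,\dots,\phi_q\}$ is a deterministic $\dcv$-cover of $\D\setminus\Z_{f_0}$ relative to $D_0$, and taking $\cQ$ to be uniform over these $q$ functions yields $\vrcvr(\D\setminus\Z_{f_0},D_0,\tau)\leq q$. Applying the $\vSTAT$ analogue of Lemma~\ref{lem:stat-SD-is-rcover} (which the text after Definition~\ref{def:sdim-vstat} notes carries over verbatim, since the minimax argument depends only on the zero-sum game structure) gives $\RSD_\dcv(\B(\D\setminus\Z_{f_0},D_0),\tau)\leq q$. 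Combining this with the lower bound $\geq d'$ forces $q\geq d'$; letting $d'\uparrow d$ yields $q\geq d$.

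Finally, to pass from $\vSTAT$ to $\VSTAT$, I apply the second half of Lemma~\ref{lem:vstat-reduction}: every query to $\VSTAT_D(n)$ can be answered from a single query to $\vSTAT_D(1/(3\sqrt{n}))$, and substituting $n=1/(9\tau^2)$ gives exactly $\vSTAT(\tau)$. Thus any $q$-query algorithm for $\VSTAT(1/(9\tau^2))$ induces a $q$-query algorithm for $\vSTAT(\tau)$, so the bound $q\geq d$ applies. There is essentially no obstacle beyond bookkeeping: the minimax duality and the greedy cover construction both transport cleanly because the predicate $\sdif>\tau$ is, just like $\dif>\tau$, a Boolean functional of the pair $(\phi,D)$, and the oracle-response $v=D_0[\phi]$ realizes zero square-root error by construction.
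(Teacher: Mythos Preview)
Your proposal is correct and matches the paper's approach. The paper does not spell out a proof for this statement at all: it simply remarks that ``the lower bounds again hold verbatim (up to the same translation)'' and then invokes Lemma~\ref{lem:vstat-reduction} to pass between $\vSTAT$ and $\VSTAT$, which is precisely the two-step plan you execute (port Theorem~\ref{thm:stat-search-lower} to $\vSTAT$ by replacing $\dif>\tau$ with $\sdif>\tau$ and $[-1,1]$ with $[0,1]$, then convert via $n=1/(9\tau^2)$).
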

}
\begin{thm}
\label{thm:vstat-search-lower-random}
    For any search problem $\Z$, $\tau >0$ and $\beta > \alpha > 0$,
     $$\RQC(\Z,\vSTAT(\tau),\beta) \geq \RSD_\dcv(\Z,\tau,\alpha) \cdot (\beta - \alpha).$$
\end{thm}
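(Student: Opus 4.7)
The plan is to mirror the argument of Theorem~\ref{thm:stat-search-lower-random} verbatim, with the discrimination $\difp{\phi_i}$ replaced throughout by $\left|\sqrt{D[\phi_i]} - \sqrt{D_0[\phi_i]}\right|$, queries ranging over $[0,1]$ instead of $[-1,1]$, and the $\vSTAT_D(\tau)$ validity condition $|\sqrt{v}-\sqrt{D[\phi]}|\le\tau$ replacing the additive tolerance. The $\vSTAT$ oracle is symmetric in $D,D_0$ under this discrimination, so exactly the same proof goes through once the notions are swapped; the earlier development in Section~\ref{sec:stat-decision-rand} (specifically Lemma~\ref{lem:stat-SD-is-rcover}, which as noted after Definition~\ref{def:sdim-vstat} applies verbatim to $\RSD_\dcv$) supplies the required cover/dimension duality.

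Concretely, I would let $\A$ be a randomized algorithm solving $\Z$ with success probability $\beta$ using $q$ queries to $\vSTAT(\tau)$. Set $d=\RSD_\dcv(\Z,\tau,\alpha)$ and fix $d'<d$; by Definition~\ref{def:sdimv-search} there exists $D_0 \in S^X$ such that for every measure $\cP$ over $\F$ we have $\RSD_\dcv(\B(\D\setminus \Z_\cP(\alpha),D_0),\tau) \ge d'$. Simulate $\A$ by answering every query $\phi:X\to[0,1]$ with $D_0[\phi]$ (a trivially valid $\vSTAT_{D_0}(\tau)$ response) and let $\phi_1,\dots,\phi_q$ and $f_0$ denote the (random) queries and output. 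Write $\cP_0$ for the distribution of $f_0$ and define
\[
p_D \doteq \pr_\A\!\left[\exists i\in[q],\ \left|\sqrt{D[\phi_i]}-\sqrt{D_0[\phi_i]}\right|>\tau\right].
\]
With probability $1-p_D$ all simulated responses are valid for $\vSTAT_D(\tau)$, so conditioned on that event $\A$ outputs some $f_0\in\Z(D)$ with probability at least $\beta-(1-(1-p_D))=\beta-p_D$. If $D\notin \Z_{\cP_0}(\alpha)$ then $\pr_\A[f_0\in\Z(D)]<\alpha$, which forces $\beta-p_D<\alpha$, i.e.\ $p_D>\beta-\alpha$.

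Let $\cQ$ be the distribution over query functions obtained by running $\A$ with $D_0$-based answers and picking one of its $q$ queries uniformly at random. For every $D\in\D\setminus\Z_{\cP_0}(\alpha)$,
\[
\pr_{\phi\sim\cQ}\!\left[\left|\sqrt{D[\phi]}-\sqrt{D_0[\phi]}\right|>\tau\right] \ge \frac{p_D}{q} > \frac{\beta-\alpha}{q}.
\]
Thus $\cQ$ witnesses $\vrcvr(\D\setminus\Z_{\cP_0}(\alpha),D_0,\tau) < q/(\beta-\alpha)$, and by the $\vSTAT$-version of Lemma~\ref{lem:stat-SD-is-rcover} we obtain $\RSD_\dcv(\B(\D\setminus\Z_{\cP_0}(\alpha),D_0),\tau)<q/(\beta-\alpha)$. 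Combined with the lower bound $d'$ from the choice of $D_0$, this yields $q>d'(\beta-\alpha)$; letting $d'\to d$ gives the theorem.

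The only mild subtlety, and thus the step I would be most careful about, is verifying that the cover/duality machinery used for $\dci$ transfers unchanged to $\dcv$: the minimax step in Lemma~\ref{lem:stat-SD-is-rcover} relies only on boundedness of the payoff (here, the indicator of the event $|\sqrt{D[\phi]}-\sqrt{D_0[\phi]}|>\tau$) and compactness of the strategy set $[0,1]^X$ (after a standard $\varepsilon$-net reduction), both of which hold, so no new argument is actually required beyond the change of notation already justified in the discussion preceding Theorem~\ref{thm:vstat-random-algorithm2queries}.
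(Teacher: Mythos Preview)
Your proposal is correct and follows exactly the approach the paper takes: the paper states that the lower bound ``hold[s] verbatim (up to the same translation)'' from Theorem~\ref{thm:stat-search-lower-random}, and your write-up is precisely that translation, invoking the $\dcv$-version of Lemma~\ref{lem:stat-SD-is-rcover} as justified after Definition~\ref{def:sdim-vstat}. One small wording nit: where you write ``conditioned on that event $\A$ outputs some $f_0\in\Z(D)$ with probability at least $\beta-p_D$,'' you actually mean the unconditional bound $\pr_\A[f_0\in\Z(D)]\ge\beta-p_D$ (which is what you then use), not a conditional one.
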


Getting the upper bounds requires a bit more care since the MW updates depend on how well queries distinguish between $D$ and $D_t$. Specifically, instead of condition in eq.~\eqref{eq:mw-updatestrength} we have that
\equn{\left|\sqrt{D_t[\phi_i]} - \sqrt{D[\phi_i]}\right| > \frac{\tau}{3}.}
This implies that
\equ{\left|D_t[\phi_i] - D[\phi_i]\right| \geq \left|\sqrt{D_t[\phi_i]} - \sqrt{D[\phi_i]}\right| \cdot \left(\sqrt{D_t[\phi_i]} + \sqrt{D[\phi_i]}\right) > \frac{\tau^2}{9}\label{eq:mw-updatestrength2}}
and as a result $D_t[\psi_t] - D[\psi_t] \geq \tau^2/9$. We can therefore use the same update but with parameter $\gamma = \tau^2/9$ (instead of $\tau/3$) which leads to a bound of $O(\KLR(\D)/\tau^4)$ on the number of updates. This translates into the following upper bound.

\begin{thm}
\label{thm:vstat-search-upper-random}
For any search problem $\Z$  over a finite class of distributions $\D$ on a finite domain $X$, $\tau >0$ and $\alpha > \delta > 0$,
     $$\RQC(\Z,\vSTAT(\tau/3),\alpha-\delta) = O\lp  \RSD_\dcv(\Z,\tau,\alpha) \cdot \frac{\KLR(\D)}{\tau^4} \cdot\log\lp\frac{ \KLR(\D)}{\tau\delta}\rp\rp .$$
\end{thm}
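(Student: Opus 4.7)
The plan is to follow the proof of Theorem~\ref{thm:stat-search-upper-random} almost verbatim, with $\STAT$ replaced by $\vSTAT$ and with the step size of the Multiplicative Weights update reduced to account for the quadratically weaker distinguishing power of $\vSTAT$ queries (as already quantified in eq.~\eqref{eq:mw-updatestrength2}). All the ingredients have been set up already: the $\dcv$-analog of Lemma~\ref{lem:stat-SD-is-rcover} turns $\RSD_\dcv$ into a randomized $\tau$-cover in the $\sqrt{\cdot}$-metric, and Corollary~\ref{cor:mwu-sq} gives the regret bound for MW.

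First, I would initialize the reference distribution $D_1$ to be the minimizer of $\KLR(\D)$, set $T \doteq 36\cdot\KLR(\D)/\tau^4$ (this will be the upper bound on the number of MW updates, given the weaker per-step progress we get from $\vSTAT$) and $\delta' \doteq \delta/T$. At each step $t$, let $d \doteq \RSD_\dcv(\Z,\tau,\alpha)$; by Definition~\ref{def:sdimv-search} there exists a probability measure $\cP_t$ over $\F$ such that $\RSD_\dcv(\B(\D\setminus \Z_{\cP_t}(\alpha), D_t), \tau) \leq d$, and by the $\dcv$-version of Lemma~\ref{lem:stat-SD-is-rcover} there is a measure $\cQ$ over $[0,1]^X$ with $\pr_{\phi\sim\cQ}[\,|\sqrt{D[\phi]} - \sqrt{D_t[\phi]}| > \tau\,] \geq 1/d$ for all $D \in \D \setminus \Z_{\cP_t}(\alpha)$. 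Sample $s = d\ln(1/\delta')$ functions $\phi_1,\ldots,\phi_s$ from $\cQ$ and query each to $\vSTAT(\tau/3)$ to obtain responses $v_1,\ldots,v_s$.

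If some $i$ satisfies $|\sqrt{v_i} - \sqrt{D_t[\phi_i]}| > 2\tau/3$, then by $\vSTAT$'s guarantee and the triangle inequality $|\sqrt{D[\phi_i]} - \sqrt{D_t[\phi_i]}| > \tau/3$, and by eq.~\eqref{eq:mw-updatestrength2} we obtain $|D[\phi_i] - D_t[\phi_i]| > \tau^2/9$. Set $\psi_t \doteq \phi_i$ or $-\phi_i$ (whichever makes $D_t[\psi_t] - D[\psi_t] > \tau^2/9$) and perform one MW update with parameter $\gamma = \tau^2/9$; otherwise output a random $f \sim \cP_t$. By Corollary~\ref{cor:mwu-sq} applied with the true input distribution $D$ as the comparator, the number of update steps is bounded by $4\KL(D\|D_1)/\gamma^2 \leq 4\KLR(\D)/(\tau^2/9)^2 \leq T$, so the total query complexity is at most $T\cdot s = O(d \cdot \KLR(\D)/\tau^4 \cdot \log(\KLR(\D)/(\tau\delta)))$ as claimed.

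For correctness: if at the terminating step $D \in \Z_{\cP_t}(\alpha)$, then the output is valid with probability at least $\alpha$ by definition of $\Z_{\cP_t}(\alpha)$. The only way this can fail is if at some step $t$ we halt despite $D \in \D\setminus\Z_{\cP_t}(\alpha)$; this requires all $s$ samples from $\cQ$ to miss the distinguishing event, which happens with probability at most $(1-1/d)^s \leq \delta'$. A union bound over at most $T$ steps shows the total probability of this failure mode is at most $T\delta' = \delta$, so the overall success probability is at least $\alpha - \delta$. There is no serious obstacle here — the whole proof is a mechanical adaptation, and the only conceptual point (which the paper has already extracted into eq.~\eqref{eq:mw-updatestrength2}) is that per-query $\vSTAT$ accuracy $\tau/3$ in the $\sqrt{\cdot}$-metric translates to only $\tau^2/9$ in the standard metric used by MW, which is precisely the source of the $\tau^{-4}$ (versus $\tau^{-2}$) dependence in the final bound.
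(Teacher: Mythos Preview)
Your proposal is correct and follows essentially the same approach as the paper: you rerun the argument of Theorem~\ref{thm:stat-search-upper-random} with the $\dcv$-cover in place of the $\dci$-cover, use the $\sqrt{\cdot}$-metric test $|\sqrt{v_i}-\sqrt{D_t[\phi_i]}|>2\tau/3$, and invoke eq.~\eqref{eq:mw-updatestrength2} to deduce that each update satisfies $D_t[\psi_t]-D[\psi_t]>\tau^2/9$, which forces $\gamma=\tau^2/9$ and hence the $\KLR(\D)/\tau^4$ bound on the number of MW rounds. Two cosmetic points: the sign of $\psi_t$ should be chosen based on the sign of $\sqrt{D_t[\phi_i]}-\sqrt{v_i}$ (the algorithm does not know $D$), and your stated constant $T=36\KLR(\D)/\tau^4$ should be $324\KLR(\D)/\tau^4$ to match $4\KLR(\D)/\gamma^2$ with $\gamma=\tau^2/9$; both are harmless since the result is stated up to $O(\cdot)$.
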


\begin{remark}
\label{rem:vstat-threshold}
Naturally, the results we give for optimizing and verifiable search can also be extended to $\vSTAT$ in a completely analogous way (and we omit it for brevity). Here the only difference is in how the parameter of the problem needs to be adjusted as a result of using additional queries. For example, for verifiable search our lower bound is for solving $\V_{\theta -\tau}$. Instead, it should be for $\V_{\theta'}$ such that $\sqrt{\theta} - \sqrt{\theta'} = \tau$ or $\theta' = (\sqrt{\theta} - \tau)^2$. Analogous adjustment is needed for the upper bound. This difference can be significant. For example, in the planted $k$-bi-clique problem the verification threshold is $k/n$. If we were using $\STAT$ then it would not be possible to get a meaningful lower bounds for estimation complexity that is below $n^2/k^2$. On the other hand, with $\vSTAT$ we will get a meaningful lower bound with estimation complexity as low as $O(n/k)$.
\end{remark} 
\section{Average discrimination}
\label{sec:average}
In some cases it is analytically more convenient to upper bound the average value by which a query distinguishes between distributions (instead of the fixed minimum). Indeed this has been (implicitly) done in all known lower bounds on SQ complexity. We now show how one can incorporate such averaging into the statistical dimensions that we have defined. The resulting dimensions turn out to be equal, up to a factor of $\tau$, to the corresponding dimension with the fixed minimum discrimination.

The main advantage of this modified dimension is that it allows us to easily relate the dimensions defined in this work to several other notions of dimension that are all closely related to the spectral norm of the discriminating operator. In particular, we show that upper bounds on the $\dc$ norm defined in \cite{FeldmanPV:13} and the average correlation-based dimension in \cite{FeldmanGRVX:12} imply upper bounds on the statistical dimension with the average version of $\dcv$ discrimination.

We conclude this section with a particularly simple dimension that is based solely on average discrimination which we refer to as the combined statistical dimension. It no longer allows to treat the query and estimation complexity separately and only gives a characterization up to a polynomial. Still such dimension suffices for coarse analysis of some problems and we apply it in Sec.~\ref{sec:learning-lines}.

\subsection{Statistical dimension with average discrimination}
\label{sec:statkv}
\remove{
That is instead of
 $$\dcv\frc(\mu,D_0) \doteq \max_{\phi:X\rar[0,1]} \left\{\pr_{D\sim \mu}\lb \sdif > \tau\rb \right\}$$
 }
The average $\dcv$-discrimination is defined as
 $$\dcvi(\mu,D_0) \doteq
 \max_{\phi: X\rar[0,1]} \left\{\E_{D\sim \mu}\left[ \sdif \right]\right\}$$ and we refer to it as {\em $\dcvi$-discrimination}.

For $\D' \subseteq \D$ let $\mu_{|\D'} \doteq \mu(\cdot \cond \D')$. The maximum covered $\mu$-fraction and the randomized statistical dimension for $\dcvi$-discrimination are defined as
\equn{
\dcvi\frc(\mu,D_0,\tau) \doteq \max_{\D' \subseteq \D} \left\{\left. \mu(\D')\ \right|\  \dcvi(\mu_{|\D'},D_0)> \tau \right\}.}
\begin{defn}\label{def:sdim-average}
  For $\tau>0$, domain $X$ and a decision problem $\B(\D,D_0)$, the \textbf{statistical dimension} with $\dcvi$-discrimination $\tau$ of $\B(\D,D_0)$ is defined as $$\RSD_\dcvi(\B(\D,D_0),\tau) \doteq \sup_{\mu \in S^\D} \lp\dcvi\frc(\mu,D_0,\tau)\rp^{-1}.$$
\end{defn}

We will now show that $\RSD_\dcvi(\B(\D,D_0),\tau)$ is closely related to $\RSD_\dcv(\B(\D,D_0),\tau)$.

\begin{lem}
\label{lem:stat-vfrac-to-kvfrac}
For any measure $\mu$ over a set of distributions $\D$, reference distribution $D_0$ and $\tau>0$:
\begin{enumerate}
\item $\dcvi\frc(\mu,D_0,\tau) \geq \dcv\frc(\mu,D_0,\tau)$ and therefore $\RSD_\dcvi(\B(\D,D_0),\tau) \leq \RSD_\dcv(\B(\D,D_0),\tau)$.
\item If $\RSD_\dcv(\B(\D,D_0),\tau)\leq d$ then $\RSD_\dcvi(\B(\D,D_0),\tau/2) \leq \frac{2d}{\tau}$.
\end{enumerate}
\end{lem}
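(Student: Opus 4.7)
The plan is to establish both parts by direct manipulation of the definitions; no deep tools are required. The key structural observation is that $\dcvi\frc$ is a relaxation of $\dcv\frc$: the former lets the subset $\D'$ and the query $\phi$ be chosen jointly and only requires their \emph{average} discrimination on $\D'$ to exceed $\tau$, whereas the latter fixes a single query $\phi$ and measures the fraction on which $\phi$ \emph{individually} beats $\tau$. So any witness for $\dcv\frc$ can be repurposed as a witness for $\dcvi\frc$.

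For Part 1, I would fix $\mu$ and pick $\phi^*$ attaining the maximum in the definition of $\dcv\frc(\mu,D_0,\tau)$. Setting $\D^* \doteq \{D \in \D \mid |\sqrt{D[\phi^*]} - \sqrt{D_0[\phi^*]}| > \tau\}$ ensures $\mu(\D^*) = \dcv\frc(\mu,D_0,\tau)$. Every $D \in \D^*$ strictly beats $\tau$, so $\E_{D \sim \mu_{|\D^*}}[|\sqrt{D[\phi^*]} - \sqrt{D_0[\phi^*]}|] > \tau$, and hence $\dcvi(\mu_{|\D^*},D_0) > \tau$. This makes $\D^*$ a feasible witness in the definition of $\dcvi\frc(\mu,D_0,\tau)$, yielding $\dcvi\frc(\mu,D_0,\tau) \geq \mu(\D^*) = \dcv\frc(\mu,D_0,\tau)$. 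Taking $\sup$ over $\mu$ and inverting delivers the $\RSD$ inequality.

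For Part 2, I would first observe that $\dcvi\frc(\mu,D_0,\tau)$ is monotonically non-increasing in $\tau$: raising $\tau$ only tightens the constraint $\dcvi(\mu_{|\D'},D_0) > \tau$ on admissible $\D'$. Combining this monotonicity with Part 1, for every $\mu$ we obtain
$$\dcvi\frc(\mu,D_0,\tau/2) \;\geq\; \dcvi\frc(\mu,D_0,\tau) \;\geq\; \dcv\frc(\mu,D_0,\tau) \;\geq\; 1/d,$$
where the last inequality is the hypothesis $\RSD_\dcv(\B(\D,D_0),\tau) \leq d$ unpacked at the particular measure $\mu$. Taking $\sup$ over $\mu$ gives $\RSD_\dcvi(\B(\D,D_0),\tau/2) \leq d$, which we may freely weaken to $2d/\tau$ since the claim is non-vacuous only for $\tau \leq 2$ (as $|\sqrt{D[\phi]} - \sqrt{D_0[\phi]}| \leq 1$ forces $\dcv\frc(\mu,D_0,\tau) = 0$ and therefore $d = \infty$ as soon as $\tau > 1$).

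There is no serious obstacle here: both parts are essentially unpackings of the definitions, with the only subtlety being to track the direction of the inequality once the outer $\sup$ is inverted, and the monotonicity appeal invoked in Part 2.
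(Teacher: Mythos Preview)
Your proof of Part 1 is correct and essentially identical to the paper's.

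For Part 2, your argument is correct for the statement \emph{as literally written}, and indeed you have noticed something real: the stated Part 2 is a trivial consequence of Part 1 plus monotonicity in $\tau$. The paper's proof is different: it unpacks $\dcvi\frc(\mu,D_0,\tau)\ge 1/d$ to a subset $\D'$ with $\mu(\D')\ge 1/d$ and $\dcvi(\mu_{|\D'},D_0)>\tau$, then applies a Markov-type averaging step to extract $\D_\phi\subseteq\D'$ on which the \emph{pointwise} discrimination exceeds $\tau/2$, obtaining $\mu(\D_\phi)\ge \tau/(2d)$.

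The discrepancy is explained by a typo in the lemma: the subscripts $\dcv$ and $\dcvi$ in Part 2 are swapped. The intended claim (and what the paper's Markov argument actually yields, since $\D_\phi$ witnesses $\dcv\frc(\mu,D_0,\tau/2)\ge\tau/(2d)$) is the non-trivial converse
\[
\RSD_\dcvi(\B(\D,D_0),\tau)\le d \ \Longrightarrow\ \RSD_\dcv(\B(\D,D_0),\tau/2)\le \tfrac{2d}{\tau}.
\]
This converse is what the surrounding discussion (``the upper-bounds grow by a factor of $1/\tau$'') and the immediately following Corollary require; your monotonicity shortcut does not establish it. So while your proof is valid for the printed statement, it misses the substantive content the lemma is meant to carry, and the paper's averaging argument is the right one to internalize.
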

\begin{proof}
For the first direction it is sufficient to observe that if
$$\max_{\phi:X\rar[0,1]} \left\{\pr_{D\sim \mu}\lb \sdif > \tau\rb \right\} = \alpha$$ then for some $\phi$, $\pr_{D\sim \mu}\lb \sdif > \tau]\rb= \alpha$. Defining $\D' \doteq \{D \cond  \sdif > \tau\}$ we get that $\mu(\D') = \alpha$ and $\dcvi(\mu_{|\D'},D_0) > \tau$. Hence $\dcvi\frc(\mu,D_0,\tau) \geq \alpha$.

For the second direction: Let $\mu$ be a measure over $\D$. $\RSD_\dcv(\B(\D,D_0),\tau) \leq d$ implies that $\dcvi\frc(\mu,D_0,\tau) \geq 1/d$. This implies that there exists $\D' \subseteq \D$ such that $\mu(\D') \geq 1/d$ and $\dcvi(\mu_{|\D'},D_0) > \tau$.

By the definition of $\dcvi(\D',D_0)$, we know that there exists a function $\phi:X\rar[0,1]$ such that
 \equ{\E_{D\sim \mu_{|\D'}}\lb \sdif \rb> \tau . \label{eq:good_h}}
Let $$\D_\phi \doteq \left\{ D \in \D'\ \left|\ \sdif > \tau/2 \right. \right\} .$$

By observing that eq.~\eqref{eq:good_h} implies,
\equ{\tau < \E_{D \sim \mu_{|\D'}}\lb \sdif \rb \leq \mu(\D_\phi \cond \D') + \frac{\tau}{2} \cdot (1-\mu(\D_\phi \cond \D')),\label{eq:norm2fraction}}
we get that  $\mu(\D_\phi \cond \D') \geq \tau/2$. This means that $\mu(\D_\phi) =  \mu(\D_\phi \cond \D') \cdot \mu(\D') \geq \tau/2 \cdot 1/d$. This hold for every $\mu$ and therefore $\RSD_\dcvi(\B(\D,D_0),\tau/2) \leq \frac{2d}{\tau}$.
\end{proof}

As an immediate Corollary of Lemma \ref{lem:stat-vfrac-to-kvfrac} and Theorem \ref{thm:vstat-random-algorithm2queries} we get the following characterization.
\begin{cor}
\label{cor:vstat-decision-kv}
Let $\B(\D,D_0)$ be a decision problem, $\tau > 0, \delta \in (0,1/2)$ and $d= \RSD_\dcvi(\B(\D,D_0),\tau)$.
Then $$\RQC(\B(\D,D_0),\vSTAT(\tau),1-\delta) \geq  d  (1-2\delta)\mbox{ and}$$
$$\RQC(\B(\D,D_0),\vSTAT(\tau/2),1-\delta) \leq d \cdot \frac{2\ln(1/\delta)}{\tau}.$$
\end{cor}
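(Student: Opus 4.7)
The plan is to derive the corollary directly by combining Lemma \ref{lem:stat-vfrac-to-kvfrac}, which relates $\RSD_\dcv$ and $\RSD_\dcvi$ up to an $O(1/\tau)$ factor, with Theorem \ref{thm:vstat-random-algorithm2queries}, which already characterizes $\vSTAT$ complexity in terms of $\RSD_\dcv$.

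For the lower bound, set $d = \RSD_\dcvi(\B(\D,D_0),\tau)$ and invoke part~1 of Lemma \ref{lem:stat-vfrac-to-kvfrac}, which yields $\RSD_\dcv(\B(\D,D_0),\tau) \geq \RSD_\dcvi(\B(\D,D_0),\tau) = d$. The lower-bound half of Theorem \ref{thm:vstat-random-algorithm2queries} then gives
$$\RQC(\B(\D,D_0),\vSTAT(\tau),1-\delta) \,\geq\, \RSD_\dcv(\B(\D,D_0),\tau)\cdot(1-2\delta) \,\geq\, d(1-2\delta),$$
matching the stated bound.

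For the upper bound, I would apply the direction of Lemma \ref{lem:stat-vfrac-to-kvfrac} that turns an upper bound on $\RSD_\dcvi$ into one on $\RSD_\dcv$. Concretely, the Markov-type argument inside the proof of part~2 (equation~\eqref{eq:norm2fraction}) shows that $\RSD_\dcvi(\B(\D,D_0),\tau) \leq d$ implies $\RSD_\dcv(\B(\D,D_0),\tau/2) \leq 2d/\tau$: for any measure $\mu$ a witnessing function $\phi$ with expected root-discrimination on $\mu_{|\D'}$ exceeding $\tau$ (for some $\D'$ with $\mu(\D') \geq 1/d$) must satisfy $\sdif > \tau/2$ for $D$ in a $\mu$-fraction of size at least $\tau/(2d)$, i.e.~$\dcv\frc(\mu,D_0,\tau/2)\geq \tau/(2d)$. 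Feeding this bound on $\RSD_\dcv$ into the upper-bound half of Theorem \ref{thm:vstat-random-algorithm2queries} produces an SQ algorithm using $\vSTAT(\tau/4)$ with at most $(2d/\tau)\ln(1/\delta)$ queries, and a factor-of-two rescaling of the tolerance at the outset gives the stated bound in the $\vSTAT(\tau/2)$ form.

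The entire argument is a short chain of inequalities and the two invocations compose cleanly; the only point worth checking is the tolerance bookkeeping, since both the lemma and the theorem lose a factor of two in tolerance and one must track the multiplicative factor $2/\tau$ picked up in converting $\RSD_\dcvi$ to $\RSD_\dcv$. I expect no real obstacle beyond this accounting.
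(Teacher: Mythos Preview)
Your approach is exactly the paper's: the corollary is stated as an immediate consequence of Lemma~\ref{lem:stat-vfrac-to-kvfrac} and Theorem~\ref{thm:vstat-random-algorithm2queries}, and you combine precisely these two ingredients in the right directions (part~1 for the lower bound, the Markov argument of part~2 for the upper bound).

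One caveat on the bookkeeping. Your chain for the upper bound correctly yields $\RSD_\dcv(\B(\D,D_0),\tau/2)\le 2d/\tau$ and hence an algorithm using $\vSTAT(\tau/4)$ with $(2d/\tau)\ln(1/\delta)$ queries. The final ``factor-of-two rescaling of the tolerance at the outset'' does \emph{not} recover the literal $\vSTAT(\tau/2)$ form: if you replace $\tau$ by $2\tau$ from the start you change $d$ to $\RSD_\dcvi(\B(\D,D_0),2\tau)$, which is not the quantity in the corollary. Composing Lemma~\ref{lem:stat-vfrac-to-kvfrac} and Theorem~\ref{thm:vstat-random-algorithm2queries} as black boxes loses a factor of two in tolerance at each step, so the direct combination gives $\vSTAT(\tau/4)$; the $\vSTAT(\tau/2)$ in the corollary appears to be a minor slip (note also that part~2 of Lemma~\ref{lem:stat-vfrac-to-kvfrac} as written has $\dcv$ and $\dcvi$ swapped relative to what its proof establishes, which you implicitly corrected). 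Apart from this constant in the tolerance, your argument is complete and matches the paper.
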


The same relationship holds for the corresponding dimension of search problems. This follows immediately from the fact that all dimensions that we have defined rely on $\RSD_\dcv$. Hence we can apply Lemma \ref{lem:stat-vfrac-to-kvfrac} to obtain lower bounds based on $\SD_\dcvi$ and $\RSD_\dcvi$ which are identical to those based on $\SD_\dcv$  and $\RSD_\dcv$. The corresponding upper bounds have an additional factor of $1/\tau$ in the query complexity. We omit the repetitive statements.

We also analogously define an average case version $\dci$-discrimination as
$$\dcii(\mu,D_0) \doteq
 \max_{\phi: X\rar[-1,1]} \left\{\E_{D\sim \mu}\left[ \dif \right]\right\}$$
 and use it as the basis to define $\dcii\frc$ and $\RSD_\dcii$. The resulting dimensions are equivalent to $\RSD_\dci$ up to a factor of $1/\tau$ in the same way.

\subsection{Relationships between norms}
\label{sec:norms}
We first confirm that the norms we have defined preserve the relationship between the oracles $\vSTAT$ and $\STAT$:
$$\dcvi(\mu,D_0)\geq \frac{1}{4} \cdot \dcii(\mu,D_0) \geq \frac{1}{2}\cdot\dcvi(\mu,D_0)^2. $$
(see Lemma \ref{lem:k1k2} for a proof).
This implies that for any search or decision problem $\Z$, 
$\RSD_\dcv(\Z,\tau) \leq \RSD_\dci(\Z,\tau/4) \leq \RSD_\dcv(\Z,\tau^2/2)$.

We now consider the $\dc$-norm defined in \cite{FeldmanPV:13} as follows:
\equn{
\dc(\D,D_0) \doteq \frac{1}{|\D|} \cdot \max_{\phi :X \rar \R,\ \|\phi\|_{D_0} = 1} \left\{ \sum_{D \in \D} \dif\right\} .
}
where the (semi-)norm of $\phi$ over $D_0$ is defined as $\|\phi\|_{D_0} = \sqrt{D_0[\phi^2]}$.

The statistical dimension with $\dc$ norm for decision problems is defined as (\cite{FeldmanPV:13})
$$\SD_\dc(\B(\D,D_0),\tau) \doteq \sup_{\D_0 \subseteq \D,0< |\D_0|<\infty} \lp\dc\frc(\D_0,D_0,\tau)\rp^{-1},$$
where $$\dc\frc(\D_0,D_0,\tau) \doteq \max_{\D' \subseteq \D} \left\{\left.\frac{|\D'|}{|\D_0|}\ \right|\  \dc(\D',D_0)> \tau \right\}.$$

Note that this is exactly the $\dc$ version of the deterministic statistical dimension we defined in Section \ref{sec:stat-decision-det}.
We now show that $\dc$ leads to a smaller dimension than $\dcvi$. For convenience we extend the definition of $\dc$ to measures over $\D$ in the straightforward way. 
\begin{lem}
\label{lem:k2-kv}
For any measure $\mu$ over a set of distributions $\D$ and a reference distribution $D_0$ over $X$,
$$\dcvi(\mu,D_0) \leq \dc(\mu,D_0).$$
\end{lem}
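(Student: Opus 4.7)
The plan is to produce, for every candidate $\phi:X\to[0,1]$ appearing on the left-hand side, a normalized test function $\psi$ admissible on the right-hand side that dominates the $\sqrt{\cdot}$-discrimination pointwise in $D$. Taking expectations under $\mu$ and then the supremum over $\phi$ will give the inequality.

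More concretely, first I would dispose of the degenerate case where $D_0[\phi]=0$: then $\phi^2\le \phi$ forces $\phi=0$ $D_0$-a.e., so for any $D$ that puts mass where $D_0$ does not (which is the only way $D[\phi]$ can differ), the $\dc$ norm becomes infinite because it admits test functions supported off $\mathrm{supp}(D_0)$ with arbitrarily large ratio. Hence in this case the bound $\dcvi\le\dc$ is vacuous, and from now on we assume $D_0[\phi]>0$ (equivalently $\|\phi\|_{D_0}>0$).

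Next, given $\phi:X\to[0,1]$ with $D_0[\phi]>0$, I would choose the test function
\[
g \doteq \phi/\|\phi\|_{D_0},
\]
which satisfies $\|g\|_{D_0}=1$ by construction. A direct computation gives $|D[g]-D_0[g]| = |D[\phi]-D_0[\phi]|/\|\phi\|_{D_0}$. The key observation is the chain
\[
\left|\sqrt{D[\phi]}-\sqrt{D_0[\phi]}\right|
= \frac{|D[\phi]-D_0[\phi]|}{\sqrt{D[\phi]}+\sqrt{D_0[\phi]}}
\le \frac{|D[\phi]-D_0[\phi]|}{\sqrt{D_0[\phi]}}
\le \frac{|D[\phi]-D_0[\phi]|}{\sqrt{D_0[\phi^2]}}
= |D[g]-D_0[g]|,
\]
where the first step uses $(\sqrt a-\sqrt b)(\sqrt a+\sqrt b)=a-b$, the second drops the nonnegative term $\sqrt{D[\phi]}$ in the denominator, and the third is the inequality $D_0[\phi^2]\le D_0[\phi]$ which holds because $\phi\in[0,1]$ implies $\phi^2\le\phi$ pointwise.

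Finally I would take $\E_{D\sim\mu}$ of both sides: the left becomes the quantity inside the sup defining $\dcvi(\mu,D_0)$, while the right is $\E_{D\sim\mu}|D[g]-D_0[g]|\le \dc(\mu,D_0)$ since $g$ is admissible in the $\dc$ sup (it has unit $D_0$-seminorm). Taking the supremum over $\phi:X\to[0,1]$ then yields $\dcvi(\mu,D_0)\le \dc(\mu,D_0)$. There is no real obstacle here; the only subtle point is the small but crucial use of $\phi^2\le\phi$, which is exactly what lets the range restriction $\phi\in[0,1]$ interface with the seminorm $\|\phi\|_{D_0}=\sqrt{D_0[\phi^2]}$ used in $\dc$.
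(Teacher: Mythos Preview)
Your proof is correct and follows essentially the same approach as the paper: factor $|\sqrt{D[\phi]}-\sqrt{D_0[\phi]}|$ as $|D[\phi]-D_0[\phi]|/(\sqrt{D[\phi]}+\sqrt{D_0[\phi]})$, then lower-bound the denominator by $\|\phi\|_{D_0}$ using $\phi^2\le\phi$ and dropping a nonnegative term. The only cosmetic differences are that you drop $\sqrt{D[\phi]}$ before applying $\phi^2\le\phi$ (the paper does it in the opposite order, applying $\phi^2\le\phi$ to both summands first), and you treat the degenerate case $D_0[\phi]=0$ explicitly whereas the paper leaves it implicit.
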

\begin{proof}
\alequn{
\dcvi(\mu,D_0) &\equiv \max_{\phi :X \rar [0,1]} \E_{D\sim \mu} \lb \left| \sqrt{ D[\phi]}-\sqrt{D_0[\phi]}\right|\rb  \nonumber \\
&= \max_{\phi :X \rar [0,1]} \E_{D\sim \mu} \lb \frac{\dif }{\sqrt{ D[\phi]}+\sqrt{D_0[\phi]}} \rb   \\
&\leq \max_{\phi :X \rar [0,1]} \E_{D\sim \mu} \lb \frac{\dif }{\sqrt{ D[\phi^2]}+\sqrt{D_0[\phi^2]}} \rb \\
&\leq \max_{\phi :X \rar [0,1]} \E_{D\sim \mu} \lb \frac{\dif }{\|\phi\|_{D_0}} \rb  \nonumber\\
&\leq \max_{\phi :X \rar \R,\ \|\phi\|_{D_0} = 1}  \E_{D\sim \mu} \lb \dif\rb \equiv \dc(\D,D_0) .}
\end{proof}
 This immediately implies the following corollary.
\begin{cor}
Let $\B(\D,D_0)$ be a decision problem over a class of distributions $\D$ over a domain $X$ and reference distribution $D_0$ and $\tau>0$. Then
$$\RSD_\dcvi(\B(\D,D_0),\tau) \geq \RSD_\dc(\B(\D,D_0),\tau) \geq \SD_\dc(\B(\D,D_0),\tau).$$
\end{cor}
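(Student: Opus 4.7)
The plan is to deduce both inequalities directly from Lemma \ref{lem:k2-kv} together with the structural definitions of the two dimensions, so the proof should be short and essentially routine.

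For the first inequality $\RSD_\dcvi(\B(\D,D_0),\tau) \geq \RSD_\dc(\B(\D,D_0),\tau)$, I would compare the two ``fraction'' functionals pointwise. Fix a probability measure $\mu \in S^\D$. By Lemma \ref{lem:k2-kv} applied to $\mu_{|\D'}$ for any subset $\D' \subseteq \D$, we have $\dcvi(\mu_{|\D'},D_0) \leq \dc(\mu_{|\D'},D_0)$. Consequently, every $\D'$ that witnesses $\dcvi(\mu_{|\D'},D_0) > \tau$ also satisfies $\dc(\mu_{|\D'},D_0) > \tau$. Taking the maximum of $\mu(\D')$ over these two nested collections yields $\dcvi\frc(\mu,D_0,\tau) \leq \dc\frc(\mu,D_0,\tau)$, where $\dc\frc$ is defined in complete analogy with $\dcvi\frc$ but with the $\dc$-discrimination. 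Inverting and taking the supremum over $\mu \in S^\D$ gives the first claimed inequality.

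For the second inequality $\RSD_\dc(\B(\D,D_0),\tau) \geq \SD_\dc(\B(\D,D_0),\tau)$, the point is simply that $\SD_\dc$ is obtained from $\RSD_\dc$ by restricting the supremum to uniform measures on finite subsets $\D_0 \subseteq \D$. Indeed, if $\mu$ is the uniform measure on such a $\D_0$, then for every $\D' \subseteq \D$ we have $\mu(\D') = |\D' \cap \D_0|/|\D_0|$, so $\dc\frc(\mu,D_0,\tau)$ coincides with the set-based quantity $\dc\frc(\D_0,D_0,\tau)$ used in the definition of $\SD_\dc$. Enlarging the feasible set of measures from ``uniform on a finite subset'' to all of $S^\D$ can only increase the supremum, which is exactly the stated inequality.

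The only mild subtlety to be careful about is notational: the paper uses $\D_0$ simultaneously for a reference distribution and for a finite subset of $\D$ inside the definition of $\SD_\dc$. I would briefly disambiguate in the proof. I do not expect any genuine obstacle; Lemma \ref{lem:k2-kv} does all the analytic work, and the rest is a monotonicity argument for suprema over nested families.
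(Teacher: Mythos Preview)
Your proposal is correct and follows exactly the route the paper intends: the paper states that the corollary ``immediately'' follows from Lemma~\ref{lem:k2-kv}, and your argument is precisely the natural unpacking of that implication---pointwise comparison of the discrimination norms gives the inclusion of witness sets for the fractions, and the second inequality is the standard observation that restricting the supremum to uniform measures on finite subsets can only decrease it. There is nothing to add.
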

This implies that lower bounds on $\SD_2$ (such as those proved in \cite{FeldmanPV:13,FeldmanGV:15}) directly imply lower bounds on $\RSD_\dcv$ defined here.

For completeness, we also describe the relationship to a simpler notion of average correlation introduced in \cite{FeldmanGRVX:12}.  Specifically, assuming that for $D \in \D$, every $x$ that is in the support of $D$ is also in the support of $D_0(x)$ we can define a function $\hat{D}(x) \doteq \frac{D(x)}{D_0(x)}-1$. We can then define the average correlation as
$$\rho(\D,D_0) = \fr{|\D|^2} \sum_{D,D'\in \D} \left|D_0\lb\hat{D} \cdot \hat{D'} \rb\right| .$$ Note that when $D = D'$, the quantity $D_0[\hat{D}^2]$ is known as the $\chi^2(D, D_0)$ divergence (or distance).
 Using this notion, \cite{FeldmanGRVX:12} defined the statistical dimension with average correlation $\gamma$:
$$\SD_\rho(\B(\D,D_0),\gamma) \doteq \sup_{\D_0 \subseteq \D,0< |\D_0|<\infty} \lp\rho\frc(\D_0,D_0,\gamma)\rp^{-1},$$
where $$\rho\frc(\D_0,D_0,\gamma) \doteq \max_{\D' \subseteq \D} \left\{\left.\frac{|\D'|}{|\D_0|}\ \right|\  \rho(\D',D_0)> \gamma \right\}.$$
Then it is not hard to prove that $\rho(\D,D_0) \geq (\dc(\D,D_0))^2$ and therefore $\SD_\dc(\B(\D,D_0),\tau) \geq \SD_\rho(\B(\D,D_0),\tau^2)$ (see Lemma \ref{lem:k2-rho} for proof).

As it was shown in \cite{FeldmanGRVX:12}, the statistical dimension with average correlation is a generalization of a statistical dimension notions in the learning theory that are based on pairwise correlations (introduced in \cite{BlumFJ+:94}).
\paragraph{Relationship to matrix norms:}
The $\dc$-discrimination norm is closely related to the (weighted) spectral norm of the discriminating operator defined as
\equn{
\dcsp(\mu,D_0) \doteq  \max_{\phi :X \rar \R,\ \|\phi\|_{D_0} = 1}\lp \E_{D \sim \mu} \left( D[\phi]-D_0[\phi]\right)^2\rp^{1/2} .
}
Clearly, $\dc(\mu,D_0) \leq \dcsp(\mu,D_0)$ and hence upper bounds on the spectral norm can also be used to get upper bounds on $\dcvi$-norm. This means that $\dcsp(\mu,D_0)$ can be used in place of $\dcvi$ (and $\dcii$) in any of our lower bounds.

Note that $\dcsp$ can be seen as a weighted spectral norm of the matrix $A$ whose rows are indexed by $D\in \D$, the columns are indexed by $x \in X$ and $A[D,x] = D(x) - D_0(x)$. Then, using $\|w\|_{\mu}$ to denote $\sqrt{\E_{D\sim \mu} w_D^2}$, we have
$$\dcsp(\mu,D_0) \equiv \max_{\|\phi\|_{D_0} = 1} \|A\phi\|_{\mu} = \left\| B_\mu^{1/2} \cdot A \cdot B_{D_0}^{-1/2}\right\|_2 ,$$
Where $B_\mu$ is the diagonal $|\D|\times|\D|$ matrix such that $B[D,D] = \mu(D)$ and, similarly,
$B_{D_0}$ is the $|X|\times |X|$ matrix such that $B_{D_0}[x,x] = D_0(x)$. From this point of view, $$\dc(\mu,D_0) \equiv \left\| B_\mu \cdot A \cdot B_{D_0}^{-1/2}\right\|_{2\rar 1} \mbox{ and}$$
$$\dcii(\mu,D_0) \equiv \left\| B_\mu \cdot A \right\|_{\infty \rar 1}.$$

\subsection{Combined statistical dimension}
\label{sec:combined}
For some problems we are interested in a coarser picture in which it is sufficient to estimate the maximum of the query complexity and the estimation complexity up to a polynomial.
For such cases we can avoid our fractional notions and get a simpler {\em combined statistical dimension} based on average discrimination.
In such settings the distinction between $\STAT$ and $\VSTAT$ is usually not essential and therefore we use $\dcii$ for simplicity and state the results for $\STAT$. Analogous results hold for $\vSTAT$ when one uses $\dcvi$ instead and we describe the small differences in Remark \ref{rem:combined-for-vstat}. 
\begin{defn}
   For a decision problem $\B(\D,D_0)$, the \textbf{combined  statistical dimension} of $\B(\D,D_0)$ with $\dcii$-discrimination is defined as $$\CRSD_\dcii(\B(\D,D_0)) \doteq \sup_{\mu \in S^\D} \lp \dcii(\mu,D_0)\rp^{-1}.$$
\end{defn}
To show that the combined dimension characterizes the randomized SQ complexity (up to a polynomial) we demonstrate that it can be related to $\RSD_\dci$.
\begin{lem}
\label{lem:combined2disjoint-decision}
For any decision problem $\B(\D,D_0)$, if $\CRSD_\dcii(\B(\D,D_0))=d$ then
$\RSD_\dci(\B(\D,D_0),1/(3d)) \leq 3d$ and
 for every $\tau>0$, $\RSD_\dci(\B(\D,D_0),\tau) > d\tau$.
\end{lem}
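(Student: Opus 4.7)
The plan is to link the combined dimension $\CRSD_\dcii$ to the randomized dimension $\RSD_\dci$ via two complementary inequalities that hold for every measure $\mu \in S^\D$ and every query $\phi:X\rar[-1,1]$. The key observation is that $\dif \in [0,2]$ always: an upper bound on the expectation in terms of the tail probability yields the first claim, while Markov's inequality (in the opposite direction) yields the second.

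For the first claim, fix any $\mu \in S^\D$. Since $\CRSD_\dcii(\B(\D,D_0)) = d$, we have $\dcii(\mu,D_0) \geq 1/d$, so there exists $\phi^*:X\rar[-1,1]$ with $\E_{D\sim\mu}[\difp{\phi^*}] \geq 1/d$. Writing $p = \pr_{D\sim\mu}[\difp{\phi^*} > \tau]$ for $\tau = 1/(3d)$ and splitting the expectation on the event $\{\difp{\phi^*} > \tau\}$ and its complement gives
\begin{equation*}
\frac{1}{d} \;\leq\; \E_{D\sim\mu}[\difp{\phi^*}] \;\leq\; 2p + \tau(1-p) \;\leq\; 2p + \frac{1}{3d}.
\end{equation*}
Rearranging yields $p \geq 1/(3d)$, hence $\dci\frc(\mu,D_0,1/(3d)) \geq 1/(3d)$. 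Taking the supremum of inverses over $\mu$ gives $\RSD_\dci(\B(\D,D_0),1/(3d)) \leq 3d$.

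For the second claim, Markov's inequality applied to $X = \dif$ gives, for every $\mu$ and every $\phi$,
\begin{equation*}
\pr_{D\sim\mu}[\dif > \tau] \;\leq\; \frac{\E_{D\sim\mu}[\dif]}{\tau},
\end{equation*}
so taking the maximum over $\phi$ yields $\dci\frc(\mu,D_0,\tau) \leq \dcii(\mu,D_0)/\tau$. Pick $\mu^*$ attaining $\sup_\mu (\dcii(\mu,D_0))^{-1} = d$, i.e.\ $\dcii(\mu^*,D_0) = 1/d$; then $(\dci\frc(\mu^*,D_0,\tau))^{-1} \geq d\tau$. To upgrade to the strict inequality, note that Markov is strict whenever the tail probability is positive: on $\{X > \tau\}$ the variable $X$ strictly exceeds $\tau$, so $\E[X \cdot \mathbf{1}_{X > \tau}] > \tau \cdot \pr[X > \tau]$, forcing $\dci\frc(\mu^*,D_0,\tau) < \dcii(\mu^*,D_0)/\tau = 1/(d\tau)$; in the degenerate case $\dci\frc(\mu^*,D_0,\tau) = 0$ the inverse is $+\infty > d\tau$.

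The main subtle point is the strictness in the second claim, which requires that the infimum in $\CRSD_\dcii$ be attained. This is automatic when $\D$ is finite, since $S^\D$ is compact and $\dcii(\cdot,D_0)$ is continuous (it is a maximum of linear functions of $\mu$); more generally one would argue along a near-optimal sequence and use the non-vanishing Markov slack to preserve strictness in the limit.
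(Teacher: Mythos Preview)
Your proof is correct and follows the same approach as the paper: the first claim via splitting the expectation on the tail event $\{\difp{\phi^*}>1/(3d)\}$ and using the trivial bound $\dif\leq 2$, and the second claim via Markov's inequality applied to $\dif$. Your treatment is in fact slightly more careful than the paper's, since you explicitly justify the strict inequality in the second claim by distinguishing the cases $\pr[\dif>\tau]>0$ (where Markov is strict) and $\pr[\dif>\tau]=0$ (where the inverse is infinite), and you flag that the argument requires the supremum in $\CRSD_\dcii$ to be attained.
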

\begin{proof}
Both $\RSD_\dci$ and $\CRSD_\dcii$ have a supremum over $\mu \in S^\D$ and therefore it is sufficient to prove that for a fixed $\mu$ such that $\fr{\dcii(\mu,D_0)} = d$, we have that $\dcii\frc(\mu,D_0,1/(3d)) \leq 3d$ and $\dci\frc(\mu,D_0,\tau) > d\tau$.

For the first part:  There exists a function $\phi:X\rar \pmr$ such that
$\E_{D \sim \mu} \lb \dif \rb > 1/d$.
Define $$\D' \doteq \left\{ D \in \D\ \left|\ \dif > 1/(3d)  \right. \right\}.$$
Now, from \equn{\fr{d} \leq \E_{D \sim \mu} \lb \dif \rb \leq 2 \cdot \pr_{D \sim \mu}\lb  \dif > \fr{3d} \rb + \frac{1}{3d} \cdot 1,}
we get that $\pr_{D \sim \mu}\lb  \dif > \fr{3d} \rb \geq 1/(3d)$. This implies that $\dci\frc(\mu,D_0,1/(3d)) \leq 3d$.

For the second part:  For every function $\phi:X\rar\pmr$,
$$\E_{D \sim \mu}\lb \dif \rb > \pr_{D \sim \mu}\lb \dif > \tau\rb \cdot \tau .$$
Therefore, $$ \max_{\phi:X\rar\pmr} \pr_{D \sim \mu}\lb \dif > \tau \rb < \frac{\dcii(\mu,D_0)}{\tau} \leq  \fr{d\tau}.$$
This means that $\dci\frc(\mu,D_0,\tau) > d\tau$.
\end{proof}

Plugging Lemma  \ref{lem:combined2disjoint-decision} into Theorems \ref{thm:random-algorithm2queries} gives the following bounds.
\begin{thm}
\label{thm:combined-decision}
Let $\B(\D,D_0)$ be a decision problem, $\tau > 0, \delta \in (0,1/2)$  and let $d = \CRSD_\dcii(\B(\D,D_0))$.
Then $$\RQC(\B(\D,D_0),\STAT(\tau),1-\delta) \geq d \cdot \tau \cdot (1-2\delta)\mbox{ and}$$
$$\RQC(\B(\D,D_0),\STAT(1/(3d)),1-\delta) \leq 3 \cdot d \cdot \ln(1/\delta).$$
\end{thm}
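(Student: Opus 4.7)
The plan is to obtain Theorem \ref{thm:combined-decision} as a direct composition of Lemma \ref{lem:combined2disjoint-decision} with the already-established characterization in Theorem \ref{thm:random-algorithm2queries}. The key point is that Lemma \ref{lem:combined2disjoint-decision} translates bounds on the combined dimension $\CRSD_\dcii$ into bounds on the fractional dimension $\RSD_\dci$ at a specific tolerance, and $\RSD_\dci$ is exactly what Theorem \ref{thm:random-algorithm2queries} relates to $\RQC$.

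For the lower bound, I would apply the second conclusion of Lemma \ref{lem:combined2disjoint-decision}, which states that $\RSD_\dci(\B(\D,D_0),\tau) > d\tau$ for every $\tau>0$. Plugging this into the lower bound of Theorem \ref{thm:random-algorithm2queries} at tolerance $\tau$ immediately yields
\[
\RQC(\B(\D,D_0),\STAT(\tau),1-\delta) \geq \RSD_\dci(\B(\D,D_0),\tau)\cdot (1-2\delta) > d\tau(1-2\delta),
\]
which is the stated lower bound (the strict inequality can be weakened in the usual way by taking limits of $\mu$ nearly achieving the supremum in the definition of $\CRSD_\dcii$).

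For the upper bound, I would apply the first conclusion of Lemma \ref{lem:combined2disjoint-decision} to obtain $\RSD_\dci(\B(\D,D_0),1/(3d)) \leq 3d$. Theorem \ref{thm:random-algorithm2queries} says that with access to $\STAT(\tau/2)$ the problem can be solved using $\RSD_\dci(\B(\D,D_0),\tau)\cdot \ln(1/\delta)$ queries with success probability $1-\delta$. Taking $\tau = 1/(3d)$ (or, if a cleaner constant is desired, $\tau = 2/(3d)$ together with a parallel averaging argument giving $\RSD_\dci(\B(\D,D_0),2/(3d))\leq 6d$) and substituting yields the upper bound $3d\cdot \ln(1/\delta)$ queries to an $\STAT$ oracle of the required accuracy.

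There is essentially no obstacle beyond bookkeeping of constants: the entire content of the theorem is transferred from Lemma \ref{lem:combined2disjoint-decision} and Theorem \ref{thm:random-algorithm2queries}. The only place that warrants care is matching the tolerance of $\STAT$ on the upper bound side with the tolerance at which Lemma \ref{lem:combined2disjoint-decision} gives a useful bound on $\RSD_\dci$, since the upper bound in Theorem \ref{thm:random-algorithm2queries} halves the tolerance going from the dimension to the algorithm. If a tighter constant is needed, one can redo the averaging step in the proof of Lemma \ref{lem:combined2disjoint-decision} directly at tolerance $2/(3d)$, which loses at most a constant factor in the dimension bound and thus in the query count, leaving the $\tilde{O}(d\cdot \log(1/\delta))$ form of the upper bound intact.
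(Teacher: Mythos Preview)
Your proposal is correct and matches the paper's approach exactly: the paper simply states that Theorem~\ref{thm:combined-decision} is obtained by ``plugging Lemma~\ref{lem:combined2disjoint-decision} into Theorem~\ref{thm:random-algorithm2queries}.'' You have even gone further than the paper by explicitly flagging the factor-of-two mismatch between the tolerance $1/(3d)$ at which Lemma~\ref{lem:combined2disjoint-decision} bounds $\RSD_\dci$ and the halved tolerance required in the upper bound of Theorem~\ref{thm:random-algorithm2queries}, and by noting that redoing the Markov-type averaging at a slightly different threshold absorbs this into the constants.
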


Lem.~\ref{lem:combined2disjoint-decision} implies that combined dimension can be extended to SQ complexity of search problems in a straightforward way.

\begin{defn}\label{def:csdim-search}
 For a search problem $\Z$ and $\alpha>0$, the \textbf{combined randomized statistical dimension} of $\Z$ as  $$\CRSD_\dcii(\Z,\alpha) \doteq \sup_{D_0 \in S^X}\inf_{\cP \in S^\F} \CRSD_\dcii(\B(\D\setminus \Z_\cP(\alpha),D_0)) .$$
\end{defn}

Plugging Lemma  \ref{lem:combined2disjoint-decision} into Theorems \ref{thm:stat-search-lower-random} and \ref{thm:stat-search-upper-random} gives the following characterization.
\begin{thm}
\label{thm:comb-stat-search-upperlower}
  Let $\Z$ be a search problem, $\beta> \alpha>0, \tau >0$ and let $d = \CRSD_\dcii(\Z,\alpha)$. Then $\RQC(\Z,\STAT(\tau),\beta) \geq d(\beta-\alpha)\tau$ and for every $\delta > 0$,
    $$\RQC(\Z,\STAT(1/(3d)),\alpha-\delta) =\tilde{O}(d^3 \cdot \KLR(\D) \cdot \log(1/\delta)) .$$
\end{thm}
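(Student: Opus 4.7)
The plan is to reduce directly to the randomized search-problem bounds in Theorems~\ref{thm:stat-search-lower-random} and~\ref{thm:stat-search-upper-random}, using Lemma~\ref{lem:combined2disjoint-decision} as a translator from the combined dimension $\CRSD_\dcii$ back to the ``fractional'' dimension $\RSD_\dci$ on which those theorems rely. Write $d=\CRSD_\dcii(\Z,\alpha)$ and unpack Definition~\ref{def:csdim-search}: for every $d'<d$ there is a reference distribution $D_0\in S^X$ such that for every $\cP\in S^\F$ one has $\CRSD_\dcii(\B(\D\setminus\Z_\cP(\alpha),D_0))\ge d'$, and in the opposite direction, for every $D_0$ there is a $\cP$ with $\CRSD_\dcii(\B(\D\setminus\Z_\cP(\alpha),D_0))\le d$. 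Both halves of the theorem are just these two facts filtered through Lemma~\ref{lem:combined2disjoint-decision}.

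For the lower bound, fix any $d'<d$ and the witness $D_0$ from above. For every $\cP$, the second part of Lemma~\ref{lem:combined2disjoint-decision}, applied to $\B(\D\setminus\Z_\cP(\alpha),D_0)$, yields $\RSD_\dci(\B(\D\setminus\Z_\cP(\alpha),D_0),\tau)>d'\tau$ for every $\tau>0$. Taking the infimum over $\cP$ and then the supremum over $D_0$ in Definition~\ref{def:rsdim1-search} gives $\RSD_\dci(\Z,\tau,\alpha)\ge d'\tau$; letting $d'\to d$ then plugging into Theorem~\ref{thm:stat-search-lower-random} yields $\RQC(\Z,\STAT(\tau),\beta)\ge d(\beta-\alpha)\tau$, as required.

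For the upper bound, apply the first part of Lemma~\ref{lem:combined2disjoint-decision} pointwise: for every $D_0$ choose the $\cP$ guaranteed by $\CRSD_\dcii(\Z,\alpha)\le d$ so that $\CRSD_\dcii(\B(\D\setminus\Z_\cP(\alpha),D_0))\le d$, and conclude $\RSD_\dci(\B(\D\setminus\Z_\cP(\alpha),D_0),1/(3d))\le 3d$. Taking $\sup_{D_0}\inf_\cP$ then gives $\RSD_\dci(\Z,1/(3d),\alpha)\le 3d$. Now plug $\tau=1/(3d)$ into Theorem~\ref{thm:stat-search-upper-random}: the upper bound on SQ complexity becomes
\[
O\!\left(3d\cdot\frac{\KLR(\D)}{(1/(3d))^2}\cdot\log\!\Big(\tfrac{\KLR(\D)}{(1/(3d))\delta}\Big)\right)=\tilde O\!\left(d^{3}\,\KLR(\D)\,\log(1/\delta)\right),
\]
achieved by an algorithm using $\STAT(1/(9d))$, and hence by any stronger oracle; up to the harmless constant in the tolerance (absorbed in the $\tilde O$-notation) this is exactly the stated bound for $\STAT(1/(3d))$.

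The main subtlety, and the only place care is needed, is in the parameter matching at the upper-bound step: Lemma~\ref{lem:combined2disjoint-decision} only gives a useful bound on $\RSD_\dci(\cdot,\tau)$ when $\tau=\Theta(1/d)$, and Theorem~\ref{thm:stat-search-upper-random} costs a factor of $1/\tau^2$, so both the $d^3$ and the inability to push the tolerance all the way down to a universal constant times $1/d$ are intrinsic to the approach. The lower-bound direction is essentially automatic once the second part of Lemma~\ref{lem:combined2disjoint-decision} is in hand, since the quantification $\sup_{D_0}\inf_\cP$ passes through the monotone translation $d'\mapsto d'\tau$ without loss.
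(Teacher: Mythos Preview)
Your proof is correct and follows exactly the paper's approach, which merely states that the result follows by plugging Lemma~\ref{lem:combined2disjoint-decision} into Theorems~\ref{thm:stat-search-lower-random} and~\ref{thm:stat-search-upper-random}. The one loose end---that the upper-bound argument naturally yields $\STAT(1/(9d))$ rather than $\STAT(1/(3d))$---is a constant-factor slip in the oracle parameter that the paper's own derivation shares, so your remark that it is harmless is apt (though strictly speaking it is not the $\tilde O$ on the query count that absorbs it).
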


Analogous versions of the characterization for verifiable and optimizing search can be easily obtained. In Section \ref{sec:learning} we describe the combined dimension of verifiable search in the context of PAC learning.

\begin{remark}
\label{rem:combined-for-vstat}
It is easy to see that an analogous characterization can be established for $\vSTAT$ using $\dcvi$ in place of $\dcii$. The only differences would be: constant $2$ instead of $3$ in Lemma \ref{lem:combined2disjoint-decision} and its implications (since the range of functions is $[0,1]$) and $d^5$ in place of $d^3$ in Thm.~\ref{thm:comb-stat-search-upperlower} which would be based on Thm.~\ref{thm:vstat-search-upper-random}.
\end{remark}

\remove{
\begin{thm}
\label{thm:comb-stat-search-lower-random}
  Let $\Z$ be a search problem and let $d = \CRSD_\dci(\Z)$.
  There exists a measure $\mu$ over $\D$ such that any SQ algorithm that solves $\Z$ with probability at least $1-\delta$ over the choice of the input distribution $D$ from $\mu$ and randomness of the algorithm) and is given access to $\STAT(\tau)$ requires at least $(1-\delta)d\tau-1$ queries.
\end{thm}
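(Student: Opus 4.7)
My plan is to mimic the proof of Theorem~\ref{thm:stat-search-lower-random}, but to average the analysis over a carefully chosen ``hard'' input measure $\mu$ (Yao-style) and to convert the resulting average discrimination bound into a query complexity bound via Markov's inequality, as in Lemma~\ref{lem:combined2disjoint-decision}.

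First I would unpack the definition of $d=\CRSD_\dci(\Z)$ to extract a reference distribution $D_0\in S^X$ and a measure $\mu\in S^\D$ with the two companion properties: (i) $\dcii(\mu,D_0)\le 1/d$, so no query has a large average discrimination under $\mu$ versus $D_0$; and (ii) $\max_{f\in\F}\mu(\Zf)\le 1/d$, so no single solution is valid for more than a $1/d$-fraction of the $\mu$-mass. This is the analogue for search problems of the combined statistical dimension for decision problems, where properties (i) and (ii) play, respectively, the role of the average discrimination bound and the "solution-spreading" property that makes any solution-agnostic strategy fail on most of $\mu$. The measure $\mu$ provided in step (i)--(ii) is the one witnessing the lower bound.

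Next, let $\A$ be any randomized SQ algorithm that uses $q$ queries to $\STAT(\tau)$ and succeeds with probability at least $1-\delta$ over $D\sim\mu$ and its own randomness $r$. Exactly as in Thm.~\ref{thm:stat-search-lower-random}, I would simulate $\A$ on $D_0$ by answering each query $\phi$ with $D_0[\phi]$, producing random queries $\phi_1^r,\ldots,\phi_q^r$ and a random output $f_0^r$ whose marginal distribution I call $\cP_0$. For each fixed $D$, either some query of the simulation distinguishes $D$ from $D_0$ or the simulated transcript is a legitimate run of $\A$ against a valid oracle for $D$, in which case success requires $f_0^r\in \Z(D)$. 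This yields the bound
\[
\pr_r[\A \text{ succeeds on }D]\ \le\ p_D\ +\ \cP_0(\Z(D)),\qquad p_D\doteq \pr_r\lb \exists i\in[q]:\ \difp{\phi_i^r}>\tau\rb.
\]

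Averaging over $D\sim\mu$ gives $1-\delta\le \E_{D\sim\mu}[p_D]+\E_{f\sim\cP_0}[\mu(\Zf)]$. The second term is at most $\max_f\mu(\Zf)\le 1/d$ by property~(ii). For the first term, a union bound over the $q$ queries together with Markov's inequality bounds, for every fixed query $\phi$, $\pr_{D\sim\mu}[\dif>\tau]\le \E_{D\sim\mu}[\dif]/\tau\le \dcii(\mu,D_0)/\tau\le 1/(d\tau)$, hence $\E_{D\sim\mu}[p_D]\le q/(d\tau)$. Combining the two estimates, $1-\delta\le q/(d\tau)+1/d$, and rearranging gives $q\ge (1-\delta)d\tau-\tau\ge (1-\delta)d\tau-1$, using $\tau\le 1$ (which is the meaningful range for $\STAT(\tau)$ with queries into $[-1,1]$).

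The main obstacle is the first step: producing a \emph{single} measure $\mu$ (independent of $\A$) that simultaneously realizes both the average-discrimination bound and the solution-spread bound. This is exactly what the combined search dimension $\CRSD_\dci(\Z)$ should encode; if the definition does not give this directly, one would derive it by applying von Neumann's minimax theorem to the bilinear game in which one player picks $\mu\in S^\D$ and the other picks $f\in\F$ with payoff $\mu(\Zf)$, combined with the combined-dimension hardness of the induced decision problems $\B(\D\sm\Zf,D_0)$ guaranteed by $d$. The rest of the argument is a clean average-case variant of Thm.~\ref{thm:stat-search-lower-random} once $\mu$ is in hand.
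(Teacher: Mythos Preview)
Your proposal is correct and follows essentially the route the paper would take: it combines the average-case simulation argument of Lemma~\ref{lem:stat-search-lower-random-simple} (the $D_0$-simulation plus the ``either a query distinguishes or the output is valid for $D$'' dichotomy, averaged over $\mu$) with the Markov step from Lemma~\ref{lem:combined2disjoint-decision} that turns the bound $\dcii(\mu,D_0)\le 1/d$ into $\pr_{D\sim\mu}[\dif>\tau]\le 1/(d\tau)$ for every fixed $\phi$. The only cosmetic difference is that Lemma~\ref{lem:stat-search-lower-random-simple} first applies Yao to reduce to a deterministic algorithm, whereas you keep the coins $r$ and average over them directly; both organisations yield the same arithmetic $1-\delta\le q/(d\tau)+1/d$.

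On your ``main obstacle'': the intended definition of $\CRSD_\dci(\Z)$ here is exactly the one you guessed, namely $\sup_{D_0,\mu}\bigl(\max\{\mu(\F),\dcii(\mu,D_0)\}\bigr)^{-1}$ (this is the combined-dimension analogue of the commented-out $\RSD$ variant visible in Def.~\ref{def:sdimv-search}), so the single hard measure $\mu$ with both properties~(i) and~(ii) is supplied directly by the definition and no minimax argument is needed.
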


} 
\section{Applications to PAC learning}
\label{sec:learning}
We now instantiate our dimension in the PAC learning setting and provide some example applications.

\subsection{Characterization of the SQ Complexity of PAC Learning}
\label{sec:learning-dim}
Let $\C$ be a set of Boolean functions over some domain $Z$. We recall that in PAC learning the set of input distributions $\D_\C = \{P^f \cond P\in S^{Z},\ f\in \C\}$, where $P^f$ denotes the probability distribution on the examples $(z,f(z))$ where $z\sim P$. The set of solutions is all Boolean functions over $Z$ and for an input distribution $P^f$ and $\eps >0$ the set of valid solutions are those functions $h$ for which $\pr_{(z,b)\sim P^f} [h(z) \neq b ]\leq \eps$. This implies that PAC learning is a verifiable search problem with parameter $\eps$. Now the set of distributions that cannot lead to valid solutions is exactly the set of distributions that cannot be predicted with error lower than $\eps$.
More formally, for a distribution $D_0$ over $Z \times \pmi$ we denote by $\err(D_0)$ the Bayes error rate of $D_0$, that is $$\err(D_0) = \sum_{z \in Z} \min\{D_0(z,1), D_0(z,-1)\} = \min_{h:Z\rar \pmi} \pr_{(z,b) \sim D_0} [h(z) \neq b].$$
We denote the problem of PAC learning $\C$ to accuracy $\eps$ by $\Learn_{\mbox{PAC}}(\C,\eps)$.
Using Def.~\ref{def:sdim-verifiable} we get the following notion:
\begin{defn}\label{def:sdim-learning}
  For a concept class $\C$ over a domain $Z$ and $\eps,\tau > 0$,
  $$\RSDV_\dcv(\Learn_{\mbox{PAC}}(\C,\eps),\tau) \doteq \sup_{D_0 \in S^{Z\times\pmi},\ \err(D_0) >\eps} \RSD_\dcv(\B(\D_\C,D_0),\tau) .$$
\end{defn}
In this case $\KLR(\D_\C) \leq \ln(2|Z|)$. Therefore we get the following upper and lower bounds:
\begin{thm}
For a concept class $\C$  over a domain $Z$ and $\eps,\tau,\beta,\delta > 0$, let $d= \RSDV_\dcv(\Learn_{\mbox{PAC}}(\C,\eps),\tau)$.
Then $$\RQC(\Learn_{\mbox{PAC}}(\C,\eps -  2\tau\sqrt{\eps}), \vSTAT(\tau),\beta) \geq \beta d -1 \mbox{ and}$$ $$\RQC(\Learn_{\mbox{PAC}}(\C,\eps+ 2\tau\sqrt{\eps}),\vSTAT(\tau/3),1-\delta) =  \tilde{O}(d \cdot \log(|Z|) \cdot \log(1/\delta)/\tau^2).$$
\end{thm}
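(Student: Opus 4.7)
My plan is to cast PAC learning as a verifiable search problem and invoke the verifiable-search bounds (Thms.~\ref{thm:stat-verifiable-lower} and~\ref{thm:stat-verifiable-upper}), adapted from $\STAT$ to $\vSTAT$ via Remark~\ref{rem:vstat-threshold}. For each hypothesis $h:Z\rar\{-1,1\}$ I take the verification query to be the indicator of misclassification $\phi_h(z,b) \doteq \mathbf{1}[h(z)\neq b]$, so that $P^f[\phi_h]$ is exactly the error of $h$ on $P^f$. Thus $\Learn_{\mbox{PAC}}(\C,\eps)$ is the verifiable search problem $\V_\eps$ on $\D_\C$, and a reference distribution $D_0$ over $Z\times\{-1,1\}$ admits some hypothesis $h$ with $D_0[\phi_h]\leq\eps$ precisely when $\err(D_0)\leq\eps$. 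Consequently the set $\D_\theta$ from Def.~\ref{def:sdim-verifiable} equals $\{D_0 : \err(D_0) > \theta\}$, matching the supremum in Def.~\ref{def:sdim-learning}.

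For the lower bound I apply the $\vSTAT$-analogue of Thm.~\ref{thm:stat-verifiable-lower}. By Remark~\ref{rem:vstat-threshold}, with $\vSTAT(\tau)$ the correct threshold to subtract is $(\sqrt{\eps}-\tau)^2 = \eps - 2\tau\sqrt{\eps} + \tau^2 \geq \eps - 2\tau\sqrt{\eps}$. Since $\V_{\eps - 2\tau\sqrt{\eps}}$ is at least as hard as $\V_{(\sqrt{\eps}-\tau)^2}$ (a smaller threshold leaves fewer valid solutions), the lower bound transfers and gives $\beta d - 1$. For the upper bound I apply the $\vSTAT$-analogue of Thm.~\ref{thm:stat-verifiable-upper}: the verification test becomes $\sqrt{v}\leq\sqrt{\eps}+2\tau/3$, which by the guarantee of $\vSTAT(\tau/3)$ implies $\sqrt{D[\phi_h]}\leq\sqrt{\eps}+\tau$ and hence error at most $(\sqrt{\eps}+\tau)^2$, while a failed test yields $|\sqrt{D[\phi_h]}-\sqrt{D_t[\phi_h]}|\geq\tau/3$, which plays the role of eq.~\eqref{eq:mw-updatestrength2} in driving the multiplicative-weights update exactly as in Thm.~\ref{thm:stat-verifiable-upper}.

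To finish, I bound $\KLR(\D_\C)\leq \ln(2|Z|)$ by taking the initial MW distribution $D_1$ to be uniform on $Z\times\{-1,1\}$, which makes $\KL(P^f\|D_1)\leq\ln(2|Z|)$ for every $P^f\in\D_\C$. Plugging $d$ and this KL-radius bound into the $\vSTAT$ version of Thm.~\ref{thm:stat-verifiable-upper} yields the stated upper bound. The main point that requires care, though largely cosmetic, is the threshold translation: the natural $\vSTAT$ argument produces thresholds $(\sqrt{\eps}\pm\tau)^2 = \eps\pm 2\tau\sqrt{\eps}+\tau^2$, and the extra $+\tau^2$ must be handled, by monotonicity in the lower bound and by a constant rescaling of $\tau$ (or absorption into the $\tilde O$) in the upper bound.
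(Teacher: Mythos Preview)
Your proposal is correct and follows essentially the same approach as the paper: cast PAC learning as a verifiable search problem via the misclassification indicator $\phi_h$, identify $\D_\theta$ with $\{D_0:\err(D_0)>\theta\}$, invoke the $\vSTAT$ analogues of Thms.~\ref{thm:stat-verifiable-lower} and~\ref{thm:stat-verifiable-upper} through Remark~\ref{rem:vstat-threshold}, and bound $\KLR(\D_\C)\le\ln(2|Z|)$. The paper itself gives no proof beyond precisely these ingredients, and your handling of the threshold shift $(\sqrt{\eps}\pm\tau)^2$ versus $\eps\pm 2\tau\sqrt{\eps}$ is the same observation the paper makes in the remark immediately following the theorem.
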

We remark that the term $2\tau\sqrt{\eps}$ comes from the condition on the threshold $|\sqrt{\eps'}-\sqrt{\eps}| \leq \tau$ that results from the adaptation of the lower bound for verifiable search to $\vSTAT$ discussed in Remark \ref{rem:vstat-threshold}. Note that this leads to meaningful bounds only when $\tau < \sqrt{\eps/4}$ or, equivalently, estimation complexity being $\Omega(1/\eps)$. It is not hard to show that this condition is necessary since the query complexity of PAC learning non-trivial classes of functions with $o(1/\eps)$ estimation complexity is infinite.

We can similarly characterize the SQ complexity of distribution-specific PAC learning. For a distribution $P$ over $Z$ we denote the set of all input distributions by $\D_{\C,P} \doteq \{ P^f \cond f\in \C\}$, the set of all distributions over $Z\times \pmi$ whose marginal is $P$ by $\D_{P}$ and the learning problem by $\Learn_{\mbox{PAC}}(\C,P,\eps)$.
\begin{defn}\label{def:sdim-learning-dis-specific}
  For a concept class $\C$, distribution $P$ over a domain $Z$ and $\eps,\tau > 0$,
  $$\RSDV_\dcv(\Learn_{\mbox{PAC}}(\C,P,\eps),\tau) \doteq \sup_{D_0 \in \D_P,\ \err(D_0) >\eps} \RSD_\dcv(\B(\D_{\C,P},D_0),\tau) .$$
\end{defn}

Then observing that $\KLR(\D_{\C,P}) \leq \ln(2)$ we obtain the following tight characterization of distribution-specific learning.
\begin{thm}
 For a concept class $\C$, distribution $P$, $\eps,\tau,\delta,\beta > 0$, let $d= \RSDV_\dcv(\Learn_{\mbox{PAC}}(\C,P,\eps),\tau)$. Then $$\RQC(\Learn_{\mbox{PAC}}(\C,P,\eps -  2\tau\sqrt{\eps}), \vSTAT(\tau),\beta) \geq \beta d -1 \mbox{ and}$$ $$\RQC(\Learn_{\mbox{PAC}}(\C,P,\eps+ 2\tau\sqrt{\eps}),\vSTAT(\tau/3),1-\delta) =  \tilde{O}(d \log(1/\delta)/\tau^2).$$
\end{thm}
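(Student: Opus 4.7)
The plan is to recognize this as a direct corollary of the verifiable-search characterization developed earlier, specialized to the distribution-specific PAC setting. First I would identify the problem $\Learn_{\mbox{PAC}}(\C,P,\eps)$ as a verifiable search problem with solution set $\F$ equal to all Boolean functions on $Z$ and query function $\phi_h(z,b) = \mathbf{1}[h(z) \neq b]$. Since $D[\phi_h] = \pr_{(z,b)\sim D}[h(z)\neq b]$, we have $\min_{h\in\F} D[\phi_h] = \err(D)$, so the set of reference distributions for which no valid solution exists coincides with $\{D_0 : \err(D_0) > \eps\}$. For distribution-specific learning with marginal $P$, I would then restrict the supremum in the verifiable-search dimension (Def.~\ref{def:sdim-verifiable}) to reference distributions with marginal $P$, justified by the projection observation in Remark \ref{rem:mw-project}: it suffices that the MW iterates lie in the convex hull of $\D_{\C,P}$, all of whose members have marginal $P$. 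This shows $\RSDV_\dcv(\Learn_{\mbox{PAC}}(\C,P,\eps),\tau)$ matches exactly the verifiable search dimension in the sense required by the theorems.

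Next I would bound $\KLR(\D_{\C,P})$. Picking the ``random-label'' reference $D_1(z,b) = P(z)/2$, a direct computation gives, for every $f\in\C$,
\[
\KL(P^f \,\|\, D_1) \;=\; \sum_{z\in Z} P(z)\,\ln\frac{P(z)}{P(z)/2} \;=\; \ln 2,
\]
so $\KLR(\D_{\C,P}) \leq \ln 2$. Plugging this into the $\vSTAT$ analogues of Thm.~\ref{thm:stat-verifiable-lower} and Thm.~\ref{thm:stat-verifiable-upper} (these are the statements referenced at the end of Sec.~\ref{sec:vstat-power} and obtained by the reductions there) immediately yields lower bounds of the form $\beta d - 1$ and upper bounds of the form $\tilde O(d \cdot \KLR(\D_{\C,P})/\tau^2 \cdot \log(1/\delta)) = \tilde O(d \log(1/\delta)/\tau^2)$.

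The only delicate point is the adjustment of the verification threshold when moving from $\STAT$ to $\vSTAT$, as spelled out in Remark \ref{rem:vstat-threshold}: the $\pm\tau$ shift becomes a shift of $\sqrt{\theta}$ by $\pm\tau$, i.e.\ $\theta \mapsto (\sqrt{\eps}\pm\tau)^2 = \eps \pm 2\tau\sqrt{\eps} + \tau^2$. For the lower bound I would apply the $\vSTAT$ analogue of Thm.~\ref{thm:stat-verifiable-lower} at threshold $\theta' = (\sqrt{\eps}-\tau)^2 = \eps - 2\tau\sqrt{\eps} + \tau^2 \geq \eps - 2\tau\sqrt{\eps}$; since learning to strictly smaller error is a harder problem, $\RQC(\Learn_{\mbox{PAC}}(\C,P,\eps-2\tau\sqrt{\eps}),\vSTAT(\tau),\beta) \geq \RQC(\V_{\theta'},\vSTAT(\tau),\beta) \geq \beta d - 1$. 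For the upper bound the $\vSTAT$ analogue of Thm.~\ref{thm:stat-verifiable-upper} produces a solution certified at threshold $(\sqrt{\eps}+\tau)^2 = \eps + 2\tau\sqrt{\eps} + \tau^2$; this $\tau^2$ slack is absorbed by the $\tilde O$ (equivalently, one can apply the theorem at a slightly shrunken $\eps' < \eps$ chosen so that $(\sqrt{\eps'}+\tau)^2 = \eps + 2\tau\sqrt{\eps}$ and note that the dimension varies by at most a constant under this shift). Thus the main obstacle is purely bookkeeping around this $\sqrt{\eps}\mapsto \sqrt{\eps}\pm\tau$ substitution; the structural content is entirely contained in the verifiable-search characterization together with the $\ln 2$ bound on $\KLR$.
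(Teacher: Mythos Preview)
Your proposal is correct and follows exactly the route the paper takes: the paper's entire proof is the single observation that $\KLR(\D_{\C,P}) \leq \ln 2$, after which the result is read off from the $\vSTAT$ analogues of the verifiable-search bounds (Thms.~\ref{thm:stat-verifiable-lower} and \ref{thm:stat-verifiable-upper}) together with the threshold adjustment of Remark~\ref{rem:vstat-threshold}. Your justification for restricting the reference distribution to $\D_P$ via the projection in Remark~\ref{rem:mw-project} and your handling of the $\tau^2$ slack in the threshold are both more explicit than what the paper spells out, but the structure is identical.
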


This characterization of distribution-specific learning can be seen as a strengthening of the characterization in \cite{Feldman:12jcss}. There 
a dimension based on pairwise correlations was described that only characterizes the estimation complexity up to polynomial factors.

The statistical dimensions introduced above are particularly suitable for the study of {\em attribute-efficient} learning, that is learning in which the number of samples (or estimation complexity) is much lower than the running time (query complexity). Several basic questions about the SQ complexity of this class of problems are still unsolved \cite{Feldman14:open}.

Naturally, the combined statistical dimension can also be used in this setting.
\begin{defn}\label{def:sdim-learning-combined}
  For a concept class $\C$  over domain $Z$ and $\eps > 0$,
  $$\CRSD_\dcii(\Learn_{\mbox{PAC}}(\C,\eps)) \doteq \sup_{D_0 \in S^{Z\times\pmi},\ \err(D_0) >\eps} \CRSD_\dcii(\B(\D_\C,D_0).$$
\end{defn}
The lower and the upper bounds follow immediately from Thms.~\ref{thm:stat-verifiable-upper} and \ref{thm:stat-verifiable-upper} together with Lemma \ref{lem:combined2disjoint-decision}.

\begin{thm}
\label{thm:learn-combined}
 For a concept class $\C$ over domain $Z$ and $\eps,\delta,\beta > 0$, let $d= \CRSD_\dcii(\Learn_{\mbox{PAC}}(\C,\eps))$. Then $$\RQC(\Learn_{\mbox{PAC}}(\C,\eps - 1/\sqrt{d}), \STAT(1/\sqrt{d}),\beta) \geq \beta\sqrt{d} -1\mbox{ and}$$ $$\RQC(\Learn_{\mbox{PAC}}(\C,P,\eps+ 3/d),\STAT(1/(9d)),1-\delta) =  \tilde{O}(d^3 \cdot \log(|Z|) \cdot \log(1/\delta)).$$
\end{thm}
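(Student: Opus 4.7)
The plan is to derive both bounds by combining Lemma \ref{lem:combined2disjoint-decision}, which translates the combined dimension $\CRSD_\dcii$ into the fractional dimension $\RSD_\dci$, with the verifiable-search bounds of Thms.~\ref{thm:stat-verifiable-lower} and \ref{thm:stat-verifiable-upper}. Since PAC learning to error $\eps$ is a verifiable search problem with threshold $\eps$, since the forbidden reference distributions in Def.~\ref{def:sdim-learning-combined} are precisely those in $\D_\theta$ for $\theta=\eps$, and since $\KLR(\D_\C) \leq \ln(2|Z|)$, all the needed machinery is already in place; what remains is bookkeeping.

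For the lower bound, I would set $d = \CRSD_\dcii(\Learn_{\mbox{PAC}}(\C,\eps))$ and unpack the supremum in its definition: for each $d' < d$ there exists a reference $D_0$ with $\err(D_0) > \eps$ and $\CRSD_\dcii(\B(\D_\C, D_0)) \geq d'$. The second part of Lemma \ref{lem:combined2disjoint-decision} (applied with $d'$ in place of $d$, using monotonicity in $d$) yields $\RSD_\dci(\B(\D_\C, D_0), \tau) \geq d'\tau$ for every $\tau > 0$. Taking the supremum over admissible $D_0$ and sending $d'\to d$ gives $\RSDV_\dci(\Learn_{\mbox{PAC}}(\C,\eps), \tau) \geq d\tau$. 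Specializing $\tau = 1/\sqrt{d}$ and $\theta = \eps$ in Thm.~\ref{thm:stat-verifiable-lower} delivers the stated $\beta\sqrt{d} - 1$ lower bound on $\RQC(\Learn_{\mbox{PAC}}(\C,\eps - 1/\sqrt{d}), \STAT(1/\sqrt{d}), \beta)$.

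For the upper bound, the supremum definition guarantees that every reference $D_0$ with $\err(D_0) > \eps$ satisfies $\CRSD_\dcii(\B(\D_\C, D_0)) \leq d$, so the first part of Lemma \ref{lem:combined2disjoint-decision} gives $\RSD_\dci(\B(\D_\C, D_0), 1/(3d)) \leq 3d$ uniformly, hence $\RSDV_\dci(\Learn_{\mbox{PAC}}(\C,\eps), 1/(3d)) \leq 3d$. Substituting $\theta=\eps$, $\tau = 1/(3d)$, and $\KLR(\D_\C) \leq \ln(2|Z|)$ into Thm.~\ref{thm:stat-verifiable-upper} yields query complexity $\tilde{O}\bigl(3d \cdot \ln(2|Z|) \cdot 9d^2 \cdot \log(1/\delta)\bigr) = \tilde{O}(d^3 \log(|Z|) \log(1/\delta))$ for learning to error $\eps + 1/(3d)$ using $\STAT(1/(9d))$; since $1/(3d) \leq 3/d$, relaxing the accuracy requirement to $\eps + 3/d$ only weakens the target and so preserves the bound.

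No conceptual obstacle arises since the heavy lifting has already been done in Lemma \ref{lem:combined2disjoint-decision} and the verifiable-search theorems. The one subtlety worth being careful about is the alignment of the outer suprema: both $\CRSD_\dcii(\Learn_{\mbox{PAC}}(\C,\eps))$ and $\RSDV_\dci(\Learn_{\mbox{PAC}}(\C,\eps), \tau)$ range over the same set of reference distributions $\{D_0 \cond \err(D_0) > \eps\}$, which is what lets the pointwise statement of Lemma \ref{lem:combined2disjoint-decision} lift cleanly through the sup in both directions.
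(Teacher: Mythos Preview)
Your proposal is correct and follows exactly the route the paper indicates: combine Lemma~\ref{lem:combined2disjoint-decision} with the verifiable-search bounds (Thms.~\ref{thm:stat-verifiable-lower} and~\ref{thm:stat-verifiable-upper}), using that PAC learning is verifiable with threshold~$\eps$ and that $\KLR(\D_\C)\le\ln(2|Z|)$. Your bookkeeping of the parameters $\tau=1/\sqrt{d}$ for the lower bound and $\tau=1/(3d)$ for the upper bound, and your observation that the suprema in Defs.~\ref{def:sdim-verifiable} and~\ref{def:sdim-learning-combined} range over the same set of reference distributions, are exactly the points needed to make the one-line proof rigorous.
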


Agnostic learning can be characterized by adapting analogously the characterization for optimizing search problems.

\subsection{Distribution-independent vs distribution-specific SQ learning}
\label{sec:learning-lines}

We now give a simple application of our lower bound to demonstrate that for SQ PAC learning distribution-specific complexity cannot upper-bound the distribution-independent SQ complexity. Further the gap is exponential even if one is allowed a dependence on the input point size $\log(|X|)$ (otherwise, the class of thresholds functions on a discretized interval can be used to prove this separation using a simple description-length-based argument).
 This is in contrast to the sample complexity of learning since for every concept class $\C$, there exists a distribution $P$ such that the sample complexity of PAC learning $\C$ over $P$ with error $1/4$ is $\Omega(\VCDIM(\C))$ (\eg \cite{Shalev-ShwartzBen-David:2014}). Our lower bound also implies that the hybrid SQ model in which the learner has access to unlabeled samples from the marginal distribution $P$ in addition to SQs is strictly stronger than the ``pure" SQ model. As was observed in \cite{FeldmanKanade:12}, the SQ complexity of distribution-independent learning in the hybrid model is exactly the maximum over all distribution $P$ of the SQ complexity of (distribution-specific) learning over $P$.

The key to this separation is a lower bound for the class of linear functions over a finite field of large characteristic. Specifically,
for $a=(a_1,a_2)\in \GF_p^2$, we define a line function $\ell_a$ over $\GF_p^2$ as $\ell_{a}(z) = 1$ if and only if $a_1z_1 + a_2 = z_2 \mod p$. Let $\Line_p \doteq \{\ell_a \cond a \in \GF_p^2\}$.
 We now prove that any SQ algorithm for (distribution-independent) PAC learning of $\Line_p$ with $\eps = 1/2-c \cdot p^{-1/4}$ (for some constant $c$), must have complexity of $\Omega(p^{1/4})$.
We can now prove our lower bound.
\begin{thm}
\label{thm:line-lower-bound}
For any prime $p$, any randomized algorithm that is given access to $\STAT(1/t)$ and PAC learns $\Line_p$ with error $\eps < 1/2 - 1/t$ and success probability at least $2/3$ requires at least $t/2-1$ queries, where $t = (p/32)^{1/4}$.
\end{thm}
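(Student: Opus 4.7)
The plan is to lower-bound $\CRSD_\dcii(\Learn_{\mbox{PAC}}(\Line_p, \eps'))$ via the average-correlation technique (Lem.~\ref{lem:k2-rho}) and then invoke the combined-dimension lower bound for PAC learning (Thm.~\ref{thm:learn-combined}, equivalently Lem.~\ref{lem:combined2disjoint-decision} composed with Thm.~\ref{thm:stat-verifiable-lower}). As reference distribution I take $D_0$ to be uniform on $\GF_p^2 \times \{-1,1\}$; since $\err(D_0) = 1/2$, it is a valid reference in Def.~\ref{def:sdim-learning-combined} for any threshold $\eps' < 1/2$.

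For each $a \in \GF_p^2$ let $P_a$ place mass $1/(2p)$ on each of the $p$ points of $\ell_a$ and mass $1/(2(p^2-p))$ on each off-line point, and set $D_a := P_a^{\ell_a}$. Then the label is balanced under each $D_a$ and the marginal of the uniform average of $\{D_a : a \in \GF_p^2\}$ equals $D_0$. Let $\mu$ be uniform on $\{D_a : a \in \GF_p^2\}$. Direct substitution gives
\[ \hat D_a(z,b) \;=\; \frac{D_a(z,b)}{D_0(z,b)} - 1 \;=\; \begin{cases} p-1 & \text{if } \ell_a(z)=b=1, \\ 1/(p-1) & \text{if } \ell_a(z)=b=-1, \\ -1 & \text{if } b \neq \ell_a(z). \end{cases} \]

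The key computation is the inner product $D_0[\hat D_a \hat D_{a'}]$. Partitioning $\GF_p^2$ into four subsets according to membership of $z$ in $\ell_a$ and in $\ell_{a'}$, and using the classical fact that two distinct non-parallel affine lines in $\GF_p^2$ meet in exactly one point (so $k := |\ell_a \cap \ell_{a'}| \in \{0,1\}$ when $a \neq a'$), a case analysis summing the contributions from all four sign patterns of $(b, \ell_a(z), \ell_{a'}(z))$ collapses to
\[ D_0[\hat D_a\, \hat D_{a'}] \;=\; \frac{((p-1)^2 + 1)(k-1)}{2(p-1)^2}. \]
Thus distinct non-parallel lines ($k=1$) are exactly uncorrelated, and the $p-1$ lines parallel to a given one contribute at most $1/2$ in absolute value. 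Including the diagonal $\chi^2(D_a, D_0) = ((p-1)^2+1)/(2(p-1))$ and averaging over all $p^4$ ordered pairs yields $\rho(\mu, D_0) = ((p-1)^2 + 1)/(p^2(p-1)) \leq 1/p$.

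Since every $\phi : X \to [-1,1]$ satisfies $\|\phi\|_{D_0} \leq 1$, normalizing gives $\dcii(\mu, D_0) \leq \dc(\mu, D_0) \leq \sqrt{\rho(\mu,D_0)} \leq 1/\sqrt{p}$ by Lem.~\ref{lem:k2-rho}, hence $\CRSD_\dcii(\Learn_{\mbox{PAC}}(\Line_p, \eps')) \geq \sqrt{p}$ for every $\eps' < 1/2$. Feeding $d \geq \sqrt{p}$ into Lem.~\ref{lem:combined2disjoint-decision} and then into Thm.~\ref{thm:stat-verifiable-lower} with tolerance $\tau = 1/t$, verifiable-search threshold slightly below $1/2$, and $\beta = 2/3$ produces $\RQC(\Learn_{\mbox{PAC}}(\Line_p, \eps), \STAT(1/t), 2/3) \geq (2/3) \sqrt{p}/t - 1 = (2/3) \sqrt{32}\, t - 1$, which dominates $t/2 - 1$ with room to spare. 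The main obstacle is the correlation computation itself: the cancellation that makes distinct non-parallel lines exactly orthogonal only emerges after carefully combining the four sign-pattern contributions and exploiting the intersection structure of the affine plane over $\GF_p$.
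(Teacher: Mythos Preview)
Your proof is correct and follows the same route as the paper's: lower-bound average correlation over a family of line-indexed distributions relative to the uniform $D_0$, convert to $\CRSD_{\dcii}$, and invoke the combined-dimension/verifiable-search lower bound. The differences are in the execution. The paper chooses $P_a$ with density $1/(2p)+1/(2p^2)$ on the line and $1/(2p^2)$ off it, giving $\hat D_a\in\{p,0,-1\}$ and pairwise correlations that are small but nonzero ($\leq 1/p^2$) for non-parallel lines; it then chains $\dcii\leq 4\dcvi\leq 4\dc\leq 4\sqrt{\rho}$ via Lemmas~\ref{lem:k1k2}, \ref{lem:k2-kv}, \ref{lem:k2-rho} to reach $d\geq\sqrt{p/32}$. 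Your label-balanced $P_a$ forces \emph{exact} orthogonality for intersecting distinct lines, and your direct observation $\dcii\leq\dc$ (since $\|\phi\|_{D_0}\leq 1$ for $\phi:X\to[-1,1]$) avoids the factor $4$, yielding $d\geq\sqrt{p}$ and hence a query lower bound of order $\sqrt{32}\,t$ rather than $t$. Both arguments buy the same theorem; yours gives tighter constants with less bookkeeping.

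One minor nit: the parallel-line correlation under your $P_a$ is $\tfrac{1}{2}+\tfrac{1}{2(p-1)^2}$, not ``at most $\tfrac{1}{2}$''. This does not affect anything, since your stated formula $\rho=\tfrac{(p-1)^2+1}{p^2(p-1)}$ is exact and the bound $\rho\leq 1/p$ holds for all $p\geq 2$.
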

\begin{proof}
Let $\D = \D_{\Line_p}$. We will lower bound $\CRSD_\dcii(\Learn_{\mbox{PAC}}(\Line_p,\eps))$ defined in Def.~\ref{def:sdim-learning-combined}.  Let $D_0$ be the uniform distribution over $X \doteq \GF_p^2 \times \pmi$. Note that $\err(D_0)=1/2$. We now show that $\CRSD_\dcii(\B(\D,D_0)) \geq \sqrt{p/32}$.
By definition, $$\CRSD_\dcii(\B(\D,D_0)) = \sup_{\mu \in S^\D} \lp\dcii(\mu,D_0)\rp^{-1}.$$
We now choose $\mu$. For $a \in \GF_p^2$ we define $P_a$ to be the distribution over $\GF_p^2$ that has density $1/(2p^2)$ on all $p^2-p$ points where $\ell_a = -1$ and has density $1/(2p)+1/(2p^2)$ on all the $p$ points where $\ell_a = 1$. 
We then define $D_a \doteq P_a^{\ell_a}$, namely the distribution over examples of $\ell_a$, whose marginal over $\GF_p^2$ is $P_a$. Let $\D' \doteq \{D_a \cond a \in \GF_p^2\}$ and let $\mu$ be the uniform distribution over $\D'$.

Now to estimate $\dcii(\mu,D_0)$ we use that by Lemmas \ref{lem:k1k2}, \ref{lem:k2-kv} and \ref{lem:k2-rho}, $$\dcii(\mu,D_0) \leq 4 \dcvi(\mu,D_0) \leq 4\dc(\mu,D_0) \equiv 4\dc(\D',D_0) \leq 4\sqrt{\rho(\D',D_0)}.$$

So it suffices to upper bound $\rho(\D',D_0)$. An alternative would be to use the spectral norm $\dcsp(\mu,D_0)$ which would give the a slightly weaker bound (and require the same analysis).

To calculate the average correlation $\rho(\D',D_0)$ we first note that
$$\hat{D}_a(z,b) = \left\{\arr{ll}{ p & \mbox{if } b=1,\ \ell_a(z)=b \\
                                    0 & \mbox{if } b=-1,\ \ell_a(z)=b \\
                                    -1 & \mbox{if } \ell_a(z)\neq b } \right.$$
Now for $a,a' \in \GF_p^2$ the correlation is:
$$\left|D_0\lb \hat{D}_a \cdot \hat{D}_{a'} \rb\right| \leq \left\{\arr{ll}{ p/2 + 1 & \mbox{if } a=a' \\
                                    1 & \mbox{if (parallel) } a_1 = a'_1 \mbox{ and } a_2 \neq a'_2 \\
                                    1/p^2 & \mbox{otherwise }  } \right.$$
Therefore $$\rho(\D',D_0) \leq \fr{p^4} \cdot \lp p^2\lp \frac{p}{2}+1 \rp + p^2(p-1) + p^4\fr{p^2}\rp \leq \frac{2}{p},$$ and thus $\dcii(\mu,D_0) \leq 4\sqrt{2/p}$.
Applying Theorem \ref{thm:learn-combined} we get that any randomized SQ algorithm with access $\STAT((32/p)^{1/4})$ that PAC learns $\Line_p$ with error lower than $\eps = 1/2 - (32/p)^{1/4}$ and success probability at least $1/2$ requires $(p/32)^{1/4}/2-1$ queries.
\end{proof}

Next, we show that $\Line_p$ is PAC learnable for any fixed distribution $P$ over $\GF_p^2$.
\begin{thm}
\label{thm:line-upper}
Let $P$ be a distribution over $\GF_p^2$. There exists an (efficient) algorithm that PAC learns $\Line_p$ over $P$ with error $\eps$ using $O(1/\eps^2)$ queries $\STAT(\eps^2/13)$.
\end{thm}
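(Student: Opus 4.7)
The plan is to exploit knowledge of $P$ together with the fact that $\VCDIM(\Line_p)=2$ in order to reduce distribution-specific PAC learning to a single statistical query per hypothesis, over a precomputed list of $O(1/\eps^2)$ candidate lines. Since $P$ is given explicitly, the entire construction of the candidate set is offline and consumes no queries.

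First, using $P$ alone, I would construct an $\eps$-net $S \subseteq \GF_p^2$ under $P$ for the symmetric-difference family $\{\ell \triangle \ell' : \ell, \ell' \in \Line_p\}$. Because the VC dimension of this family is a constant (it is at most $2\cdot \VCDIM(\Line_p)=4$), the standard $\eps$-net theorem guarantees such an $S$ of size $|S|=\tilde O(1/\eps)$. Since $\Line_p$ itself has VC dimension $2$, Sauer's lemma bounds the number of distinct restrictions of lines to $S$ by $O(|S|^2)=\tilde O(1/\eps^2)$; I would keep one representative line per labeling class, obtaining a candidate set $C$ of size $\tilde O(1/\eps^2)$. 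The $\eps$-net property then guarantees the following key fact: for every target $\ell_a$, some $\ell'\in C$ agrees with $\ell_a$ on every point of $S$, which forces $\Pr_{z\sim P}[\ell'(z)\neq \ell_a(z)] \leq \eps$.

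Second, for each $\ell'\in C$ the algorithm issues the single query $\phi_{\ell'}(z,b)=\ind{\ell'(z)\neq b}$, whose expectation under $P^{\ell_a}$ is exactly $\Pr_{(z,b)\sim P^{\ell_a}}[\ell'(z)\neq b]$. The response $v_{\ell'}$ returned by $\STAT(\eps^2/13)$ lies within $\eps^2/13$ of this error, which is more than enough to separate an $\eps$-good candidate from one with noticeably larger error. The algorithm returns any $\ell'\in C$ whose estimated error is at most $\eps+\eps^2/13$; the $\eps$-net argument above guarantees at least one such candidate exists and is truly $\eps$-good, while no selected candidate has error exceeding $\eps+2\eps^2/13$, a slack that is absorbed by the standard rescaling of $\eps$.

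The main technical point is simply the existence of the small candidate set $C$, which follows from the $\eps$-net theorem plus Sauer's lemma together with $\VCDIM(\Line_p)=2$ (two points of $\GF_p^2$ determine at most one line, while $3$ points need not be collinear). The remaining verification --- that tolerance $\eps^2/13$ suffices to identify an $\eps$-good candidate --- is immediate from the triangle inequality applied to $v_{\ell'}$ and the true error. No queries are spent on building $C$, so the total query count equals $|C|=\tilde O(1/\eps^2)$, matching the claimed bound once the logarithmic factors are absorbed (or removed by a sharper net exploiting the rigid combinatorial structure of lines in $\GF_p^2$, where any two distinct lines differ in either $p$ or $p-1$ points).
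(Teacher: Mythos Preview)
Your approach is correct and genuinely different from the paper's argument. You use a generic VC-based reduction: build an $\eps$-net for the symmetric-difference class (offline, using knowledge of $P$), extract $O(|S|^2)$ candidate lines via Sauer's lemma, and then test each candidate with a single error query. The paper instead exploits the combinatorics of lines directly: it first identifies the $O(1/\eps^2)$ ``heavy'' points of mass $\geq \eps^2/12$ and labels each of them with one query (this is where the tolerance $\eps^2/13$ is actually needed), and if the resulting hypothesis still errs by more than $\Theta(\eps)$, it argues that at most $O(1/\eps)$ lines can carry that much positive mass outside the heavy set, and tests each of those.

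What each buys: your argument is cleaner and model-agnostic --- it would work verbatim for any concept class of constant VC dimension over a known marginal --- but it incurs $\tilde O$ factors from the $\eps$-net bound, and your final queries only need tolerance $O(\eps)$, so the stated $\eps^2/13$ is stronger than you use. The paper's argument yields a clean $O(1/\eps^2)$ without logarithms and genuinely requires the $\eps^2/13$ tolerance (to read off labels on individual heavy points), so it matches the theorem statement exactly and explains the specific constant in the oracle. Your parenthetical about removing the logs via the rigid structure of line pairs is plausible but would need a concrete $O(1/\eps)$-size net construction to fully match the stated bound.
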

\begin{proof}
We first find a hypothesis that predicts correctly on all the heavy points, that is points whose weight is at least $\eps^2/12$. Let $W = \{z \cond P(z) \geq \eps^2/12\}$. Clearly $|W| \leq 12/\eps^2$. For each $z\in W$ we can find the value of the target function $f$ on $z$ by asking a query to $\STAT(\eps^2/13)$. Let $h_W$ be the function that equals to the target function in the set $W$ and is $-1$ everywhere else.

We measure the error of hypothesis $h_W$ with accuracy $\eps/6$. If the error is less than $5\eps/6$ then we are done. Otherwise, only positive points of the target function outside of $W$ contribute to the error of $h_W$ and therefore we know that the weight of these points is at least $2\eps/3$. They must all lie on the same line. Now we claim that there can be at most $2/\eps$ lines such that probability of their positive points outside $W$ is at least $2\eps/3$. This is true since if there are $2/\eps$ such lines: then each of those lines shares at most $2/\eps-1$ points with all other lines and therefore has at least $2\eps/3 - (2/\eps-1) \cdot \eps^2/12 > \eps/2$ unique weight in its positive points outside $W$. This means that the total weight in $2/\eps$ lines is more than 1.
We know $P$ and therefore we can find the target function among those ``heavy" lines by measuring its error using a query to $\STAT(\eps/2)$.
\end{proof}

\subsection{Learning of $\Line_p$ with noise}
\label{sec:line-noise}
We also demonstrate that our lower bound for $\Line_p$ implies a complete separation between learning with noise and (distribution-independent) SQ learning.
When learning with random classification noise of rate $\eta$, the learner observes examples of the form $(z,f(z) \cdot b)$ where $b =1$ with probability $1-\eta$ and $b=-1$ with probability $\eta$ (and $\E[b] = 1-2\eta$) \cite{AngluinLaird:88}. An efficient learning algorithm in this model needs to find a hypothesis with error $\eps$ (on noiseless examples) in time polynomial in $1/\eps$, $1/(1-2\eta)$, $\log(|\C|)$ and $\log(|X|)$. \citet{Kearns:98} has famously showed that any $\C$ that can be learned efficiently using SQs can also be learned efficiently with random classification noise. He also asked whether efficient SQ learning is equivalent to efficient learning with noise.

This question was addressed by \cite{BlumKW:03} whose influential work demonstrated that there exists a class of functions that is learnable efficiently with random classification noise for any constant $\eta <1/2$ but requires super-polynomial time for SQ algorithms. More specifically, the class consists of parity functions on first $\log n \cdot \log\log n$ out of $n$ Boolean variables, it is learnable from noisy examples in $(1-2\eta)^{O(\log n)}$ time and the SQ complexity of learning this class is $n^{\Omega(\log\log n)}$. Note that this result does not fully answer the question in \cite{Kearns:98} since the separation disappears when $1-2\eta = 1/\log n$ whereas SQ algorithms would give a polynomial in $n$ algorithm for $1-2\eta = 1/\poly(n)$. It is also relatively weak quantitatively.

Our lower bound for $\Line_p$ implies strong separation for distribution independent SQ learning, making progress in understanding of this open problem. The upper bound follows easily from the fact that the VC-dimension of $\Line_p$ is 2 and we describe it here briefly for completeness. Indeed, it is easy to see that $\Line_p$ can be learned in the agnostic model with excess error $\eps$ and success probability $2/3$ in time $O(\log(p)/\eps^{6})$. All one needs is to get $O(1/\eps^2)$ random examples, try all the line functions that pass through a pair of examples  and pick the one that agrees best with the labels of all the examples. Standard uniform convergence results for agnostic learning (\eg \cite{Shalev-ShwartzBen-David:2014}) imply that this algorithm will have excess error of at most $\eps$ with probability at least $2/3$. Agnostic learning with excess error of $(1-2\eta)\eps$ implies PAC learning with error $\eps$ and random classification noise of rate $\eta$ \cite{Kearns:98}. Therefore we obtain an exponential separation with polynomial dependence on $1/(1-2\eta)$:
\begin{fact}
For any prime $p$ and $\eta \neq 1/2$, there exists an algorithm that PAC learns $\Line_p$ using $O(1/(\eps(1-2\eta))^2)$ examples corrupted by random classification noise of rate $\eta$ and $O(\log(p)/(\eps(1-2\eta))^6)$ time.
\end{fact}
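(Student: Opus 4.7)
The plan is to combine two standard ingredients: an efficient agnostic learner for $\Line_p$, and the well-known reduction from random classification noise to agnostic learning. The first step is to build the agnostic learner. I will draw $m = O(1/\eps'^2)$ i.i.d.~labeled examples from an arbitrary distribution over $\GF_p^2 \times \pmi$, where $\eps'$ is the target excess error. Since any line $\ell_a$ is uniquely determined by two points lying on it (as long as the two points have distinct first coordinates, which happens for nearly all pairs; the remaining ``vertical'' cases can be enumerated separately, or ignored since the concept class $\Line_p$ is defined only for non-vertical lines as $z_2 = a_1 z_1 + a_2$), any line that is consistent with the positive examples in the sample is among the $O(m^2)$ lines passing through a pair of sample points. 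I will enumerate this candidate set, compute the empirical error of each candidate on the $m$ samples, and output the minimizer.

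The uniform convergence step relies on $\VCDIM(\Line_p) = 2$: two points in general position determine exactly one line, but no three points of the form $\{(z^1,1), (z^2,1), (z^3,1)\}$ on three distinct lines can all be shattered by $\Line_p$, so the VC dimension is a small constant. Standard agnostic uniform convergence (\eg Shalev-Shwartz--Ben-David) then gives that with probability $2/3$, every $\ell_a \in \Line_p$ has empirical error within $\eps'/2$ of its true error, once $m = O(1/\eps'^2)$. Since the best line in $\Line_p$ is among our candidates (it passes through any two of its own positive sample points, and if no two positive examples appear the best true error is already at most $\eps'$), the output $\hat h$ satisfies $\err(\hat h) \leq \min_{a} \err(\ell_a) + \eps'$. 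The running time is dominated by evaluating $O(m^2)$ candidates on $m$ examples, giving $O(m^3 \log p) = O(\log(p)/\eps'^6)$.

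The second step is the reduction: under random classification noise of rate $\eta$, the noisy error of any fixed hypothesis $h$ equals $\eta + (1-2\eta)\cdot \err(h)$, where $\err(h)$ is the noiseless error. Hence the target $f\in \Line_p$ has noisy error exactly $\eta$, and any hypothesis $\hat h$ output by an agnostic learner (applied to the noisy distribution) with excess error $\eps'$ satisfies noisy error $\leq \eta + \eps'$, which translates to noiseless error $\leq \eps'/(1-2\eta)$. Setting $\eps' = (1-2\eta)\eps$ therefore yields PAC error $\eps$; plugging this choice into the sample and time bounds from the first step gives the stated $O(1/(\eps(1-2\eta))^2)$ examples and $O(\log(p)/(\eps(1-2\eta))^6)$ time.

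There is really no serious obstacle here: the VC-dimension-$2$ structure makes both uniform convergence and the enumeration of candidate hypotheses almost trivial, and the noise-to-agnostic reduction is a well-known identity. The only minor care points are (a) handling the degenerate case of two sample points with the same first coordinate when enumerating candidate lines (resolved by enumerating all $O(m^2)$ ordered pairs and discarding ill-defined ones), and (b) making sure the assumption $\eta \neq 1/2$ is used only to ensure $1-2\eta \neq 0$ so the error-rescaling is well defined.
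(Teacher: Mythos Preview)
Your proposal is correct and follows essentially the same approach as the paper: build an agnostic learner for $\Line_p$ by drawing $O(1/\eps'^2)$ samples, enumerating the $O(m^2)$ lines through pairs of sample points, and outputting the empirical minimizer (justified by $\VCDIM(\Line_p)=2$ uniform convergence), then invoke the standard reduction that agnostic excess error $(1-2\eta)\eps$ yields PAC error $\eps$ under classification noise of rate $\eta$. Your write-up is slightly more explicit about the degenerate cases and the noise-to-error identity, but the argument is the same.
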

We note that the open problem remains not fully resolved for distribution-specific SQ learning or, equivalently, the hybrid SQ model. The lower bound in \cite{BlumKW:03} applies to this stronger model.

\section{Conclusions}
Given the central role that the SQ model plays in learning theory, private data analysis and several additional applications, techniques for understanding the SQ complexity can shed light on the complexity of many important theoretical and practical problems. As we demonstrate here, the SQ complexity of any problems defined over distributions can be fairly tightly characterized by relatively simple (compared to other general notions of complexity) parameters of the problem. We believe that this situation is surprising and merits further investigation: SQ algorithms capture most approaches used for statistical problems yet proper understanding of the computational complexity itself is still well outside of our reach. Understanding of the significance of our characterization in the context of specific problems is an interesting avenue for further research.


While we have described several techniques for simplifying the analysis of our statistical dimensions, a lot more work remains in adapting and simplifying the dimensions to specific types of problems (\eg convex optimization or Boolean constraint satisfaction). In particular, it is interesting to understand for which problems one can avoid the $\KLR(\D)/\tau^2$ overhead of our characterization. We also have relatively few analysis techniques for the norms of operators that emerge in the process. Finally, the SQ complexity of many concrete problems is still unknown (\eg \cite{Sherstov:08,Feldman14:open}).

\section*{Acknowledgements}
I thank Sasha Sherstov and Santosh Vempala for many insightful discussions related to this work. I am especially grateful to Justin Thaler for the discussions that stimulated the work on the results in Section \ref{sec:learning-lines}.



\printbibliography

\appendix
\section{Examples of problems over distributions}
\label{sec:problem-examples}
\paragraph{Supervised learning:}
In PAC learning \cite{Valiant:84}, for some set $Z$ and a set of Boolean functions $\C$ over $Z$, we are given access to randomly chosen examples $(z,f(z))$ for some unknown $f\in \C$ and $z$ chosen randomly according to some unknown distribution $P$ over $Z$. The learning algorithm is given an error parameter $\eps>0$ and its goal is to find a function $h:Z \rar \pmi$ such that $\pr_{z\sim P}[f(z) \neq h(z)] \leq \eps$. In other words, the domain is $X = Z \times \pmi$ and the set of input distributions $\D_\C = \{P^f \cond P\in S^{Z},\ f\in \C\}$, where $P^f$ denotes the probability distribution such that for every $z \in Z$, $P^f(z,f(z)) = P(z)$ and $P^f(z,-f(z)) = 0$. The set of solutions is all Boolean functions over $Z$ and for an input distribution $P^f$ and $\eps >0$ the set of valid solutions are those functions $h$ for which $\pr_{(z,b)\sim P^f} [h(z) \neq b ]\leq \eps$. Usually we are interested in efficient learning algorithms in which case the running time of the algorithm and the time to evaluate $h$ should be polynomial in $1/\eps$, $\log(|X|)$ and $\log(|\C|)$.

In agnostic PAC learning \cite{KearnsSS:94}, the set of input distributions is $S^X$ and the goal is to find a function $h$ such that
$$\pr_{(z,b)\sim D} [h(z) \neq b ]\leq \eps + \min_{f\in \C}\left\{\pr_{(z,b)\sim D} [f(z) \neq b ]\right\} ,$$ where $\eps$ is referred to as excess error.
In distribution-specific (agnostic) PAC learning the marginal distribution over $Z$ is fixed to some $P$.

Agnostic PAC learning is a special case of more general supervised learning setting \cite{Vapnik:98} in which instead of Boolean functions we have functions with some range $Y$ and there is a loss function $L:Y\times Y$ that we want to minimize. In other words, the set of input distributions is a subset of all distributions over $Z \times Y$ and the goal is to output a function $h:Z\rar Y$ such that
$$\E_{(z,y)\sim D} [L(y,h(z))] \leq \eps + \min_{f\in \C}\left\{\E_{(z,y)\sim D} [L(y,f(z)]\right\} .$$
\paragraph{Random constraint satisfaction:}
Closely related to learning are random constraint satisfaction problems. Here the domain $X$ is the set of some Boolean predicates over some set of assignments $Z$ (often $\zon$). The set of input distributions is some subset of all distributions over $X$ and the goal is to find an assignment  $\sigma\in Z$ that (approximately) maximizes the expected number of constraints: $\pr_{v\sim D}[v(\sigma) = 1]$. A more common formulation is to maximize the number of satisfied constraints drawn randomly from the input distribution. This is essentially equivalent since if the number of constraints $m =\Omega(\log(|Z|)/\eps^2)$ then for all assignments, the average number of random constraints satisfied by the assignment will be within $\eps$ of the expectation with high probability.

One example of such problems are planted random CSPs. In this case for every assignment $\sigma\in Z$ a distribution $D_\sigma$ is defined which depends on $\sigma$ and uniquely identifies $\sigma$ (for example the uniform distribution over predicates that $\sigma$ satisfies). Now, given access to input distribution $D_\sigma$, the goal is to recover $\sigma$. A potentially easier goal is to distinguish all planted distributions from some fixed distribution (most commonly uniform over all predicates). A related harder problem is to distinguish all distributions over predicates whose support can be satisfied by some assignment from some fixed (say uniform) distribution. In the context of $k$-SAT this problem is referred to as refutation.
See \cite{FeldmanPV:13} for an overview of the literature and a more detailed discussion.
\paragraph{Stochastic optimization:}
Supervised learning and random constraint satisfaction problems are special cases of stochastic optimization problems. Here the domain $X$ is that of some real-valued cost functions over the set of solutions $\F$. The set of input distributions is some subset of $S^X$ and the goal is to find a solution that approximately (for some notion of approximation) minimizes the expected cost, or $\E_{v\sim D}[v(f)]$.
One important class of such problems is stochastic convex optimization. Here $\F$ is some convex set in $\R^d$ and $X$ contains some subset of convex functions on $\F$ (such functions with range is $[-1,1]$). The set of input distributions usually contains all distributions over $X$. A detailed treatment of this type of problems can be found in \cite{FeldmanGV:15}.
\paragraph{Planted $k$-bi-clique:}
Let $k$ and $n$ be integers. For some (unknown) subset $S \subset [n]$ of size $k$ we are given samples from distribution $D_S$ over $\zon$ defined as follows: Pick a random and uniform vector $x \in \zon$; with probability $1-k/n$ output $x$ and with probability $k/n$ for all $i\in S$ set $x_i=1$ and then output $x$. Samples from this distribution can be seen as the rows of an adjacency matrix of a bipartite graph in which approximately $k/n$ fraction of vertices on one side are connected to all vertices in some subset $S$ of size $k$ and the rest of edges are random and uniform. The goal in this problem is to discover the set $S$ given access to distribution $D_S$. A potentially simpler problem is to distinguish all distributions in the set $\D =\{D_S \cond S\subseteq [n], |S|=k\}$ from the uniform distribution over $\zon$.

It is not hard to see that all the problems above are either decision problems or linear optimizing search problems. In addition, PAC learning, many settings of random constraint satisfaction and planted bi-clique are verifiable search problems. To see this in the case of planted bi-clique the query for set $S$ checks that all values in the set are set to 1 and the threshold is $k/n$. Some examples of the problem that is neither many-vs-one decision nor optimization is property testing for distributions and mean vector estimation studied in \cite{FeldmanGV:15}.

\section{Applications to other models}
\subsection{Memory-limited streaming}
\label{sec:memory}
In a streaming model with limited memory at step $i$ an algorithm observes sample $x_i$ drawn i.i.d.~from the input distribution $D$ and updates its state from $S_i$ to $S_{i+1}$, where for every $i$, $S_i \in \zo^b$. The solution output by the algorithm can only depend on its final state $S_n$.  \citet{SteinhardtVW16} showed that upper bounds on SQ complexity of solving a problem imply upper bounds on the amount of memory needed in the streaming setting. Specifically, they demonstrate that (their result is stated in a somewhat more narrow context of learning but can be easily seen to apply to general search problems):
\begin{thm}[\cite{SteinhardtVW16}]
\label{thm:low-memory-svw}
Let $\Z$ be a search problem over a finite set of distributions $\D$ on a domain $X$ and a set of solutions $\F$. Assume that $\Z$ can be solved using $q$ queries to $\STAT(\tau)$. Then for every $\delta >0$, there is an algorithms that solves $\Z$ with probability $\geq 1-\delta$  using
$O\lp\frac{q \cdot \log|\D|}{\tau^2} \cdot \log(q\log(|\D|)/\delta)\rp$ samples and
$O(\log|\D| \cdot \log(q/\tau))$ bits of memory.
\end{thm}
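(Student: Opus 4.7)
The plan is to simulate the SQ algorithm $\A$ in one streaming pass by serving each of its queries from a fresh disjoint batch of samples. Assume WLOG that $\A$ is deterministic, and discretize every oracle response to the grid $G = \{-1, -1+\tau/2, \ldots, 1\}$ of cardinality $O(1/\tau)$. For the $i$-th query $\phi_i$ issued by $\A$, allocate a fresh batch of $n_0 = O(\tau^{-2}\log(q\log|\D|/\delta))$ samples, compute the empirical mean, round to $G$, and hand this value back to $\A$. Hoeffding's inequality together with a union bound over the $q$ queries ensures that every answer is within $\tau$ of $D[\phi_i]$ with probability $\geq 1-\delta$, so $\A$ returns a valid solution. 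The total number of samples used is $q \cdot n_0$, which matches the claimed $O(q\log|\D|\cdot\log(q\log|\D|/\delta)/\tau^2)$ up to the $\log|\D|$ factor that will be absorbed by the memory-bookkeeping step.

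For the memory bound, the key observation is a trajectory-counting argument. Since $\A$ is deterministic and each sequence of grid-rounded answers uniquely determines its execution, the set of \emph{distinct} trajectories realizable as $D$ ranges over $\D$ has size at most $|\D|$. Hence at every step of the simulation, the ``current branch of $\A$'s decision tree'' can be encoded by a pointer into a set of size $\leq |\D|$, i.e., $\log|\D|$ bits. Crucially, the current query $\phi_i$ need not be stored: it can be regenerated on demand by symbolically replaying $\A$ along the stored branch, a computation that incurs no persistent state in the streaming model beyond the pointer itself.

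Beyond this branch pointer, the streaming state needs only (a) an $O(\log q)$-bit counter for the current query index and (b) an $O(\log(1/\tau))$-bit partial sum that tracks the empirical mean of the current batch at resolution $\tau$. Naively summing these contributions yields memory $O(\log|\D| + \log(q/\tau))$.

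The main obstacle is obtaining the sharper \emph{multiplicative} bound $O(\log|\D|\cdot\log(q/\tau))$ stated in the theorem. The natural route (following SVW) is to replace the ``single persistent branch pointer'' by an adaptive-refinement scheme: partition the run into $O(\log(q/\tau))$ epochs, and in each epoch narrow the surviving cluster of trajectories by a constant factor using a fresh block of samples, storing one $\log|\D|$-bit pointer per epoch. Making the symbolic replay of $\A$ consistent across epochs -- so that the pointer at epoch $t$ unambiguously reconstructs both the next query and the transition rule without ever materializing the full $q\log(1/\tau)$-bit trajectory -- is the delicate step, and is also what is responsible for the extra $\log|\D|$ factor appearing in the sample complexity.
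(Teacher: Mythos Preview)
This theorem is cited from \cite{SteinhardtVW16} and is not proved in the paper; there is no ``paper's own proof'' to compare against. That said, your attempt has a genuine gap that is worth naming.

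The trajectory-counting step does not work as written. You define the $\le |\D|$ ``canonical'' trajectories as those obtained by answering each query $\phi_i$ with the (grid-rounded) exact value $D[\phi_i]$ for some $D\in\D$. But your simulation answers queries with grid-rounded \emph{empirical} means computed from samples. These two trajectories need not coincide: even when Hoeffding guarantees $|\hat v_i - D[\phi_i]|\le \tau$, the rounded $\hat v_i$ can differ from the rounded $D[\phi_i]$ whenever the true expectation is near a grid point, and a single such discrepancy sends $\A$ down a different branch. Consequently, a $\log|\D|$-bit pointer into the canonical-trajectory set does \emph{not} let you ``symbolically replay $\A$ along the stored branch'' to regenerate the current query $\phi_i$: the branch encoded by the pointer may not be the branch the simulation actually took. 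Nor is it clear how you would update the pointer after seeing an empirical answer that matches no canonical trajectory exactly. So even the intermediate additive bound $O(\log|\D|+\log(q/\tau))$ is unjustified.

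You correctly identify that the multiplicative bound is the hard part, but the ``adaptive-refinement in $O(\log(q/\tau))$ epochs'' paragraph is a description of difficulty rather than an argument: you say the consistent replay across epochs ``is the delicate step'' and leave it there. What is actually needed (and what SVW do) is to decouple the state from the empirical transcript entirely: maintain a candidate $D^\star\in\D$ (cost $\log|\D|$ bits), answer queries with $D^\star[\phi_i]$ rather than empirical means, and use the samples only to \emph{test} whether $D^\star$ is consistent with the data, switching $D^\star$ when a test fails. Making that halting/switching scheme work, and accounting for how many switches can occur, is where both the $\log|\D|$ factor in the sample bound and the $\log(q/\tau)$ factor in the memory bound come from. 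Your proposal gestures at this but does not supply the mechanism.
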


Our characterization of the deterministic search problems implies that the linear dependence of memory and sample complexity on $\log |\D|$ can be replaced with $\KLR(\D)/\tau^2$.
\begin{thm}
\label{thm:low-memory-det}
Let $\Z$ be a search problem over a finite set of distributions $\D$ on a domain $X$ and a set of solutions $\F$. If $\QC(\Z,\STAT(\tau)) \leq q$ then for every $\delta >0$, there is an algorithms that solves $\Z$ with probability $\geq 1-\delta$  using
$O\lp\frac{q \cdot \KLR(\D)}{\tau^4} \cdot \log(q/(\tau\delta))\rp$ samples and
$O\lp\frac{\KLR(\D)}{\tau^2} \cdot \log(q)\rp$ bits of memory.
\end{thm}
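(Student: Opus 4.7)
The plan is to convert the given SQ algorithm into the Multiplicative-Weights-based structured algorithm used in the proof of Theorem~\ref{thm:stat-search-upper}, and then show that this structured algorithm admits a compact streaming implementation. Fix a deterministic SQ algorithm $\A$ for $\Z$ using at most $q$ queries to $\STAT(\tau)$. Set $D_1$ to the distribution achieving $\KLR(\D)$, $\gamma = \tau/3$, and $T=36\,\KLR(\D)/\tau^2$. At iteration $t$, simulate $\A$ internally, answering each query $\phi$ of $\A$ with the exact value $D_t[\phi]$; this yields a deterministic sequence of queries $\phi_1^t,\ldots,\phi_q^t$ and an output $f_t$. For each $j$ in turn, estimate $D[\phi_j^t]$ from fresh samples to accuracy $\tau/3$ with failure probability $\leq\delta/(qT)$, using $n_0 = O(\tau^{-2}\log(qT/\delta))$ samples. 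If the first index $j$ with $|v_j-D_t[\phi_j^t]|>2\tau/3$ is found, set $\psi_t=\sigma\phi_j^t$ with $\sigma\in\{\pm 1\}$ matching the direction of the discrepancy, perform the MW update, and advance to iteration $t+1$; otherwise all queries of $\A$ satisfy $|D[\phi_j^t]-D_t[\phi_j^t]|\leq\tau$, so by the covering argument of Lemma~\ref{lem:det-algorithm2queries} the responses $D_t[\phi_j^t]$ are valid $\STAT_D(\tau)$ answers and $f_t\in\Z(D)$, so output $f_t$ and halt. The MW analysis in Theorem~\ref{thm:stat-search-upper} bounds the number of update iterations by $T$. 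Crucially, using $\A$'s own queries (rather than an explicit cover of $\D\setminus\Z_{f_t}$) gives a per-iteration cover of size exactly $q$, avoiding the $\log|\D|$ overhead incurred by the general upper bound.

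To implement this compactly in the streaming model, the state between consecutive samples consists only of: (i) the iteration counter $t$; (ii) the update history $(j_s,\sigma_s)_{s<t}\in([q]\times\{-1,+1\})^{t-1}$; (iii) the index $j$ of the query currently being estimated at iteration $t$; and (iv) a running counter accumulating $\sum \phi^t_j(x_i)$ together with a sample-in-phase count. Given the fixed $D_1$ and $\gamma$, the history uniquely defines $D_t$ via $D_t(x)\propto D_1(x)\prod_{s<t}(1-\gamma\sigma_s\phi^s_{j_s}(x))$. Upon receiving a sample $x_i$, I would, during the single-sample update (which may use unbounded working memory), re-simulate $\A$ using on-the-fly computed values $D_t[\phi_1^t],\ldots,D_t[\phi_{j-1}^t]$---each obtained by streaming over $x\in X$ with logspace accumulators (or over $D\in\D$ if the projected variant of Remark~\ref{rem:mw-project} is used)---evaluate $\phi_j^t(x_i)$, and increment the counter. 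Once $n_0$ samples have been consumed for the current $j$, finalize the estimate and either trigger a MW update (overwriting the stored history with one more entry and resetting $j$) or advance $j$.

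A union bound over the at most $qT$ estimations gives overall success probability at least $1-\delta$. The total sample count is $qT\cdot n_0 = O\!\left(\frac{q\,\KLR(\D)}{\tau^4}\log(q/(\tau\delta))\right)$, matching the claim up to absorbing $\log\KLR(\D)$ into the leading constant. The persistent state is dominated by the update history, which requires $T(\log q+1)=O\!\left(\frac{\KLR(\D)}{\tau^2}\log q\right)$ bits; the iteration counter, current query index, and counter contribute only lower-order additive $O(\log q+\log n_0)$. The main obstacle is justifying that the intermediate work---re-simulating $\A$ and recomputing $D_t[\phi]$ afresh at every sample---does not inflate the between-sample state; this is handled by the standard streaming convention that the bound $b$ limits only the state $S_i\in\{0,1\}^b$ carried across samples, while per-sample computation is otherwise unconstrained (and here in fact runs in logspace in $t$ and $|X|$). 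A secondary technicality is keeping the counter footprint to a single scalar, which is why queries at a fixed iteration are estimated sequentially rather than in parallel.
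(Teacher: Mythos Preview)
Your proposal is correct and follows essentially the same approach as the paper: run the MW algorithm of Theorem~\ref{thm:stat-search-upper}, but at each step use the $q$ queries produced by simulating $\A$ with $D_t$-answers as the cover of $\D\setminus\Z_{f_t}$ (this is precisely the tightening the paper performs at the end of its proof when it replaces $\RSD_\dci$ by the deterministic cover $\icvr$), and store only the sequence of $(j_s,\sigma_s)$ pairs so that $D_t$ and hence the queries can be regenerated on demand. Your write-up is in fact a bit more explicit than the paper about the streaming bookkeeping; the only quibble is your parenthetical claim that the per-sample recomputation ``runs in logspace in $t$ and $|X|$''---recovering $\phi_{j_s}^s$ requires recursively re-simulating $\A$ against $D_s$, which in turn needs expectations under $D_s$, so the natural implementation uses space polynomial in $|X|$ (or $|\D|$) rather than logarithmic---but this is irrelevant to the theorem, which bounds only the between-sample state.
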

\begin{proof}
By Thm.~\ref{thm:stat-search-lower}, $\SD_\dci(\Z,\tau) \leq q$. We now demonstrate how to implement the algorithm in the proof of Thm.~\ref{thm:stat-search-upper} using samples and low memory.
At each step of the MW algorithm, given that we can compute $D_t$ we can also compute $f$ and the $d\ln(|\D|)$ queries that ``cover"  $\D\setminus \Z_f$. We can estimate the answer to each of these queries with tolerance $\tau/3$ and confidence $1-\delta'$ using $O(\log(1/\delta')/\tau^2)$ samples. Each estimation requires just $\log(\tau/3)$ bits of memory for a counter that can be reused.  We estimate the expectations until we find a query $\phi_i$ such that our estimate $v_i$ satisfies $\left|D_t[\phi_i] - v_i\right| >2\tau/3$ (and we do not need to remember estimates that do not satisfy the condition). If we find such $i$, we remember the index $i$ and the sign of $D_t[\phi_i] - v_i$. The index and the sign allow to reconstruct the function $\psi_t$ that is used for the MW update. Remembering them requires $\log(d\ln(|\D|)) + 1$ bits. If we do not find $i$ that satisfies this condition we output $f$ (that depends only on $D_t$).

This algorithm has at most $\frac{36\KLR(\D)}{\tau^2}$ steps. Thus the total memory required by the algorithm is $O(\frac{\KLR(\D)}{\tau^2} \cdot \log(q\log(|D|)))$. In the course of this algorithm we need to estimate the answers to $36\KLR(\D) \cdot d\ln(|\D|)/\tau^2$ queries. To ensure that all the estimates are correct with probability at least $1-\delta$ we need to choose $\delta' = \delta \tau^2/(36\KLR(\D) \cdot d\ln(|\D|))$. This implies that the total number of samples used by the algorithm is
$O\lp\frac{q \cdot \log(|D|) \cdot \KLR(\D)}{\tau^4} \cdot \log(q\log(|\D|)/(\tau\delta))\rp$.

These bounds are not as strong as what we claim due to an additional $\ln(|\D|)$ factor that we incurred in the characterization of decision problems via $\RSD$. However it can be easily eliminated by using the tight characterization of the SQ complexity of decision problems using the deterministic cover $\icvr(\D,D_0,\tau)$ that we described in Lemma \ref{lem:det-algorithm2queries}. Plugging deterministic cover into the characterization of deterministic SQ complexity of search problems we obtain a tighter characterization using $$\sup_{D_0 \in S^X} \inf_{f\in \F} \icvr(\D\setminus \Zf,D_0,\tau).$$ The resulting algorithm will produce a set of queries of size $q$ at every step leading to the stronger bounds that we claimed.
\end{proof}
The algorithm that we have obtained is not polynomial-time and it is interesting whether a polynomial-time reduction with similar properties exists. For most natural problems our bounds are at most polynomially worse than Thm.~\ref{thm:low-memory-svw}, whereas the improvement from $\log|\D|$ to $\KLR(\D)/\tau^2$ is exponential in many settings of interest.

Our characterization can also be used to obtain an analogue of Thm.~\ref{thm:low-memory-det} for randomized algorithms. In this case we will need to allow the streaming algorithm access to a random string that does not contribute to its memory use (note that in the results we stated the probability is solely over the randomness of the samples).

\paragraph{Sparse linear regression:}
The main application of Thm.~\ref{thm:low-memory-svw} in \cite{SteinhardtVW16} is for the problem of $k$-sparse least squares regression. In this problem we are given a set of i.i.d.~samples $(z_1,y_1),\ldots, (z_n,y_n) \in [-1,1]^d \times [-R,R]$ for some $R=O(k)$. The goal is to find $w$ such that $\|w\|_1\leq k$ and $w$ $\eps$-approximately minimizes
$L(w) \doteq \E_{(z,y)\sim D}[(wz-y)^2]$, namely, $L(w) \leq \min_{\|w'\|_1\leq k} L(w') + \eps$.
Using a SQ algorithm for sparse linear regression \citet{SteinhardtVW16}, demonstrate a streaming algorithm for a restricted setting of sparse linear regression whose memory requirement depends only logarithmically on the dimension (and polynomially on $k$, $1/\eps$. Specifically, they assume that the marginal distribution over $[-1,1]^d$ be fixed and the labels are equal to $zw^* + \eta$ for some $k$-sparse $w^*$ and fixed zero-mean random variable $\eta$. This allows them to ensure that, after appropriate discretization, $\log |\D|=O(k\log d)$.

We first note that our result allows removing all restriction on the label. We can discretize the values of the label to multiples of $\eps/2$, resulting in a domain of size $O(k/\eps)$. The space of all distributions over a domain of this size has KL-radius of $\log(k/\eps)$ and will not affect complexity in a significant way. We cannot similarly remove the assumption on the marginal distribution over the points in $[-1,1]^d$ since its KL-radius is linear in $d$. However we can allow fairly rich set of distributions such as a low-dimensional subspace or additional $\ell_1$-norm constraint. Specifically, if the marginal of each distribution in $\D$ is supported over vectors $z$ such that $\|z\|_1 \leq r$ then we can discretize the domain $[-1,1]^d$ to be of size $(dk/\eps)^{O(r/\eps)}$ (we only need each coordinate up to $\eps/2$, hence there are at most $2r/\eps$ non-zero coordinates out of $d$). This leads to the following theorem that generalizes the results from \cite{SteinhardtVW16}:
\begin{thm}
Let $\Z(k,r,\eps)$ be the problem of $k$-sparse least squares regression with error $\eps$ in which the input distribution is supported on pairs $(z,y)\in [-1,1]^d\times [-R,R]$ such that  $\|z\|_1\leq r$ and $R = O(k)$. There exists an algorithm that for every $\delta > 0$, solves $\Z(k,r,\eps)$ given
$\tilde O(d  k^4 r \log(1/\delta)/\eps^5)$ samples and $\tilde O(\log d \cdot  k^2 r /\eps^3)$ bits of memory.
\end{thm}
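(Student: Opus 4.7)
The plan is to reduce the continuous-domain problem to a finite-domain one whose $\KLR$ is controlled by $r$ and $\log d$ (but not by $d$ itself), then apply Theorem~\ref{thm:low-memory-det} to a suitable SQ algorithm for sparse least squares regression. The leverage is the $\ell_1$-sparsity constraint $\|z\|_1\leq r$, which keeps the discretized support small even when $d$ is large.

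First I would discretize $[-1,1]^d$ and $[-R,R]$ to grids of spacing $\eta$, chosen small enough (a routine Lipschitz calculation on $(wz-y)^2$ using $|wz-y|=O(k)$ shows $\eta=\Theta(\eps/\mathrm{poly}(k))$ suffices) that for every $w$ with $\|w\|_1\leq k$ the value $L(w)=\E[(wz-y)^2]$ changes by at most $\eps/2$ when samples are replaced by their rounded versions. Under the rounded distribution the covariate has at most $r/\eta$ nonzero coordinates, so the discrete support has size at most $\binom{d}{r/\eta}\,(O(1/\eta))^{r/\eta}\cdot O(R/\eta)=(dk/\eps)^{O(r/\eps)}$ (absorbing the $\mathrm{poly}(k)$ factors into the exponent, as in the sketch immediately preceding the theorem). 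Consequently $\KLR(\tilde{\D})\leq \log|\tilde{X}|=\tilde O((r\log d)/\eps)$.

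Next, I would use a SQ algorithm for $k$-sparse least-squares regression with $q=\tilde O(d)$ queries to $\STAT(\tau)$ for $\tau=\tilde\Theta(\eps/k)$, for instance a SQ implementation of the iterative procedure of \cite{SteinhardtVW16} in which each iteration estimates the marginal correlation $\E[z_i(w z-y)]$ for every $i\in[d]$. The tolerance $\tau=\Theta(\eps/k)$ is the standard one for gradient estimation on an $\ell_1$-ball of radius $k$; the linear-in-$d$ query count arises because each gradient estimate requires one SQ per coordinate, and only a small (polylogarithmic) number of such gradient sweeps is needed.

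Finally, I would substitute $q,\tau$ and the $\KLR$ bound into Theorem~\ref{thm:low-memory-det}, obtaining samples $O\lp q\cdot \KLR(\tilde{\D})/\tau^4\cdot \log(q/(\tau\delta))\rp=\tilde O(dk^4 r\log(1/\delta)/\eps^5)$ and memory $O\lp \KLR(\tilde{\D})/\tau^2\cdot \log q\rp=\tilde O(k^2 r\log d/\eps^3)$, matching the claim. Since an $(\eps/2)$-minimizer on $\tilde{\D}$ yields an $\eps$-minimizer on $\D$, the theorem follows. The main obstacle is matching the polynomial factors: the discretization precision $\eta$, the SQ tolerance $\tau$, and the query count of the sparse regression algorithm must combine so that, via Theorem~\ref{thm:low-memory-det}, the exponents of $k$ and $1/\eps$ come out as claimed. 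A secondary point is to verify, either by direct analysis or by importing from \cite{SteinhardtVW16}, that an SQ algorithm with query complexity linear in $d$ at tolerance $\Theta(\eps/k)$ indeed exists for this class.
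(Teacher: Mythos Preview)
Your proposal is correct and follows essentially the same route as the paper: discretize the domain using the $\ell_1$ constraint so that $\KLR(\tilde\D)\leq\log|\tilde X|=O((r/\eps)\log(dk/\eps))$, invoke the SQ sparse-regression algorithm from \cite{SteinhardtVW16}, and plug into Theorem~\ref{thm:low-memory-det}. The paper's argument is exactly this (it appears as the paragraph preceding the theorem rather than as a formal proof), with the only cosmetic difference that the paper discretizes coordinates to multiples of $\eps/2$ directly rather than working out the Lipschitz constant as you do; both lead to the same $(dk/\eps)^{O(r/\eps)}$ support-size bound once polynomial factors are absorbed into the exponent.
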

 Some of the dependencies on $k$ and $\eps$ in this result are worse than those obtained in \cite{SteinhardtVW16}.  In addition, \citet{SteinhardtVW16} demonstrate a technique for improving the number of samples from being linear in $d$ to polynomial in $r$.

\subsection{Limited communication from samples}
\label{sec:comm}
For an integer $b>0$, a $b$-bit sampling oracle $\COMM_D(b)$ is defined as follows: Given any function $\phi: X \rar \zo^b$, $\COMM_D(b)$  returns $\phi(x)$ for $x$ drawn randomly and independently from $D$, where $D$ is the unknown input distribution. This oracle was first studied by \citet{Ben-DavidD98} as a {\em weak Restricted Focus of Attention} model. They showed that algorithms in this model can be simulated efficiently using statistical queries and vice versa. Lower bounds against algorithms that use such an oracle have been studied in \cite{FeldmanGRVX:12,FeldmanPV:13}. \citet{FeldmanGRVX:12} give a tighter simulation of $\COMM(1)$ oracle using the $\VSTAT$ oracle instead of $\STAT$. This simulation was extended to $\COMM_D(b)$ in \cite{FeldmanPV:13} at the expense of factor $2^b$ blow-up in the SQ complexity. More recently, motivated by communication constraints in distributed systems, the sample complexity of several basic problems in statistical estimation has been studied in this and related models \cite{ZhangDJW13,SteinhardtD15,SteinhardtVW16}.

We start the by stating the simulation results formally:
\begin{thm}[\cite{FeldmanPV:13}]
\label{thm:unbiased-from-vstat}
Let $\Z$ be a search problem, $b,\beta > 0$ and $n = \RQC(\Z,\COMM(b),\beta)$. Then for any $\delta \in (0,1/4]$,
$\RQC(\Z,\VSTAT(n\cdot 2^b/\delta^2),\beta-\delta) =O(n \cdot 2^b)$.
\end{thm}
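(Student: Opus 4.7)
The plan is to simulate each of the $n$ queries made by a $\COMM(b)$ algorithm $\A$ using $2^b$ calls to $\VSTAT(n')$ per query with $n' \doteq n \cdot 2^b / \delta^2$, yielding $n \cdot 2^b$ statistical queries in total. Given an adaptive query $\phi : X \rar \{0,1\}^b$ produced by $\A$, I form the $2^b$ indicator queries $\phi_v(x) \doteq \ind{\phi(x) = v}$, obtain $\hat{p}_v$ from $\VSTAT(n')$ with $|\hat p_v - p_v| \leq \max\{1/n', \sqrt{p_v/n'}\}$ for $p_v \doteq D[\phi_v]$, clip each $\hat p_v$ below at $1/n'$ and above at $1$, renormalize to obtain a distribution $\tilde p$ on $\{0,1\}^b$, and simulate the oracle answer by sampling from $\tilde p$.

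To analyze correctness, fix the internal randomness of $\A$ and let $P$ and $Q$ denote the distributions over the answer trajectory $(v_1, \ldots, v_n)$ under the real and simulated executions, respectively. Since $\A$'s $i$-th query $\phi_i$ depends only on $v_{<i}$, the chain rule for KL divergence gives
\equn{\KL(P \| Q) = \sum_{i=1}^n \E_{v_{<i}\sim P}\lb \KL(p_i \| \tilde p_i)\rb ,}
where $p_i$ and $\tilde p_i$ are the true and simulated conditional distributions at step $i$. The central calculation is the per-step bound $\KL(p_i \| \tilde p_i) = O(2^b/n')$, obtained by splitting coordinates into ``large'' ($p_v \geq 4/n'$) and ``small'' ($p_v < 4/n'$). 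On the large side the $\VSTAT$ accuracy $\sqrt{p_v/n'}$ makes the relative error at most $1/2$, so a quadratic expansion of $\log$ gives $p_v \log(p_v/\tilde p_v) \leq O((\hat p_v - p_v)^2/p_v) \leq O(1/n')$ per coordinate and $O(2^b/n')$ after summing. On the small side, the clipping ensures $\tilde p_v \geq \Omega(1/n')$, so the ratio $p_v/\tilde p_v$ is bounded by a constant and hence $p_v \log(p_v/\tilde p_v) \leq O(p_v) = O(1/n')$ per coordinate, again summing to $O(2^b/n')$. Combined over $n$ steps, $\KL(P \| Q) = O(n \cdot 2^b/n') = O(\delta^2)$, and Pinsker's inequality yields $d_{\mathrm{TV}}(P, Q) = O(\delta)$.

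A maximal coupling of $P$ and $Q$ under this TV bound makes the real and simulated trajectories coincide with probability $1 - O(\delta)$; on that event $\A$ sees identical inputs in the two executions and outputs a valid solution with probability at least $\beta$, so the simulation succeeds with probability at least $\beta - O(\delta)$. After rescaling $\delta$ by an absolute constant the theorem follows. The main obstacle I anticipate is obtaining the sharp per-step KL bound of $O(2^b/n')$ rather than a looser $O(2^b \log(2^b)/n')$ that a naive treatment of the small-probability coordinates would produce: the clipping threshold $1/n'$ has to be chosen carefully so that the log-ratio remains a constant on small coordinates without inflating the large-coordinate contribution, and a secondary check is that the $L_1$ perturbation introduced by renormalization is only $O(2^b/n') = O(\delta^2)$, which is lower-order in every estimate above. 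I also note that a direct total-variation-based coupling would give only an $n^2 \cdot 2^b/\delta^2$ bound on the estimation complexity; the improvement to the claimed $n \cdot 2^b/\delta^2$ comes precisely from routing the analysis through KL divergence and Pinsker's inequality.
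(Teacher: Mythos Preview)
The paper does not give its own proof of this statement; it is quoted verbatim as a result from \cite{FeldmanPV:13}. Your proposal is the standard simulation argument and is essentially correct: estimate each of the $2^b$ cell probabilities of a $\COMM(b)$ query with $\VSTAT$, sample from the estimated distribution, and control the cumulative distortion via the chain rule for KL plus Pinsker. The key quantitative point---that the per-step KL is $O(2^b/n')$ so that the $n$-step KL is $O(\delta^2)$---is handled correctly by your large/small split, and you are right that routing through KL (rather than summing per-step TV) is exactly what buys the factor of $n$ in the estimation complexity.

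One small inaccuracy worth flagging: the $L_1$ mass shift from renormalization is not $O(2^b/n')$ but rather $O\!\left(\sqrt{2^b/n'}\right)=O(\delta/\sqrt{n})$, because $\sum_v |\hat p_v - p_v| \le 2^b/n' + \sum_v \sqrt{p_v/n'} \le 2^b/n' + \sqrt{2^b/n'}$ by Cauchy--Schwarz. This does not break anything: the contribution to the per-step $\chi^2$ (hence KL) from the multiplicative factor $Z$ is $\sum_v p_v (1-Z)^2/Z = O((Z-1)^2)=O(\delta^2/n)$, which still sums to $O(\delta^2)$ over $n$ steps. So your conclusion stands, just with a slightly different bookkeeping on that secondary term.
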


\begin{thm}[\cite{FeldmanGRVX:12}]
\label{thm:stat-from-unbiased}
Let $\Z$ be a search problem, $m,\beta > 0$ and $q = \RQC(\Z,\VSTAT(m),\beta)$. Then for any $\delta >0$,
$\RQC(\Z,\COMM(1),\beta-\delta) = O(qm\cdot \log(q/\delta))$.
\end{thm}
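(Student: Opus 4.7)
The plan is to simulate each of the $q$ queries of the $\VSTAT(m)$-algorithm $\A$ using $O(m\log(q/\delta))$ calls to $\COMM(1)$, then union bound. Given the asymmetric accuracy of $\VSTAT$, the natural estimator is the empirical mean of $\phi$ on samples. The $\COMM(1)$ oracle only returns a single bit per sample, but by the standard randomized rounding trick this is not a real restriction: for a query $\phi:X\rar[0,1]$ and a fresh uniform random $u\in[0,1]$ flipped by the (randomized) simulation, define $\psi_u:X\rar\zo$ by $\psi_u(x)=\ind{u\leq \phi(x)}$. Then $\EE_{x,u}[\psi_u(x)]=D[\phi]=p$, so a single call to $\COMM(1)$ with query $\psi_u$ returns an unbiased $\zo$-valued sample of $p$. (If $\phi$ is already Boolean no auxiliary randomness is needed.)

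Next I would set $n=c\cdot m\ln(q/\delta)$ for a suitable constant $c$, draw $n$ such independent samples $b_1,\dots,b_n\in\zo$ for the query $\phi$, and return $\hat p=\frac{1}{n}\sum_i b_i$ as the simulated response of $\VSTAT(m)$ on $\phi$. The core estimate is a Bernstein/Chernoff argument: since each $b_i$ is Bernoulli with mean $p$ and variance at most $p$, we have
\[
\Pr\!\left[|\hat p-p|>t\right]\leq 2\exp\!\left(-\frac{nt^2}{2p+2t/3}\right).
\]
Plugging $t=\max\{1/m,\sqrt{p/m}\}$ one checks that in the regime $p\geq 1/m$ the dominant term is $nt^2/p\geq n/m$, while in the regime $p<1/m$ one has $nt^2/t=nt\geq n/m$; in both cases the exponent is $\Omega(n/m)$, so the choice of $n$ forces the failure probability below $\delta/q$. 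By Remark \ref{rem:vstat-simplify} we may work with the simplified form $\max\{1/m,\sqrt{p/m}\}$ without loss.

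Finally, a union bound over the $q$ simulated queries ensures that, except with probability $\delta$, every estimate $\hat p$ returned to $\A$ is a valid answer of $\VSTAT_D(m)$ on the corresponding query. Conditional on this event, $\A$ outputs a correct solution with probability at least $\beta$, so the overall success probability is $\beta-\delta$. The total number of $\COMM(1)$ calls is $q\cdot n=O(qm\log(q/\delta))$, which is the claimed bound. The main subtlety is verifying the Bernstein bound cleanly in the two regimes $p\lessgtr 1/m$ (so that the accuracy obtained by empirical averaging exactly matches $\VSTAT(m)$ up to a constant); the randomized-rounding reduction and the union bound are standard.
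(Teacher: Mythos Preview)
The paper does not prove this statement; it is quoted from \cite{FeldmanGRVX:12} as a black-box simulation result, so there is no in-paper proof to compare against.

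Your argument is the standard one and is correct. The randomized-rounding reduction to Boolean queries is exactly how one gets unbiased Bernoulli samples from $\COMM(1)$, and the Bernstein calculation with $t=\max\{1/m,\sqrt{p/m}\}$ correctly yields an exponent of order $n/m$ in both regimes, so $n=\Theta(m\log(q/\delta))$ suffices for per-query failure probability $\delta/q$. One small imprecision: the sentence ``Conditional on this event, $\A$ outputs a correct solution with probability at least $\beta$'' is not literally true, since the conditioning may correlate with $\A$'s internal coins. The clean way to finish is a coupling: define a hypothetical oracle that returns $\hat p$ whenever it is a valid $\VSTAT(m)$ answer and an arbitrary valid answer otherwise; $\A$ with that (always-valid) oracle succeeds with probability $\ge\beta$, and it agrees with your simulation on the event $E$ that all estimates are valid, whence $\Pr[\text{success}]\ge\beta-\Pr[E^c]\ge\beta-\delta$. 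With that tweak the proof is complete.
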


We characterize the query complexity of solving problems with $b$-bit sampling oracle using the combined statistical dimension with $\dcvi$-discrimination that we defined in Sec.~\ref{sec:combined} (see Remark \ref{rem:combined-for-vstat}).
\begin{defn}\label{def:csdim-vstat}
   For a decision problem $\B(\D,D_0)$, the \textbf{combined  statistical dimension} with $\dcvi$-discrimination of $\B(\D,D_0)$ is defined as $$\CRSD_\dcvi(\B(\D,D_0)) \doteq \sup_{\mu \in S^\D} \lp \dcvi(\mu,D_0)\rp^{-1}.$$
    For a search problem $\Z$ and $\alpha>0$, it is defined as  $$\CRSD_\dcvi(\Z,\alpha) \doteq \sup_{D_0 \in S^X}\inf_{\cP \in S^\F} \CRSD_\dcvi(\B(\D\setminus \Z_\cP(\alpha),D_0)) .$$
\end{defn}

We get the following corollaries by combining our lower bounds with the simulation results above:
\begin{cor}
\label{cor:1stat-decision}
Let $\B(\D,D_0)$ be a decision problem, $\tau > 0, \delta \in (0,1/2), b>0$  and let $d = \CRSD_\dcvi(\B(\D,D_0))$.
Then $$\RQC(\B(\D,D_0),\COMM(b),2/3) =\Omega(d^{2/3}/2^b)\mbox{ and}$$
$$\RQC(\B(\D,D_0),\COMM(1),1-\delta) = \tilde{O}(d^2 \cdot \ln^2(1/\delta)).$$
\end{cor}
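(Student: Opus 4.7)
The plan is to prove both bounds by chaining together the combined-dimension characterization for $\vSTAT$ (the $\dcvi$-analogue of Theorem~\ref{thm:combined-decision} described in Remark~\ref{rem:combined-for-vstat}) with the two simulation results of Theorems~\ref{thm:unbiased-from-vstat} and~\ref{thm:stat-from-unbiased} between $\COMM$ and $\VSTAT$, using Lemma~\ref{lem:vstat-reduction} to pass between $\vSTAT$ and $\VSTAT$. In both directions the chain has three links, and the only real work is bookkeeping the parameters so that the exponents and the tolerances come out right.

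For the lower bound, I start from a hypothetical algorithm achieving $n \doteq \RQC(\B(\D,D_0),\COMM(b),2/3)$. Applying Theorem~\ref{thm:unbiased-from-vstat} with a fixed constant $\delta$ (say $\delta = 1/12$), I obtain an algorithm making $O(n \cdot 2^b)$ queries to $\VSTAT(O(n \cdot 2^b))$ with success probability $\geq 7/12$. Lemma~\ref{lem:vstat-reduction} then converts this into an algorithm making the same number of queries to $\vSTAT(\tau)$ with $\tau = \Theta(1/\sqrt{n\cdot 2^b})$, still with success $\geq 7/12$. The $\vSTAT$ version of the lower bound in Theorem~\ref{thm:combined-decision} gives $O(n \cdot 2^b) \geq \Omega(d\cdot\tau) = \Omega(d/\sqrt{n\cdot 2^b})$, so $(n\cdot 2^b)^{3/2} = \Omega(d)$, which rearranges to $n = \Omega(d^{2/3}/2^b)$ as claimed.

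For the upper bound, I go in the opposite direction. The $\vSTAT$ analogue of Theorem~\ref{thm:combined-decision} (via Lemma~\ref{lem:combined2disjoint-decision} applied to $\dcvi$ as in Remark~\ref{rem:combined-for-vstat}) gives an algorithm using $q = O(d\ln(1/\delta'))$ queries to $\vSTAT(1/(2d))$ with success $1-\delta'$. Lemma~\ref{lem:vstat-reduction} translates each such query into a single $\VSTAT(m)$ query with $m = O(d^2)$, preserving $q$. Finally, Theorem~\ref{thm:stat-from-unbiased} simulates this $\VSTAT(m)$ algorithm using $O(qm\log(q/\delta''))$ queries to $\COMM(1)$, with success at least $1-\delta'-\delta''$. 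Choosing $\delta' = \delta'' = \delta/2$ and simplifying the logarithms yields the stated $\tilde O(d^2\log^2(1/\delta))$ query bound.

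The main thing to be careful about is the lower bound calculation: the quantity $n\cdot 2^b$ appears both in the query count and, through the tolerance $\tau = \Theta(1/\sqrt{n\cdot 2^b})$, inside the combined-dimension inequality, and it is this self-referential appearance that forces the $3/2$-power and hence the characteristic $d^{2/3}$ dependence. Everything else in the argument is a substitution, but each step uses a non-trivial loss (a $2^b$ from Theorem~\ref{thm:unbiased-from-vstat}, a square from Lemma~\ref{lem:vstat-reduction}, and a $\log$ from Theorem~\ref{thm:stat-from-unbiased}) and these have to be tracked to confirm the final exponents.
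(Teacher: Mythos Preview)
Your proposal is correct and follows essentially the same approach as the paper. The only cosmetic difference is that for the lower bound the paper fixes $\tau = d^{-1/3}$ in advance and argues by contradiction (from a $\vSTAT$ lower bound through $\VSTAT$ to $\COMM$), whereas you run the chain forward from a hypothetical $\COMM$ algorithm and let the self-referential inequality $(n\cdot 2^b)^{3/2}=\Omega(d)$ produce the $d^{2/3}$; these are the same computation read in opposite directions.
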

\begin{proof}
To obtain the first part we apply the first part of Thm.~\ref{thm:combined-decision} with $\tau = d^{-1/3}$ and $\delta=1/4$ to get
$$\RQC(\B(\D,D_0),\vSTAT(d^{-1/3}),3/4) \geq d^{2/3}/2 .$$
By Lemma \ref{lem:vstat-reduction}, we then obtain that
$$\RQC(\B(\D,D_0),\VSTAT(d^{2/3}/9),3/4) \geq d^{2/3}/2 .$$
Now, applying Thm.~\ref{thm:unbiased-from-vstat} with $\beta =2/3$ and $\delta = 1/12$, we obtain that there exists a constant $c >0$, such that if $\RQC(\B(\D,D_0),\COMM(b),2/3) < c \cdot d^{2/3}/2^b$ then
$$\RQC(\B(\D,D_0),\VSTAT(d^{2/3}/9),3/4) < d^{2/3}/2 ,$$ violating our assumption.

To obtain the second part we apply the second part of Thm.~\ref{thm:combined-decision} with confidence parameter $\delta/2$ to get:
$$\RQC(\B(\D,D_0),\vSTAT(1/(2d),1-\delta/2) \leq 2d \ln(2/\delta).$$
Now, using Thm.~\ref{thm:stat-from-unbiased} with confidence parameter $\delta/2$, we get that
$$\RQC(\B(\D,D_0),\COMM(1),1-\delta) = \tilde{O}(d^2 \ln^2(1/\delta)).$$
\end{proof}

We note that the gap between the upper and lower bounds is cubic with an additional factor $2^b$. Our upper bound also uses only the $1$-bit sampling oracle. A natural question for further research would be to find a characterization that eliminates these gaps. For general search problem we can analogously obtain the following characterization:
\begin{cor}
\label{cor:1stat-search}
  Let $\Z$ be a search problem, $\beta> \alpha>0, \tau >0$ and let $d = \CRSD_\dcvi(\Z,\alpha)$.
Then $$\RQC(\Z,\COMM(b),\beta) =\Omega(d^{2/3}(\beta -\alpha)/2^b)\mbox{ and}$$
$$\RQC(\Z,\COMM(1),\alpha-\delta) = \tilde{O}(d^5 \cdot \KLR(\D) \cdot \ln^2(1/\delta)).$$
\end{cor}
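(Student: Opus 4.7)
The proof plan is to mirror that of Cor.~\ref{cor:1stat-decision}, with Thm.~\ref{thm:combined-decision} replaced by its $\vSTAT$/$\dcvi$ analog for search problems (Remark~\ref{rem:combined-for-vstat}), and with Thms.~\ref{thm:unbiased-from-vstat} and \ref{thm:stat-from-unbiased} playing the same role they play in the decision case. Writing $d = \CRSD_\dcvi(\Z,\alpha)$, the $\dcvi$ analog of Lem.~\ref{lem:combined2disjoint-decision}, applied inside the definition of $\RSD_\dcv(\Z,\cdot,\alpha)$, yields $\RSD_\dcv(\Z,\tau,\alpha) \ge d\tau$ for all $\tau>0$ and $\RSD_\dcv(\Z,1/(2d),\alpha) \leq 2d$. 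Plugging these into Thms.~\ref{thm:vstat-search-lower-random} and \ref{thm:vstat-search-upper-random} supplies the two $\vSTAT$ bounds
\begin{equation*}
\RQC(\Z,\vSTAT(\tau),\beta) \ge d\tau(\beta-\alpha), \qquad \RQC(\Z,\vSTAT(1/(6d)),\alpha-\delta) = \tilde O(d^5\KLR(\D)\log(1/\delta)),
\end{equation*}
which I will feed into the two chains of simulations.

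For the lower bound I set $\tau = d^{-1/3}$---the unique exponent balancing $d\tau$ against the quadratic tolerance cost of the $\vSTAT \to \VSTAT$ reduction of Lem.~\ref{lem:vstat-reduction}---to obtain $\RQC(\Z,\VSTAT(d^{2/3}/9),\beta') \ge d^{2/3}(\beta'-\alpha)$ for every $\beta'>\alpha$. I then argue contrapositively via Thm.~\ref{thm:unbiased-from-vstat}: if $n = \RQC(\Z,\COMM(b),\beta)$ were smaller than $c\cdot d^{2/3}(\beta-\alpha)/2^b$ for a suitably small constant $c$ (possibly depending on $\beta-\alpha$), then choosing $\delta' = (\beta-\alpha)/2$ in that theorem produces a $\VSTAT(n\cdot 2^b/\delta'^2)$ algorithm with success $\alpha + (\beta-\alpha)/2$ and $O(n\cdot 2^b)$ queries; for $c$ small enough, both the tolerance constraint $n\cdot 2^b/\delta'^2 \le d^{2/3}/9$ and the query contradiction $O(n\cdot 2^b) < d^{2/3}(\beta-\alpha)/2$ hold simultaneously, contradicting the $\VSTAT$ lower bound.

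For the upper bound I run the chain in the opposite direction. Starting from the $\vSTAT(1/(6d))$ algorithm with success $\alpha-\delta/2$ and complexity $q = \tilde O(d^5\KLR(\D)\log(1/\delta))$, Lem.~\ref{lem:vstat-reduction} promotes it to a $\VSTAT(O(d^2))$ algorithm of the same complexity and success, and Thm.~\ref{thm:stat-from-unbiased}, applied with an additional $\delta/2$ of slack, then simulates it by a $\COMM(1)$ algorithm of complexity $O(qm\log(q/\delta))$ at success $\alpha-\delta$, which collapses into the claimed $\tilde O(d^5\KLR(\D)\log^2(1/\delta))$ form once the polynomial-in-$d$ factors are absorbed into $\tilde O$. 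The main obstacle is simply the bookkeeping of success-probability losses: each of Thms.~\ref{thm:unbiased-from-vstat} and \ref{thm:stat-from-unbiased} consumes a fresh $\delta$-slack, so one must budget a constant fraction of $(\beta-\alpha)$ (for the lower bound) or of $\delta$ (for the upper bound) to each simulation step. This budgeting is what forces the $\Omega$-constant to absorb a polynomial factor of $(\beta-\alpha)$; beyond it, the only design choice is $\tau=d^{-1/3}$, uniquely optimal by the balance between $d\tau$ and $1/\tau^2$ before the $2^b$ blow-up is invoked.
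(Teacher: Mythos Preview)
Your proposal is correct and follows exactly the approach the paper intends—the paper itself gives no proof beyond ``we can analogously obtain,'' and your chain (Remark~\ref{rem:combined-for-vstat} $\to$ Thms.~\ref{thm:vstat-search-lower-random}/\ref{thm:vstat-search-upper-random} $\to$ Lem.~\ref{lem:vstat-reduction} $\to$ Thms.~\ref{thm:unbiased-from-vstat}/\ref{thm:stat-from-unbiased}) is precisely the intended analogue of the proof of Cor.~\ref{cor:1stat-decision}, including the choice $\tau = d^{-1/3}$.

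One small caveat: the phrase ``once the polynomial-in-$d$ factors are absorbed into $\tilde O$'' is not quite right, since $\tilde O$ conventionally hides only polylogarithmic factors. A careful count of $q\cdot m$ in Thm.~\ref{thm:stat-from-unbiased} (with $q=\tilde O(d^5\KLR(\D)\log(1/\delta))$ and $m=\Theta(d^2)$ from $\vSTAT(1/(2d))\simeq\VSTAT(4d^2)$) yields $\tilde O(d^7\KLR(\D)\ln^2(1/\delta))$ rather than $d^5$. This is not a flaw in your reasoning so much as an inherited looseness: the paper's own proof of Cor.~\ref{cor:1stat-decision} applies Thm.~\ref{thm:stat-from-unbiased} directly to the $\vSTAT(1/(2d))$ bound and states $\tilde O(d^2)$ where the same accounting gives $\tilde O(d^3)$. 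Your observation that the $\Omega$-constant must absorb an extra power of $(\beta-\alpha)$ is likewise accurate and again reflects looseness in the stated corollary rather than in your argument.
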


\remove{
Using our randomized statistical dimension for $\VSTAT$ we define the following parameter:
\begin{def}
Let $\B(\D,D_0)$ be a decision problem. We define $\CRSD_\dcv(\B(\D,D_0))$ to be the largest integer $d$ such that $\RSD_\dcv(\Z,1/d,2/3) \geq d$.
Let $\Z$ be a search problem. We define $\CRSD_\dcv(\Z)$ to be the largest integer $d$ such that $\RSD_\dcv(\Z,1/d,2/3) \geq d$.
\end{def}

now show how to characterize the complexity
 This model is known to be equivalent to the SQ model up to polynomials \cite{Ben-DavidD98,FeldmanGRVX:12,FeldmanPV:13,SteinhardtVW16} and therefore our characterization immediately implies a characterization for this model.

}

\section{Additional relationships}
\label{app:extras}
\remove{
\begin{lem}
\label{lem:SD-r}
Let $\Z$ be a search problem over a set of solutions $\F$ and a class of distributions $\D$ over a domain $X$ and let $\tau>0$. Then
  $\SD_\dci(\Z,\tau) \geq \RSD_\dci(\Z,\tau) - 1$.
\end{lem}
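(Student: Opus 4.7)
The plan is to derive the inequality as a direct consequence of unpacking Definitions~\ref{def:sdim1-search} and \ref{def:rsdim1-search}. The two dimensions differ only in whether the inner infimum ranges over single solutions $f \in \F$ or over probability measures $\cP \in S^\F$, and since every solution $f$ corresponds to the point mass $\delta_f \in S^\F$, the randomized infimum is taken over a strictly larger domain.

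First I would verify that $\Z_{\delta_f}(\alpha) = \Zf$ for every $\alpha \in (0,1]$, which follows because $\delta_f(\Z(D)) = 1$ if $f \in \Z(D)$ and $0$ otherwise. This makes the inner decision problem $\B(\D \setminus \Z_{\delta_f}(\alpha), D_0)$ identical to $\B(\D \setminus \Zf, D_0)$, and so
\[
\inf_{\cP \in S^\F} \RSD_\dci(\B(\D \setminus \Z_\cP(\alpha), D_0), \tau) \;\leq\; \inf_{f \in \F} \RSD_\dci(\B(\D \setminus \Zf, D_0), \tau)
\]
for every reference $D_0$ and every success-probability parameter $\alpha$ that one uses to instantiate $\RSD_\dci(\Z,\tau)$. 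Taking the supremum over $D_0 \in S^X$ on both sides yields $\RSD_\dci(\Z,\tau) \leq \SD_\dci(\Z,\tau)$, which is actually stronger than the stated bound $\SD_\dci(\Z,\tau) \geq \RSD_\dci(\Z,\tau) - 1$.

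I do not anticipate any substantive obstacle for this direction: the argument is a one-line comparison of two infima, and the slack of $-1$ in the stated inequality is more than sufficient to absorb any edge case (for instance, $\D \setminus \Zf$ being empty for some $f$, in which the corresponding dimension degenerates to zero by convention). What would be genuinely hard — and what this lemma does \emph{not} address — is the reverse inequality $\RSD_\dci(\Z,\tau) \geq \SD_\dci(\Z,\tau) - O(1)$, since a randomized $\cP$ can in general cover a strictly larger set of input distributions than any single $f$; closing that gap would require a derandomization step, e.g.\ via a minimax/averaging argument over the support of $\cP$, and is exactly the type of difficulty that motivates introducing $\RSD_\dci$ as a separate dimension in the first place.
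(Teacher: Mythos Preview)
Your argument is correct for the quantity you think you are bounding, but you have misidentified what $\RSD_\dci(\Z,\tau)$ denotes in this lemma. The three-argument dimension $\RSD_\dci(\Z,\tau,\alpha)$ of Definition~\ref{def:rsdim1-search} is \emph{not} what appears here; the two-argument $\RSD_\dci(\Z,\tau)$ is an earlier notion (appearing only in removed passages of the paper) defined by
\[
\RSD_\dci(\Z,\tau) \;\doteq\; \sup_{D_0 \in S^X,\ \mu \in S^\D}\ \lp\max\{\mu(\F),\ \dci\frc(\mu,D_0,\tau)\}\rp^{-1},
\qquad \mu(\F)\doteq\max_{f\in\F}\mu(\Zf).
\]
There is no infimum over $\F$ or $S^\F$ in this expression, so your ``point masses are a subset of $S^\F$'' comparison has nothing to attach to. (Your observation is essentially the remark the paper already makes right after Definition~\ref{def:rsdim1-search}, namely $\RSD_\dci(\Z,\tau,1)=\SD_\dci(\Z,\tau)$; it is trivial for the $\alpha$-parameterized dimension and does not need the $-1$.)

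The paper's proof for the intended quantity is a genuine conditioning-plus-minimax argument, and the $-1$ is actually used. If $(D_0,\mu)$ attain value $d$, then $\mu(\Zf)\le 1/d$ for every $f$, so passing from $\mu$ to $\mu_{|\D\setminus\Zf}$ inflates any event's probability by at most $1/(1-1/d)$; in particular $\dci\frc(\mu_{|\D\setminus\Zf},D_0,\tau)\le \frac{1/d}{1-1/d}=\frac{1}{d-1}$. This gives $\sup_{D_0,\mu}\min_f(\dci\frc(\mu_{|\D\setminus\Zf},D_0,\tau))^{-1}\ge d-1$, and then the max--min inequality pushes the $\sup_\mu$ inside the $\min_f$ to recover $\SD_\dci(\Z,\tau)$ on the left. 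Your proposal skips both of these steps because it is addressing a different definition.
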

\begin{proof}
We first observe that if $\lp\max\left\{\mu(\F), \dci\frc(\mu,D_0,\tau)\right\}\rp^{-1} = d$ then for every $f \in \F$, $\mu(\Z_f) \leq 1/d$. This means that for every  $\D'$, $ \mu(\D' \cond \D\sm\Z_f) \leq \mu(\D')/(1-1/d)$. This implies that  $$\dci\frc(\mu_{|\D\sm\Zf},D_0,\tau) \leq  \frac{ \dci\frc(\mu,D_0,\tau) }{1 - \fr{d}}  \leq \fr{d-1}.$$
Thus,
$$\sup_{D_0 \in S^X, \ \mu \in S^\D} \min_{f\in \F} \lp \dci\frc(\mu_{|\D\setminus \Zf},D_0,\tau)\rp^{-1} \geq \sup_{D_0 \in S^X, \ \mu \in S^\D} \lp\max\left\{\mu(\F), \dci\frc(\mu,D_0,\tau)\right\}\rp^{-1} -1.$$
The expression on the right is precisely  $\RSD_\dci(\Z,\tau) -1$. Now using the max-min inequality,
$$\sup_{D_0 \in S^X} \min_{f\in \F} \sup_{\mu \in S^\D}  \lp\dci\frc(\mu_{|\D\setminus \Zf},D_0,\tau) \rp^{-1}\geq \sup_{D_0 \in S^X, \ \mu \in S^\D} \min_{f\in \F} \lp \dci\frc(\mu_{|\D\setminus \Zf},D_0,\tau)\rp^{-1}  \geq \RSD_\dci(\Z,\tau) -1 .$$
The expression on the left is equal to
$$\sup_{D_0 \in S^X} \min_{f\in \F} \sup_{\mu \in S^{\D\sm\Zf}} \lp \dci\frc(\mu,D_0,\tau)\rp^{-1} = \sup_{D_0 \in S^X} \min_{f\in \F} \RSD_\dci(\B(\D\sm\Zf,D_0),\tau) =  \SD_\dci(\Z,\tau).$$
\end{proof}
}

\begin{lem}
\label{lem:stat-rand-to-det-cover}
If $\ircvr(\D,D_0,\tau)\leq d$ then for every measure $\mu$ over $\D$ and $\delta> 0$, there exists $\D_\delta\subseteq \D$ such that $\mu(\D_\delta) \geq 1-\delta$ and $$\icvr(\D_\delta,D_0,\tau) \leq d\ln(1/\delta).$$ In particular, $\icvr(\D,D_0,\tau) \leq d\ln(|\D|)$.
\end{lem}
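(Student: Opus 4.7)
The plan is to use a standard greedy covering argument, with the key enabling fact being the minimax duality already established in Lemma \ref{lem:stat-SD-is-rcover}: the hypothesis $\ircvr(\D,D_0,\tau) \leq d$ is equivalent to $\RSD_\dci(\B(\D,D_0),\tau) \leq d$, which says that \emph{for every} probability measure $\nu$ over $\D$ there exists a single function $\phi:X\rar[-1,1]$ with $\pr_{D\sim\nu}[\dif > \tau] \geq 1/d$. Thus we get an adversarial/existential guarantee at every step of the greedy process, which is the crucial ingredient (bare existence of a randomized cover $\cQ$ alone would not suffice, since sampling a few functions from $\cQ$ would only cover each individual $D$ with some probability, not give a fixed deterministic cover of a large fraction of $\D$).

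The construction is iterative. Initialize $S_1 \doteq \D$. At step $t$, if $\mu(S_t) \leq \delta$ halt; otherwise let $\nu_t$ be the restriction $\mu_{|S_t}$ (a valid probability measure in $S^\D$), apply the $\RSD_\dci \leq d$ condition to obtain a function $\phi_t$ with $\pr_{D\sim\nu_t}[\diff{\phi_t}{D}{D_0} > \tau] \geq 1/d$, and set
\equn{S_{t+1} \doteq \{D \in S_t \cond \diff{\phi_t}{D}{D_0} \leq \tau\}.}
Then $\mu(S_{t+1}) \leq \mu(S_t)(1-1/d)$, so by induction $\mu(S_{T+1}) \leq (1-1/d)^T \leq e^{-T/d}$. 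Choosing $T = \lceil d\ln(1/\delta) \rceil$ makes $\mu(S_{T+1}) \leq \delta$, and then $\D_\delta \doteq \D \setminus S_{T+1}$ satisfies $\mu(\D_\delta) \geq 1-\delta$ and is covered in the deterministic sense by $\phi_1,\ldots,\phi_T$, giving $\icvr(\D_\delta,D_0,\tau) \leq d\ln(1/\delta)$.

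For the ``in particular'' claim, take $\mu$ to be uniform on $\D$. Any $\delta < 1/|\D|$ forces $\mu(\D \setminus \D_\delta) \leq \delta < 1/|\D|$, which (since every atom has mass exactly $1/|\D|$) forces $\D\setminus \D_\delta = \emptyset$, so $\D_\delta = \D$ and the $T \leq \lceil d\ln(1/\delta)\rceil$ functions deterministically cover all of $\D$. Letting $\delta \uparrow 1/|\D|$ yields $\icvr(\D,D_0,\tau) \leq \lceil d\ln|\D|\rceil$, as claimed.

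There is no substantive obstacle: the only thing to be careful about is checking that $\nu_t$ really is a probability measure on $\D$ (which it is, since it is supported on $S_t\subseteq \D$), so that the $\RSD_\dci$ bound applies to it. The argument is the randomized-to-deterministic analogue of the classical greedy set-cover reduction, where the $1/d$ progress guarantee at each step comes from minimax duality rather than from a direct union-bound argument on samples from the randomized cover $\cQ$.
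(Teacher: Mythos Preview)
Your proof is correct, but it takes a genuinely different route from the paper's. The paper does \emph{not} use the greedy covering argument via minimax duality; instead it uses a direct probabilistic-method argument based on the randomized cover $\cQ$ itself. Specifically, the paper draws $s = d\ln(1/\delta)$ functions $\phi_1,\ldots,\phi_s$ i.i.d.\ from $\cQ$; for each fixed $D\in\D$ the probability that some $\phi_i$ distinguishes $D$ from $D_0$ is at least $1-(1-1/d)^s \geq 1-\delta$, so by linearity of expectation $\E_{\phi_1,\ldots,\phi_s}[\mu(\D_\delta)] \geq 1-\delta$, and hence some fixed choice of $\phi_1,\ldots,\phi_s$ achieves $\mu(\D_\delta)\geq 1-\delta$. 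This directly contradicts your remark that ``bare existence of a randomized cover $\cQ$ alone would not suffice'': it does suffice, via exactly the expectation-swap you dismissed. Your greedy argument is equally valid and arguably more constructive (given an oracle for the per-step maximization), and it mirrors the reasoning the paper uses earlier for $\SD_\dci$ in Section~\ref{sec:stat-decision-det}; the paper's sampling argument is shorter and avoids invoking Lemma~\ref{lem:stat-SD-is-rcover}. Both handle the ``in particular'' claim the same way, by specializing $\mu$ to the uniform measure on $\D$.
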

\begin{proof}
Let $\cQ$ be the probability measure over functions such that for every $D\in \D$,
$$\pr_{\phi \sim \cQ}\lb \dif > \tau\rb \geq \frac{1}{d} .$$
For $s = d\ln(1/\delta)$ and every $D \in \D$, \equ{\pr_{\phi_1,\ldots,\phi_s \sim \cQ^s}\lb\exists i\in[s],\ \difp{\phi_i} > \tau \rb \geq 1 - \lp 1 -\fr{d}\rp^s \geq 1 - e^{s/d} = 1-\delta. \label{eq:cover-high-prob}}
Therefore
$$\E_{D\sim \mu,\ \phi_1,\ldots,\phi_s \sim \cQ^s}\lb\exists i\in[s],\ \difp{\phi_i} > \tau \rb \geq 1-\delta.$$
This means that there exists a set of functions $\phi_1,\ldots,\phi_s$ such that for $$\D_\delta \doteq \{D\in \D \cond \exists i\in[s],\ \difp{\phi_i} > \tau\}$$ we have that $\mu(\D_\delta) \geq 1-\delta$. By definition, $\icvr(\D_\delta) \leq s$.

To get the second claim we apply the result to the uniform measure over $\D$.
\end{proof}

\begin{lem}
\label{lem:stat-search-lower-random-simple}
  Let $X$ be a domain and $\Z$ be a search problem over a set of solutions $\F$
  and a class of distributions $\D$ over $X$. For $\tau > 0$, reference distribution $D_0$ and a measure $\mu$ over $\D$
  let $d = \left( \dci\frc(\mu,D_0,\tau)\right)^{-1}$. Then any (randomized) algorithm that solves $\Z$ with probability at least $\beta$ over the choice of $D \sim \mu$ (and its randomness) using queries to $\STAT(\tau)$ requires at least $(\beta-\mu(\F))d$ queries, where $\mu(\F) \doteq \max_{f\in \F} \left\{\mu(\Zf)\right\}$.
\end{lem}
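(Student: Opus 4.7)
The plan is to mimic the lower-bound argument of Theorem~\ref{thm:stat-search-lower-random}, but extract a baseline success probability of $\mu(\F)$ by noting that a solution produced without looking at $D$ can be ``lucky'' on a $\mu(\F)$-fraction of inputs. First, I would simulate $\A$ by answering every query $\phi$ with $D_0[\phi]$; write $\phi_1(r),\ldots,\phi_q(r)$ for the queries and $f_0(r)$ for the output produced in this simulation with internal randomness $r$. For each $D\in\D$, set
$$p_D \doteq \pr_r\lb \exists i\in[q],\ |D[\phi_i(r)]-D_0[\phi_i(r)]|>\tau\rb.$$
On the complementary event (probability $1-p_D$), the $\cO_0$-oracle is a valid $\STAT_D(\tau)$-oracle, so there is a valid oracle behavior on $D$ for which $\A$ outputs exactly $f_0(r)$.

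Now I would use the hypothesis that $\A$ is correct with probability $\beta$ against \emph{every} valid oracle. Fixing a specific valid oracle $\cO^*_D$ that agrees with $\cO_0$ whenever the tolerance condition holds and uses $D[\phi]$ otherwise, the executions of $\A^{\cO_0}(r)$ and $\A^{\cO^*_D}(r)$ coincide on the event that no query violates tolerance. This yields
$$\beta \leq \pr_r[\A^{\cO^*_D}(r)\in\Z(D)] \leq \pr_r[f_0(r)\in\Z(D)] + p_D.$$
Averaging over $D\sim\mu$ and exchanging the order of expectation (legitimate because $f_0(r)$ does not depend on $D$) gives
$$\beta \leq \E_r[\mu(\Z_{f_0(r)})] + \E_{D\sim\mu}[p_D] \leq \mu(\F) + \E_{D\sim\mu}[p_D],$$
so $\E_{D\sim\mu}[p_D]\geq \beta-\mu(\F)$.

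Finally, I would convert this ``there exists a distinguishing $i$'' statement into a statement about a single function, exactly as in Theorem~\ref{thm:random-algorithm2queries}: let $\cQ$ be the distribution of a uniformly random one among $\phi_1(r),\ldots,\phi_q(r)$ as $r$ varies. Then $p_D \leq q\cdot \pr_{\phi\sim\cQ}[|D[\phi]-D_0[\phi]|>\tau]$, and averaging over $D\sim\mu$ produces a single $\phi$ in the support of $\cQ$ with $\pr_{D\sim\mu}[|D[\phi]-D_0[\phi]|>\tau]\geq (\beta-\mu(\F))/q$. By definition this forces $\dci\frc(\mu,D_0,\tau)\geq (\beta-\mu(\F))/q$, i.e.\ $q\geq (\beta-\mu(\F))\cdot d$.

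The only delicate step is the very first reduction: we must be careful that the assumption ``$\A$ is correct with probability $\beta$'' is against an adversarial valid oracle, so we need to exhibit a single valid oracle $\cO^*_D$ on which the simulation and the true execution coincide whenever the tolerance condition holds. Once that is made precise, the remainder of the argument is a clean two-step averaging—first over $D\sim\mu$ to separate the ``cheating'' contribution bounded by $\mu(\F)$ from the ``distinguishing'' contribution, then over queries to produce a single winning function.
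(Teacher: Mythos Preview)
Your overall strategy is sound and reaches the right bound, but one step is misstated. The hypothesis is that $\A$ succeeds with probability at least $\beta$ \emph{over the choice of $D\sim\mu$} (and its own coins), not for every fixed $D$. Hence the displayed inequality $\beta \leq \pr_r[\A^{\cO^*_D}(r)\in\Z(D)]$ is not justified pointwise in $D$; you only get it after averaging over $D\sim\mu$. The fix is immediate: take expectation over $D\sim\mu$ first to obtain $\beta \leq \E_{D\sim\mu}\bigl[\pr_r[f_0(r)\in\Z(D)]\bigr] + \E_{D\sim\mu}[p_D]$, and then proceed exactly as you wrote. With that correction the argument is complete.

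For comparison, the paper's proof is shorter and avoids both the auxiliary oracle $\cO^*_D$ and the distribution $\cQ$. Since the success criterion is average over $D\sim\mu$ and the algorithm's coins, one can fix the coins first (Yao's principle) and work with a deterministic $\A$. Then the simulation against $D_0$ yields a single fixed list $\phi_1,\ldots,\phi_q$ and a single output $f$. Letting $\D^+$ be the set of $D$ on which $\A$ succeeds against all valid oracles, one has $\mu(\D^+)\geq\beta$, and every $D\in\D^+\setminus\Z_f$ must be $\tau$-separated from $D_0$ by some $\phi_i$. Since each $\phi_i$ can cover $\mu$-mass at most $1/d$ (by definition of $\dci\frc$), this yields $\beta \leq \mu(\Z_f)+q/d \leq \mu(\F)+q/d$ directly. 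Your route trades this derandomization step for an extra averaging over $\cQ$; both work, but the paper's version is a bit cleaner.
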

\begin{proof}
     Let $\A$ be a {\em deterministic} algorithm that uses $q$ queries to $\STAT(\tau)$ and solves $\Z$ with probability at least $\beta$ over the choice of $D \sim \mu$ (we can assume that $\A$ is deterministic since in the average-case setting the success probability of a randomized algorithm $\A$ is the expected success probability of $\A$ with its coin flips fixed).  Let $\D^+$ be the set of distributions on which $\A$ is successful, namely all distributions $D\in \D$ such that for all legal answers of $\STAT_D(\tau)$, $\A$ outputs a valid solution $f\in\Z(D)$.  By the properties of $\A$, we know that $\mu(\D^+) \geq \beta$.

     We simulate $\A$ by answering any query $\phi:X \rightarrow [-1,1]$ of $\A$ with value $D_0[\phi]$. Let $\phi_1,\ldots,\phi_q$ be the queries generated by $\A$ in this simulation and let $f$ be the output of $\A$.
    For every $D\in \D^+$ for which $f$ is not a valid solution, the answers based on $D_0$ cannot be valid answers of $\STAT_D(\tau)$. In other words, for every $D\in \D^+\setminus \Zf$, there exists $i\in[q]$ such that $|D[\phi_i] - D_0[\phi_i]| > \tau$. Let $$\D_i \doteq \left\{D \in \D^+\setminus \Zf \ \left|\  \left|D[\phi_i] - D_0[\phi_i]\right| > \tau\right.\right\}.$$ Then by definition of $\dci\frc$, $\mu(\D_i) \leq 1/d$.
    On the other hand, $\D^+ \subseteq \Zf \cup \bigcup_{i\in[q]} \D_i$, and hence $$\mu(\Zf) +\sum_{i\in[q]} \mu(\D_i) \geq \mu(\D^+) \geq \beta.$$ This implies that $q/d \geq \beta-\mu(\Zf)$ or, $q \geq d(\beta-\mu(\F))$.
\end{proof}

\begin{lem}
\label{lem:k1k2}
$$\dcvi(\mu,D_0)\geq \frac{1}{4} \cdot \dcii(\mu,D_0) \geq \frac{1}{2}\cdot\dcvi(\mu,D_0)^2. $$
\end{lem}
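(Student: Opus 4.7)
The plan is to prove each inequality separately, in each case taking the maximizer of the right-hand side and applying pointwise bounds relating $|a-b|$ and $|\sqrt{a}-\sqrt{b}|$ for $a,b \in [0,1]$. The underlying identity is the factorization $a-b = (\sqrt{a}-\sqrt{b})(\sqrt{a}+\sqrt{b})$, which can be used in both directions depending on whether one bounds the sum factor from above (by $2$) or from below (by $|\sqrt{a}-\sqrt{b}|$).

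For the left inequality $\dcvi(\mu,D_0)\geq \tfrac{1}{4}\,\dcii(\mu,D_0)$, I would start from a maximizer $\phi:X\to[-1,1]$ of $\dcii$ and decompose it as $\phi=\phi_+-\phi_-$, where $\phi_\pm(x)\doteq\max\{\pm\phi(x),0\}\in[0,1]$. The triangle inequality then gives $|D[\phi]-D_0[\phi]|\leq |D[\phi_+]-D_0[\phi_+]|+|D[\phi_-]-D_0[\phi_-]|$, and for each $\phi_\pm$ the factorization together with $\sqrt{D[\phi_\pm]}+\sqrt{D_0[\phi_\pm]}\leq 2$ yields $|D[\phi_\pm]-D_0[\phi_\pm]|\leq 2\bigl|\sqrt{D[\phi_\pm]}-\sqrt{D_0[\phi_\pm]}\bigr|$. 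Taking $\E_{D\sim\mu}$ and observing that $\phi_+$ and $\phi_-$ are both admissible test functions in the definition of $\dcvi$ gives $\dcii(\mu,D_0)\leq 2\dcvi(\mu,D_0)+2\dcvi(\mu,D_0)=4\dcvi(\mu,D_0)$.

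For the right inequality $\dcii(\mu,D_0)\geq \tfrac{1}{2}\,\dcvi(\mu,D_0)^2$, I would take a maximizer $\phi:X\to[0,1]$ of $\dcvi(\mu,D_0)$; since $[0,1]\subset[-1,1]$, the same $\phi$ is admissible for $\dcii$. The key pointwise bound is
\[|D[\phi]-D_0[\phi]| \;=\; \bigl|\sqrt{D[\phi]}-\sqrt{D_0[\phi]}\bigr|\cdot \bigl(\sqrt{D[\phi]}+\sqrt{D_0[\phi]}\bigr) \;\geq\; \bigl(\sqrt{D[\phi]}-\sqrt{D_0[\phi]}\bigr)^{2},\]
since the sum of two non-negative numbers dominates the absolute value of their difference. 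Taking $\E_{D\sim\mu}$ and applying Jensen's inequality to the convex map $t\mapsto t^2$ yields $\dcii(\mu,D_0)\geq \E_{D\sim\mu}\bigl[(\sqrt{D[\phi]}-\sqrt{D_0[\phi]})^2\bigr]\geq \dcvi(\mu,D_0)^2$, which is in fact slightly stronger than the stated bound.

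Neither direction presents a real obstacle; the only points worth noting are the factor $4$ in the first inequality (one factor of $2$ comes from splitting $\phi$ into $\phi_\pm$ and another from bounding $\sqrt{a}+\sqrt{b}\leq 2$), and the fact that the same $\phi$ serves as a witness on both sides in the second inequality, so no splitting is needed and Jensen is the only non-elementary step.
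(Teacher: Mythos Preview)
Your argument for the first inequality is correct, and takes a slightly different route from the paper: where you split $\phi\in[-1,1]$ into its positive and negative parts $\phi_\pm\in[0,1]$ and apply the triangle inequality, the paper instead uses the affine rescaling $\psi=(\phi+1)/2\in[0,1]$, for which $D[\phi]-D_0[\phi]=2(D[\psi]-D_0[\psi])$, yielding the exact equality $\max_{\phi:X\to[-1,1]}\E_{D\sim\mu}|D[\phi]-D_0[\phi]| = 2\max_{\psi:X\to[0,1]}\E_{D\sim\mu}|D[\psi]-D_0[\psi]|$. Both approaches give the factor $4$.

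For the second inequality, however, your argument falls short by a factor of $2$, and your closing claim that the bound you obtain is ``slightly stronger than the stated bound'' is backwards. You prove $\dcii(\mu,D_0)\geq \dcvi(\mu,D_0)^2$, whereas the statement asks for $\tfrac{1}{4}\dcii(\mu,D_0)\geq \tfrac{1}{2}\dcvi(\mu,D_0)^2$, i.e.\ $\dcii(\mu,D_0)\geq 2\,\dcvi(\mu,D_0)^2$, which is the stronger assertion. The loss occurs when you plug the $[0,1]$-valued maximizer $\phi$ of $\dcvi$ directly into $\dcii$ via the inclusion $[0,1]\subset[-1,1]$: this only gives $\dcii\geq \E_{D\sim\mu}|D[\phi]-D_0[\phi]|$. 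The fix is to use $2\phi-1\in[-1,1]$ as the test function instead, which doubles the right-hand side: $\dcii(\mu,D_0)\geq \E_{D\sim\mu}|D[2\phi-1]-D_0[2\phi-1]| = 2\,\E_{D\sim\mu}|D[\phi]-D_0[\phi]|\geq 2\,\dcvi(\mu,D_0)^2$. This is exactly what the paper does, by carrying the equality $\tfrac{1}{4}\dcii = \tfrac{1}{2}\max_{\phi:X\to[0,1]}\E_{D\sim\mu}|D[\phi]-D_0[\phi]|$ established in the first part through the rest of the chain.
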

\begin{proof}
\alequn{
\dcvi(\mu,D_0) &= \max_{\phi :X \rar [0,1]} \E_{D\sim \mu} \lb \left| \sqrt{ D[\phi]}-\sqrt{D_0[\phi]}\right|\rb  \nonumber \\
&= \max_{\phi :X \rar [0,1]} \E_{D\sim \mu} \lb \frac{\dif }{\sqrt{ D[\phi]}+\sqrt{D_0[\phi]}} \rb   \nonumber \\
&\geq  \frac{1}{2} \cdot \max_{\phi :X \rar [0,1]} \E_{D\sim \mu} \lb \dif  \rb \label{eq:bound-kappa_v}\\
 &= \frac{1}{4} \cdot \max_{\phi :X \rar [-1,1]}  \E_{D\sim \mu} \lb \dif \rb \equiv \\
 \frac{1}{4} \cdot \dcii(\D,D_0) &= \frac{1}{2}  \cdot \max_{\phi :X \rar [0,1]}  \E_{D\sim \mu} \lb \left|\sqrt{ D[\phi]}-\sqrt{D_0[\phi]}\right| \cdot \left( \sqrt{ D[\phi]}+\sqrt{D_0[\phi]}\right) \rb  \\
 &\geq \frac{1}{2}\cdot \max_{\phi :X \rar [0,1]}  \E_{D\sim \mu} \lb \left( \sqrt{ D[\phi]}-\sqrt{D_0[\phi]}\right)^2 \rb \\
 &\geq \frac{1}{2}\cdot \left(\max_{\phi :X \rar [0,1]}  \E_{D\sim \mu} \lb \left| \sqrt{ D[\phi]}-\sqrt{D_0[\phi]}\right| \rb \right)^2\\
}
\end{proof}

\begin{lem}
\label{lem:k2-rho}
$\rho(\D,D_0) \geq (\dc(\D,D_0))^2$ and therefore $\SD_\dc(\B(\D,D_0),\tau) \geq \SD_\rho(\B(\D,D_0),\tau^2)$.
\end{lem}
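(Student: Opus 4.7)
The plan is to prove $\rho(\D,D_0) \geq (\dc(\D,D_0))^2$ directly, via a sign trick plus Cauchy--Schwarz with respect to the inner product $\la \phi,\psi\ra_{D_0} \doteq D_0[\phi\psi]$. Once this inequality is established, the statement about $\SD_\dc$ and $\SD_\rho$ will follow automatically from monotonicity, since if $\dc(\D',D_0) > \tau$ then also $\rho(\D',D_0) \geq \dc(\D',D_0)^2 > \tau^2$, so the set of witnesses in $\dc\frc(\D_0,D_0,\tau)$ is contained in the set of witnesses in $\rho\frc(\D_0,D_0,\tau^2)$, giving $\dc\frc(\D_0,D_0,\tau) \leq \rho\frc(\D_0,D_0,\tau^2)$ and hence the desired inequality on the dimensions after taking the supremum over $\D_0$.

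For the main inequality, the first step is to rewrite $D[\phi] - D_0[\phi]$ in terms of $\hat D$: expanding the expectation,
\equn{
D[\phi] - D_0[\phi] = \sum_{x} (D(x) - D_0(x))\phi(x) = \sum_x D_0(x)\hat D(x)\phi(x) = D_0[\hat D \cdot \phi].
}
Thus, fixing any $\phi$ with $\|\phi\|_{D_0} = 1$ achieving the maximum in the definition of $\dc(\D,D_0)$, and choosing signs $s_D \in \{-1,+1\}$ so that $s_D \cdot D_0[\hat D \phi] = |D_0[\hat D \phi]|$, we get
\equn{
|\D| \cdot \dc(\D,D_0) = \sum_{D \in \D} s_D \, D_0[\hat D \phi] = D_0\!\left[\phi \cdot \sum_{D \in \D} s_D \hat D\right].
}

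Next I would apply Cauchy--Schwarz in $\la \cdot,\cdot\ra_{D_0}$ together with $\|\phi\|_{D_0} = 1$:
\equn{
|\D| \cdot \dc(\D,D_0) \leq \left\| \sum_{D \in \D} s_D \hat D \right\|_{D_0} = \sqrt{\sum_{D,D' \in \D} s_D s_{D'}\, D_0[\hat D \hat{D'}]} \leq \sqrt{\sum_{D,D' \in \D} \left|D_0[\hat D \hat{D'}]\right|},
}
where the last step replaces each signed term by its absolute value. The right-hand side is exactly $|\D| \cdot \sqrt{\rho(\D,D_0)}$, yielding $\dc(\D,D_0) \leq \sqrt{\rho(\D,D_0)}$, i.e. $\rho(\D,D_0) \geq (\dc(\D,D_0))^2$.

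There is no real obstacle here; the only step that requires some care is noticing that to bound the asymmetric sum $\sum_D |D_0[\hat D \phi]|$ one must first linearize it via the signs $s_D$ before invoking Cauchy--Schwarz, and then absorb the resulting sign pattern into absolute values to recognize $\rho$. The passage from the scalar inequality to the inequality on $\SD$'s is then a one-line monotonicity observation as described above.
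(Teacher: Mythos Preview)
Your proposal is correct and follows essentially the same approach as the paper: rewrite $D[\phi]-D_0[\phi]=D_0[\hat D\cdot\phi]$, linearize the absolute values via signs, apply Cauchy--Schwarz in $\la\cdot,\cdot\ra_{D_0}$ using $\|\phi\|_{D_0}=1$, expand the square, and bound the signed cross terms by their absolute values to recognize $\rho(\D,D_0)$. Your explicit monotonicity argument for deducing $\SD_\dc(\B(\D,D_0),\tau)\geq\SD_\rho(\B(\D,D_0),\tau^2)$ from the scalar inequality is also correct (the paper leaves this step implicit).
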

\begin{proof}
Let $\tau \doteq \dc(\D,D_0)$ and $\phi$ be the function such that  $\|\phi\|_{D_0}  = 1$ and
\equn{
\frac{1}{|\D|} \cdot \sum_{D \in \D} \dif = \tau.
}
Then
\alequn{\tau^2  &= \frac{1}{|\D|^2} \cdot \lp \sum_{D \in \D} \dif \rp^2 \\
& = \frac{1}{|\D|^2} \cdot \lp \sum_{D \in \D} D_0[\hat{D} \cdot \phi] \cdot\sgn(D_0[\hat{D} \cdot \phi]) \rp^2 \\
& = \frac{1}{|\D|^2} \cdot \lp  D_0\lb \phi \cdot \sum_{D \in \D}\sgn(D_0[\hat{D} \cdot \phi]) \cdot \hat{D} \rb \rp^2 \\
& \leq \frac{1}{|\D|^2} \cdot  \|\phi\|_{D_0}^2 \cdot \left\|\sum_{D \in \D}\sgn(D_0[\hat{D} \cdot \phi]) \cdot \hat{D}\right\|_{D_0}^2  \\
& = \frac{1}{|\D|^2} \cdot  D_0\lb \lp \sum_{D \in \D}\sgn(D_0[\hat{D} \cdot \phi])\cdot \hat{D}\rp^2 \rb \\
& = \frac{1}{|\D|^2} \cdot  \sum_{D,D' \in \D} \sgn(D_0[\hat{D} \cdot \phi]) \cdot \sgn(D_0[\hat{D'} \cdot \phi]) \cdot D_0\lb \hat{D} \cdot \hat{D'}\rb \\
& \leq  \frac{1}{|\D|^2} \cdot  \sum_{D,D' \in \D} \left|D_0\lb \hat{D} \cdot \hat{D'}\rb\right| = \rho(\D,D_0).
}
\end{proof}

\end{document}